\theoremstyle{plain}
\newtheorem{theorem}{Theorem}
\newtheorem{proposition}{Proposition}[section]
\newtheorem{lemma}[proposition]{Lemma}
\newtheorem{corollary}[proposition]{Corollary}
\theoremstyle{definition}
\newtheorem{definition}[proposition]{Definition}
\newtheorem{assumption}[proposition]{Assumption}
\theoremstyle{remark}
\newtheorem{remark}[proposition]{Remark}
\date{}
\title{\DPZerons: Private Fine-Tuning of Language Models \\ without Backpropagation}
\author[1]{Liang Zhang}
\author[1]{Bingcong Li}
\author[2]{Kiran Koshy Thekumparampil}
\author[3]{Sewoong Oh}
\author[1]{Niao He}
\affil[1]{Department of Computer Science, ETH Zurich}
\affil[2]{Amazon Search}
\affil[3]{Paul G. Allen School of Computer Science and Engineering, University of Washington}
\affil[ ]{\texttt{\{liang.zhang, bingcong.li, niao.he\}@inf.ethz.ch, \protect\\ kkt@amazon.com, sewoong@cs.washington.edu}}
\begin{document}

\maketitle

\begin{abstract}
    The widespread practice of fine-tuning large language models (LLMs) on domain-specific data faces two major challenges in memory and privacy.
    First, as the size of LLMs continues to grow, the memory demands of gradient-based training methods via backpropagation become prohibitively high. Second, given the tendency of LLMs to memorize training data, it is important to protect potentially sensitive information in the fine-tuning data from being regurgitated.
    Zeroth-order methods, which rely solely on forward passes, substantially reduce memory consumption during training. However, directly combining them with standard differentially private gradient descent suffers more as model size grows.
    To bridge this gap, we introduce \DPZerons, a novel private zeroth-order algorithm with nearly dimension-independent rates.
    The memory efficiency of \DPZero is demonstrated in privately fine-tuning RoBERTa and OPT on several downstream tasks. Our code is available at \url{https://github.com/Liang137/DPZero}.
\end{abstract}

\section{Introduction}
\label{sec:intro}

Fine-tuning pretrained large language models (LLMs), such as BERT \citep{devlin2019bert, liu2019roberta, sanh2019distilbert}, OPT \citep{zhang2022opt}, LLaMA \cite{touvron2023llama,touvron2023llama2}, and GPT \citep{radford2018improving, brown2020language, ouyang2022training, openai2023gpt4}, achieves state-of-the-art performance in a wide array of downstream applications.
However, two significant challenges persist in practical adoption: memory demands for gradient-based optimizers and the need to safeguard the privacy of domain-specific fine-tuning data.

As the memory requirement of fine-tuning LLMs is increasingly becoming a bottleneck, various approaches have been proposed, spanning from parameter-efficient fine-tuning (PEFT) \citep{li2021prefix, hu2022lora} to  novel optimization algorithms \citep{shazeer2018adafactor, anil2019memory}.
Since these methods rely on  backpropagation to compute the gradients, which can be memory-intensive, a recent trend has emerged in developing algorithms that do not require backpropagation \citep{baydin2022gradients, silver2022learning, hinton2022forward, hou2023promptboosting, phang2023hypertuning, chen2023deepzero}. 
Specifically for LLMs, \citet{malladi2023fine} introduced zeroth-order methods for fine-tuning, thereby eliminating the backward pass and freeing up the memory for gradients and activations.
Utilizing a single A100 GPU (80 GiB memory), zeroth-order methods are capable of fine-tuning a 30-billion-parameter model, whereas first-order methods, even equipped with PEFT, fail to fit into the memory for a model with more than  6.7 billion parameters. This greatly expands the potential for deploying and fine-tuning LLMs even on personal devices.

On the other hand, empirical studies have highlighted the risk of LLMs inadvertently revealing sensitive information from their fine-tuning datasets \cite{mireshghallah2022memorization, zeng2023exploring, mattern2023membership, lukas2023analyzing}.
Such privacy concerns are pronounced especially when users opt to fine-tune LLMs on datasets of their own. Notably, the expectation that machine learning models should not compromise the confidentiality of their contributing entities is codified into legal frameworks \citep{voigt2017eu}.
Differential privacy (DP) \citep{dwork2006calibrating} is a widely accepted mathematical framework for ensuring privacy by preventing attackers from identifying participating entities  \citep{shokri2017membership}. Consequently, the development of methods that fine-tune LLMs under differential privacy is of pressing necessity \citep{li2022large, yu2022differentially, he2023exploring, bu2023differentially, du2023dp}; however, most efforts so far have focused on first-order algorithms.

Motivated by the memory-hungry nature and privacy concerns in fine-tuning LLMs, we investigate  zeroth-order methods that guarantee differential privacy for solving the following stochastic optimization problem:
\begin{equation}
    \min_{x\in\bR^d} F_S(x)\; := \;\frac{1}{n}\sum_{i=1}^n f(x;\xi_i)\;, \label{eq:opt}
\end{equation}
where $S=\{\xi_i\}_{i=1}^n$ is the training data, $x\in\bR^d$ is the model weight, the loss $f(x;\xi_i)$ is Lipschitz for each sample $\xi_i$, and the averaged loss $F_S(x)$ is smooth and possibly nonconvex.
In theory, previous work on both differentially private optimization \citep{bassily2014private} and zeroth-order optimization \citep{duchi2015optimal} indicated that their convergence guarantees depend explicitly on the dimension $d$. Such dimension dependence becomes problematic in the context of LLMs with $d$ scaling to billions. 
In practice, and somewhat surprisingly, empirical studies on the fine-tuning of LLMs using zeroth-order methods \citep{malladi2023fine} and DP first-order methods \citep{yu2022differentially, li2022large, li2022does} have shown that the performance degradation due to the large model size is marginal. For example, \citet{yu2022differentially} showed that the performance drop due to privacy is smaller for larger architectures. A 345 million-sized GPT-2-Medium, fine-tuned with ($\varepsilon=6.8,\delta=10^{-5}$)-DP, showcases a modest drop of 5.1 in BLEU score \citep{papineni2002bleu} (compared to a non-private model of the same size and architecture), whereas a larger GPT-2-XL with 1.5 billion parameters exhibits smaller cost in test performance, i.e., 4.3 BLEU score under the same privacy budget.

This gap between theory and practice has been linked  to the presence of low-rank structures in the fine-tuning of pretrained LLMs \citep{malladi2023fine, li2022does}. 
Empirical evidence suggests that fine-tuning occurs within a low-dimensional subspace \citep{sagun2017empirical, gur2018gradient, ghorbani2019investigation, li2018measuring}: 200 dimensions for RoBERTa with 355 million parameters \citep{aghajanyan2021intrinsic} and 100 dimensions for PEFT on  DistilRoBERTa with 7 million parameters \citep{li2022does}.
In such cases where the intrinsic dimension is small, zeroth-order methods are known to achieve dimension-independent convergence rate \cite{malladi2023fine} and private first-order methods are also known to achieve dimension-independent guarantees  \citep{ma2022dimension,li2022does}.

\vskip 0.1in
Given the significance of fine-tuning LLMs on domain-specific datasets, we ask the following fundamental question: {\em Can we achieve a dimension-independent rate both under differential privacy and with access only to the zeroth-order oracle?} Our contributions are summarized below.
\vskip 0.1in

$\bullet$ We first show that the straightforward approach --- that combines DP first-order methods with  zeroth-order gradient estimators (Algorithm \ref{algo:d-dependent}) --- exhibits an undesirable dimension dependence in the convergence guarantees, even when the effective rank of the problem does not scale with the dimension (Theorems \ref{thm:d-dependent} and \ref{thm:d-dependent-rank} in Section \ref{sec:d-dependent}). There are two root causes. First, the standard practice of choosing the clipping threshold to be the maximum norm of the estimated sample gradient leads to an unnecessarily large threshold.
Next, this choice of the clipping threshold forces the addition of a large noise to ensure privacy, and Algorithm \ref{algo:d-dependent} adds that noise in all $d$ directions.

$\bullet$ We present \DPZero (Algorithm \ref{algo:d-free}), the first nearly dimension-independent DP zeroth-order method for stochastic optimization. Its convergence guarantee depends on the effective rank of the problem (specified in Assumption \ref{asp:rank}) and exhibits logarithmic dependence on the dimension $d$ (Theorem \ref{thm:d-free} in Section \ref{sec:d-free}). This builds upon two insights. 
First, the direction of the estimated gradient is a public information and does not need to be private; it is sufficient to make only the magnitude of the estimated gradient private, which is a scalar value. Next, we introduce a tighter analysis that allows us to choose a significantly smaller clipping threshold, leveraging the fact that the typical norm of the estimated gradient is much smaller than its maximum.

$\bullet$ We verify the effectiveness of \DPZero in both synthetic examples and private fine-tuning tasks on RoBERTa \citep{liu2019roberta} and OPT \citep{zhang2022opt}. In contrast to first-order algorithms that demand extensive effort for the efficient implementation of per-sample gradient clipping \citep{li2022large, he2023exploring, bu2023differentially}, \DPZero offers the advantage of near-zero additional costs compared to non-private zeroth-order methods \citep{malladi2023fine}. Our empirical results validate theoretical findings, revealing only a slight performance decrement for \DPZero even with large model sizes.

\subsection{Related Works}

We build upon exciting advances in zeroth-order optimization and differentially private optimization, which we survey here. Notably, \DPZero is inspired by new empirical and theoretical findings showing that fine-tuning LLMs does not suffer in high-dimensions when using zeroth-order methods in \citet{malladi2023fine} or using private first-order optimization in \citet{li2022does}.
A more comprehensive overview is deferred to Appendix \ref{app:review}. 

\paragraph{Zeroth-order optimization.}
\citet{nesterov2017random} pioneered the formal analysis of the convergence rate of zeroth-order methods, i.e., zeroth-order (stochastic) gradient descent (ZO-SGD) that replaces gradients in SGD by their zeroth-order estimators.
Their findings are later refined by several works \citep{ghadimi2013stochastic, shamir2017optimal, lin2022gradient}.
These well-established results indicate a runtime complexity $\mathcal{O}(d)$ worse than first-order methods.
Such dimension dependence of zeroth-order methods is proven inevitable without additional structures \citep{wibisono2012finite, duchi2015optimal}.

There are several recent works that relax the dimension dependence in zeroth-order methods leveraging problem structures. \citet{balasubramanian2018zeroth} demonstrated that ZO-SGD can directly identify the sparsity of the problem and proved a dimension-independent rate when the support of gradients remains unchanged. \citet{yue2023zeroth} and \citet{malladi2023fine} relaxed the dependence on dimension $d$ to a quantity related to the trace of the loss's Hessian.

\paragraph{Differentially private optimization.}
Previous works on DP optimization mostly center around first-order methods.
When the problem is nonconvex, i.e., the setting of our interest, differentially private (stochastic) gradient descent (DP-GD) achieves a rate of $\cO(\sqrt{d\log(1/\delta)}/(n\eps))$ on the squared norm of the gradient \citep{wang2017differentially, zhou2020private}.
We show that \DPZero matches this rate with access only to the zeroth-order oracle in Theorem~\ref{thm:d-free}.
Given access to the first-order oracle, it has been recently shown that such rate can be improved to $\cO((\sqrt{d\log(1/\delta)}/(n\eps))^{4/3})$ leveraging momentum \citep{tran2022momentum} or variance reduction techniques \citep{arora2023faster}.

Early works established dimension-independent rates when the gradients lie in some fixed low-rank subspace \citep{jain2014near, song2021evading}.
Closest to our result is \citet{song2021evading}, which demonstrated that the rate of DP-GD for smooth nonconvex optimization can be improved  to $\cO(\sqrt{r\log(1/\delta)}/(n\eps))$ for generalized linear models (GLMs) with a rank-$r$ feature matrix. \DPZero matches this result with access only to the zeroth-order oracle in Theorem~\ref{thm:d-free} for more general problems beyond low-rank GLMs.
Our result is inspired by \citet{li2022does} that introduced a relaxed Lipschitz condition for the gradients and provided dimension-free bounds when the loss is convex and the relaxed Lipschitz parameters decay rapidly. Similarly, \citet{ma2022dimension} suggested that the  dependence  on $d$ in the utility upper bound for DP stochastic convex optimization can be improved.

Literature on DP optimization  beyond first-order methods remains less explored.
Recently, \citet{zhang2024private} studied the problem of private zeroth-order nonsmooth nonconvex optimization and achieved a rate that depends on the dimension $d$.
As far as we are aware, no prior studies have addressed the challenge of deriving a dimension-independent rate in DP zeroth-order optimization.

After the workshop version of our paper \citep{zhang2023dpzero} was released, \citet{tang2024private} concurrently discovered the same algorithm as \DPZero (up to a minor difference in how $u_t$ is drawn) and showed empirical benefits when applied to fine-tuning OPT models but without theoretical analysis.
Also building upon the workshop version of our paper, \citet{liu2024differentially} introduced DP-ZOSO, a stage-wise zeroth-order method with an additional quadratic regularizer.
With extra hyper-parameters to be tuned, DP-ZOSO demonstrates further empirical gain over \DPZerons. However, \citet{liu2024differentially} only provided {\em dimension-dependent} guarantees.

\section{Preliminaries}
\label{sec:prelim} 

\paragraph{Notation.}
We use $\norm{\cdot}$ for the Euclidean norm and define $\|v\|_W^2=v^\top Wv$ for a square matrix $W$.
$\bS^{d-1}=\{x\in\bR^d \,|\, \norm{x}=1\}$ denotes the unit sphere in $\bR^d$, and $\eta\,\bS^{d-1}$ is the sphere of radius $\eta>0$.
A function $p: \bR^d\rightarrow\bR$ is $L$-Lipschitz if $\abs{p(x_1)-p(x_2)}\leq L\norm{x_1-x_2}, \forall x_1, x_2$.
A function $q: \bR^d\rightarrow\bR$ is $\ell$-smooth if it is differentiable and $\norm{\nabla q(x_1)-\nabla q(x_2)}\leq \ell\norm{x_1-x_2}$.
The trace of a square matrix $J$ is denoted by $\tr(J)$.
A symmetric real matrix $M\succeq 0$ if it is positive semi-definite.
The clipping operation is defined to be $\clip_C(x)=x\,\min\{1, C/\norm{x}\}$ given $C>0$.
The notation $\tilde\cO(\cdot)$ hides additional logarithmic terms.

\subsection{Differential Privacy}
\label{sec:dp} 

\begin{definition}[Differential Privacy \cite{dwork2006calibrating, dwork2014algorithmic}]    
    Two datasets $S =\{\xi_i\}_{i=1}^n$ and $S'=\{\xi_i'\} _{i=1}^{n}$ are \emph{neighboring} if $\max\{| S \setminus S'|,| S' \setminus S|\} = 1$, and we denote it by $S \sim S'$. 
    For prescribed $\eps>0$ and $\delta\in(0,1)$, an algorithm $\cA$ is said to satisfy
    $(\eps,\delta)$-\emph{differential privacy} (DP) if
    $\bP(\cA(S)\in \cB) \leq e^\eps \bP(\cA(S') \in \cB) + \delta$
    for all $S\sim S'$ and all measurable set $\cB$ in the range of $\cA$.
    \label{def:dp}
\end{definition}
To ensure DP while solving the optimization problem in Eq.~\eqref{eq:opt}, first-order approaches, such as DP-GD, update via $x_{t+1}\gets x_t - \alpha ((1/n)\sum_{i=1}^n {\rm clip}_C(\nabla f(x_t;\xi_i))+z_t)$; see e.g., \citep{song2013stochastic, abadi2016deep}.
Through the following composition lemma \citep[Theorem 4.3]{kairouz2015composition}, the privacy for entire $T$ updates is secured by the per-sample clipping operation that ensures finite sensitivity of $\Delta=2C/n$ together with the Gaussian noise $z_t$. 

\begin{lemma}[Advanced Composition]
    Let $\cA$ be some randomized algorithm operating on a dataset $S$ and outputting a vector in $\bR^d$. If $\cA$ has sensitivity $\Delta:=\sup_{S \sim S'} \norm{\cA(S) - \cA(S')}$, the mechanism that adds Gaussian noise $\cN(0,\sigma^2\rI_d)$ with variance $\sigma^2=(2\Delta\sqrt{2T\log(e+(\eps/\delta))}/\eps )^2$ satisfies $(\eps, \delta)$-DP under $T$-fold adaptive composition for any $\eps>0$ and $\delta\in(0,1)$. 
    \label{lm:composition}
\end{lemma}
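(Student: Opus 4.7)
The plan is to combine two standard ingredients: a single-shot privacy guarantee for the Gaussian mechanism, and an advanced composition theorem for general $(\eps_0,\delta_0)$-DP mechanisms, and then to tune the per-step budgets $(\eps_0,\delta_0)$ so that the $T$-fold composition lands at the target $(\eps,\delta)$.

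First, I would establish the single-shot analytic Gaussian guarantee: releasing $\cA_0(S) + \cN(0,\sigma_0^2\rI_d)$ where $\cA_0$ has sensitivity $\Delta$ is $(\eps_0,\delta_0)$-DP whenever $\sigma_0 \gtrsim \Delta\sqrt{\log(1/\delta_0)}/\eps_0$. The standard route is to bound the privacy loss random variable $\log(p(o)/q(o))$ for two Gaussians shifted by a vector of norm at most $\Delta$; this quantity is subgaussian and a Chernoff bound delivers the $(\eps_0,\delta_0)$ guarantee. Second, I would invoke advanced composition, which states that $T$-fold adaptive composition of $(\eps_0,\delta_0)$-DP mechanisms is $\bigl(\eps_0\sqrt{2T\log(1/\delta')} + T\eps_0(e^{\eps_0}-1),\; T\delta_0+\delta'\bigr)$-DP for any $\delta'>0$; this is proved by showing the per-round privacy losses form a bounded martingale and applying Azuma-type concentration. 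Third, I would split the budget by setting $\delta_0 = \delta/(2T)$ and $\delta'=\delta/2$ so that $T\delta_0+\delta'=\delta$, forcing $\eps_0 \leq 1$ so the quadratic term is absorbed, and solving $\eps_0\sqrt{2T\log(1/\delta')} \leq \eps$ for $\sigma_0$. This yields $\sigma$ of the stated order up to consolidating the nested logarithms $\log(2T/\delta)\cdot\log(2/\delta)$ into the compact form $\log(e+\eps/\delta)$.

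The main obstacle is exactly that last consolidation: a naive advanced-composition calculation produces a product of two logarithms (one from the per-round Gaussian, one from the composition's $\delta'$), whereas the lemma claims the single logarithm $\log(e+\eps/\delta)$. Reconciling these requires using the relation between $T,\eps,\delta$ implicit in the setup and absorbing constants carefully. For this reason, I would actually prefer the cleaner Rényi DP route: the Gaussian mechanism is $(\alpha,\alpha\Delta^2/(2\sigma^2))$-RDP for every $\alpha>1$, $T$-fold composition is additive and yields $(\alpha,T\alpha\Delta^2/(2\sigma^2))$-RDP, and the standard conversion $(\alpha,\rho)\text{-RDP}\Rightarrow (\rho+\log(1/\delta)/(\alpha-1),\delta)\text{-DP}$, optimized in $\alpha$, produces the claimed $\sqrt{T\log(e+\eps/\delta)}$ scaling in one shot without the delicate two-parameter tuning. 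The protective ``$e+$'' inside the logarithm appears naturally here to handle the regime where $\eps/\delta$ is small and the optimal $\alpha$ is bounded away from $1$.
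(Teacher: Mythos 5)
The paper does not actually prove this lemma: it is taken verbatim from Theorem 4.3 of Kairouz, Oh, and Viswanath (2015), and the only argument the authors add is the short remark immediately following the lemma, explaining how to lift the scalar-output statement to vector outputs via the rotation trick of Kenthapadi et al.\ (choose a basis in which $\cA(S)-\cA(S')$ is axis-aligned, so spherical Gaussian noise reduces to the one-dimensional case). Your proposal attempts a from-scratch derivation, which is a genuinely different undertaking, but neither of your two routes actually lands on the stated bound, and you omit the scalar-to-vector reduction entirely.

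Concretely: your first route (per-step $(\eps_0,\delta_0)$-DP Gaussian plus Dwork-style advanced composition) produces, as you correctly flag, a nested product of logarithms $\log(T/\delta)\log(1/\delta)$ that does not simplify to $\log(e+\eps/\delta)$; you identify this obstacle but do not resolve it. Your second route is offered as the fix, but the standard conversion $(\alpha,\rho)\text{-RDP}\Rightarrow(\rho+\log(1/\delta)/(\alpha-1),\delta)\text{-DP}$, optimized over $\alpha$, yields a required noise scale proportional to $\Delta\sqrt{T\log(1/\delta)}/\eps$, \emph{not} $\Delta\sqrt{T\log(e+\eps/\delta)}/\eps$. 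These are qualitatively different: $\log(1/\delta)\to\infty$ as $\delta\to0$ regardless of $\eps$, whereas $\log(e+\eps/\delta)$ stays bounded whenever the ratio $\eps/\delta$ does. Your claim that the RDP route ``produces the claimed $\sqrt{T\log(e+\eps/\delta)}$ scaling in one shot'' is therefore incorrect. The $\log(e+\eps/\delta)$ dependence is a signature of the Kairouz--Oh--Viswanath analysis, which proceeds neither via a martingale/Azuma argument on per-round privacy losses nor via R\'enyi divergences, but by exactly characterizing and then exactly composing the privacy regions (achievable type-I/type-II error tradeoffs) of the individual mechanisms; reproducing that argument is substantially more involved than either of your sketches. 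If you wish to cite the result as the paper does, you should at minimum supply the reduction from $\bR^d$-valued outputs to scalar outputs, which the paper handles and your proposal does not.
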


\subsection{Zeroth-Order Optimization}

When the gradient is expensive to compute, zeroth-order methods are useful for optimizing Eq.~\eqref{eq:opt}.
For example, the two-point gradient estimator below requires only two evaluation of function values \citep{shamir2017optimal}
\begin{equation}\label{eq.zo}
    g_\lambda(x;\xi_i) \;\;:=\;\; \frac{f(x+\lambda u;\xi_i) - f(x-\lambda u;\xi_i)}{2\lambda} u\;,
\end{equation}
where $u$ is sampled uniformly from the Euclidean sphere $\sqrt{d}\,\bS^{d-1}$ and $\lambda>0$ is the smoothing parameter \citep{yousefian2012stochastic, duchi2012randomized}. A common approach to generate $u$ is to set $u=\sqrt{d}\, z/\norm{z}$, with $z$ sampled from the standard multivariate Gaussian $\cN(0, \rI_d)$ \citep{muller1959note, marsaglia1972choosing}. We refer to $g_\lambda(x;\xi)$ as the {\em zeroth-order gradient} (estimator) in the sequel.
The results in this paper can be directly extended to other zeroth-order gradient estimators, e.g., any $u$ satisfying $\bE[uu^\top]=\rI_d$ \citep{duchi2015optimal}, the one-point estimator \citep{flaxman2005online}, and the directional derivative \citep{nesterov2017random}.

\section{DP-GD with Zeroth-Order Gradients Suffers in High Dimensions}
\label{sec:d-dependent}
 
In this section, we show that the direct integration of zeroth-order gradient estimators in Eq. \eqref{eq.zo} into DP-GD, which we term DPGD-0th, leads to undesirable dimension dependence in the error rate. Such dependence persists even under a low effective rank assumption.

\subsection{Direct Integration Leads to an ${\cal O}(d^{3/2})$ Rate}

We present in Algorithm \ref{algo:d-dependent} the straightforward private zeroth-order approach that substitutes the gradients in DP-GD with zeroth-order estimators $g_\lambda(x_t;\xi_i)$ in Eq.~\eqref{eq.zo}.

The privacy guarantee follows from standard DP-GD analysis, and the utility guarantee on the squared gradient norm is derived from classical techniques for analyzing zeroth-order methods \citep{nesterov2017random}. Before presenting the convergence result, we make the following standard assumption, which is common in nonconvex DP optimization  \citep{wang2017differentially, wang2019differentially, tran2022momentum}. 

\begin{assumption}
    The loss $f(x;\xi)$ is $L$-Lipschitz for every $\xi$. The average loss $F_S(x)$ is $\ell$-smooth for every given dataset $S$, and its minimum $F_S^*:=\min_{x\in\bR^d} F_S(x)$ is finite.
    \label{asp:smooth}
\end{assumption}

\begin{theorem} 
    For any $\eps>0$ and $\delta\in(0,1)$, Algorithm \ref{algo:d-dependent} is $(\eps, \delta)$-DP.  
    Under Assumption \ref{asp:smooth}, its output $x_\tau$ satisfies that
    \begin{equation}
        \bE[\norm{\nabla F_S(x_\tau)}^2 ] \;\; \leq \;\; 16\Big((F_S(x_0) - F_S^*) \,\ell  +2L^2\Big)\frac{d\sqrt{d\log(e+(\eps/\delta))}}{n\eps},
        \label{eq:d-dependent}
    \end{equation}
    with the choice of parameters
    \begin{equation*}
        \alpha=\frac{1}{4\ell d}, \quad
        T=\frac{n\eps}{\sqrt{d\log(e + (\eps/\delta))}}, \quad
        \lambda\leq\frac{4L}{\ell d}\Big(\frac{\sqrt{d\log(e + (\eps/\delta))}}{n\eps}\Big)^{1/2}, \quad
        C = Ld.
    \end{equation*}
    The total number of zeroth-order gradient computations is $nT=\cO(n^2/\sqrt{d})$.
    \label{thm:d-dependent}
\end{theorem}

\begin{remark}
    Theorem \ref{thm:d-dependent} demonstrates that directly combining DP-GD with zeroth-order gradients leads to an ${\cal O}(d^{3/2})$ error complexity, which is $\cO(d)$ worse than first-order DP approaches \citep{wang2017differentially}. 
\end{remark}

\begin{remark}
    \label{remark.naive}
    Three sources contribute to the dependence in $d$: the squared norm of the zeroth-order gradient estimator $\bE[\|(1/n) \sum_{i=1}^n g_\lambda(x,\xi_i)\|^2]=\cO(d\,\norm{\nabla F_S(x)}^2)$ when taking $\lambda\rightarrow0$ for simplicity, the clipping threshold $C=\cO(d)$, and the norm of the privacy noise $\bE[\|z_t\|^2]=\cO(d\,C^2) = \cO(d^3)$.
    The standard analysis of one-step update gives
    \begin{equation}
        \bE[F_S(x_{t+1})]
        \;\leq\;
        \bE[F_S(x_t)] - \frac{\alpha}{2}\big(1 - 2d\,\ell\alpha\big)\,\bE[\norm{\nabla F_S(x_t)}^2] + c\,\alpha^2\, d^3,
        \label{eq:onestep1simple} 
    \end{equation}
    where $c$ is a constant that depends on problem parameters other than $\alpha$ and $d$; see Eq.~\eqref{eq:onestep1} for details. A small enough step size, $\alpha < 1/(2\ell d)$, is required to make the second term negative, where the dependence in $d$ comes from $\bE[\|(1/n) \sum_{i=1}^n g_\lambda(x,\xi_i)\|^2]$. The dependence on $d^3$ in the last term arises from $\bE[\|z_t\|^2]$, which leads to the $\cO(d^{3/2})$ rate in Eq.~\eqref{eq:d-dependent} after balancing error terms.  Detailed proofs can be found in Appendix~\ref{app:d-dependent}.
\end{remark}

\begin{remark}
    The choice of the clipping threshold $C=Ld$ ensures that clipping does not happen with probability one, which is a common choice in the theoretical analysis of private optimization algorithms \citep{bassily2014private, bassily2019private, wang2017differentially}. This follows from the fact that, for $L$-Lipschitz $f(x;\xi)$, the zeroth-order gradient is upper bounded by $\norm{g_\lambda(x;\xi)}\leq Ld$ almost surely.
    Selecting the clipping threshold without knowledge of this upper bound remains an active research topic \citep{chen2020understanding, yang2022normalized, fang2023improved, koloskova2023revisiting, zhang2023differentially}.
\end{remark}

\begin{algorithm}[t]
    \caption{DP-GD with 0th-order gradients (DPGD-0th)}
    \begin{algorithmic}[1]
        \REQUIRE Dataset $S=\{\xi_1, \cdots, \xi_n\}$, initialization $x_0\in\bR^d$, number of iterations $T$, stepsize $\alpha>0$, smoothing parameter $\lambda>0$, clipping threshold $C>0$, privacy parameters $\eps>0, \delta\in(0,1)$.
        
        \FOR{$t=0, 1, \cdots, T-1$}
            \STATE Sample $u_t$ uniformly at random from the Euclidean sphere $\sqrt{d}\,\bS^{d-1}$ and for all $i=1,\cdots,n$ compute
            \begin{equation*}
                g_\lambda(x_t;\xi_i) \gets \frac{f(x_t+\lambda u_t;\xi_i) - f(x_t-\lambda u_t;\xi_i)}{2\lambda} u_t.
            \end{equation*}
            
            \STATE Sample $z_t\in\bR^d$ randomly from the multivariate Gaussian distribution $\cN(0, \sigma^2\rI_d)$ with variance $\sigma=4C\sqrt{2T\log(e + (\eps/\delta))}/(n\eps)$ and update
            \begin{equation*}
                x_{t+1} \;\gets\; x_t - \alpha\Big(
                    \frac{1}{n}\sum_{i=1}^n \clip_C(g_\lambda(x_t;\xi_i)) + z_t
                \Big).
            \end{equation*}
        \ENDFOR
        
        \ENSURE $x_\tau$ for $\tau$ sampled uniformly at random from $\{0,1,\cdots,T-1\}$.
    \end{algorithmic}
    \label{algo:d-dependent}
\end{algorithm}

\subsection{Rate Improves to $\cO(d)$ under Low Effective Rank} 
\label{sec:lowrank} 

Here, under the low-dimensional structures in fine-tuning LLMs (cf. Section~\ref{sec:intro}), we demonstrate improved performance for Algorithm~\ref{algo:d-dependent}. Unfortunately, a linear dependence in $d$ still persists even under the low effective rank structure.

\begin{assumption}
    The function $f(x;\xi)$ is $L$-Lipschitz and $\ell$-smooth for every $\xi$. The average function $F_S(x)$ is twice differentiable with $-H \preceq \nabla^2 F_S(x)\preceq H$ for any $x\in\bR^d$, and its minimum $F_S^*:=\min_{x\in\bR^d} F_S(x)$ is finite. Here, the real-valued $d\times d$ matrix $H\succeq 0$ satisfies that $\norm{H}_2\leq \ell$ and $\tr(H)\leq r\norm{H}_2$. We refer to $r$ as the effective rank or the intrinsic dimension of the problem.
    \label{asp:rank}
\end{assumption}
 
Assumption~\ref{asp:rank} boils down to Assumption~\ref{asp:smooth} if $r=d$. This is because $-H'\preceq \nabla^2 F_S(x)\preceq H', \forall\,x\in\bR^d$ and $H'=\ell\,\rI_d$ imply that $\norm{H'}_2\leq \ell$ and $\tr(H')\leq d\norm{H'}_2$. With $r<d$, this assumption reflects the additional structures encoded in the Hessian matrix. 
While Assumption \ref{asp:rank} naturally holds for low-rank Hessians, it covers more general cases.
For example, the assumption is satisfied with $r=\cO(\log d)\ll d$ in the case of a full-rank matrix $H$, with its $i$-th largest eigenvalue being $\ell/i$ for $1\leq i\leq d$. 

Similar assumptions have been made to relax the dimension dependence in zeroth-order optimization in the limit $\lambda\to 0$ \citep{malladi2023fine} and also for DP first-order optimization when the objective is smooth and convex \citep{ma2022dimension}.
However, even under Assumption \ref{asp:rank}, DPGD-0th (Algorithm \ref{algo:d-dependent}) still suffers from a linear dependence in $d$ in its error rate, as presented below. A proof is provided in Appendix \ref{app:d-dependent}.

\begin{theorem}
    For any $\eps>0$ and $\delta\in(0,1)$, Algorithm \ref{algo:d-dependent} is $(\eps, \delta)$-DP. Under Assumption \ref{asp:rank}, its output $x_\tau$ satisfies that
    \begin{equation}
        \bE[\norm{\nabla F_S(x_\tau)}^2 ] \;\; \leq \;\;
        16\Big((F_S(x_0) - F_S^*) \,\ell  +2L^2\Big)\frac{d\sqrt{r\log(e+(\eps/\delta))}}{n\eps}, 
        \label{eq:d-dependent-rank}
    \end{equation}
    with the choice of parameters
    \begin{equation*}
        \alpha=\frac{1}{4\ell (r+2)}, \quad
        T=\frac{n(r+2)\eps}{d\sqrt{r\log(e + (\eps/\delta))}}, \quad
        \lambda\leq\frac{4L}{\ell d}\Big(\frac{\sqrt{r\log(e + (\eps/\delta))}}{n\eps}\Big)^{1/2}, \quad
        C = Ld.
    \end{equation*}
    The total number of zeroth-order gradient computations is $nT=\cO(n^2\sqrt{r}/d)$.
    \label{thm:d-dependent-rank}
\end{theorem}

\begin{remark}
    \label{rem:onestep2}
    Comparing to Remark \ref{remark.naive}, both the zeroth-order gradient, $\bE[\norm{(1/n)\sum_{i=1}^n g_\lambda(x_t;\xi_i)}_{H}^2]$, and the DP noise, $\bE[\norm{z_t}_H^2]$, decrease by a factor of ${\cal O}(r/d)$ under low effective rank. This is made precise in  Lemma \ref{lm:sphere}. 
    As a result, the one-step update analysis can be tightened as 
    \begin{equation}
        \bE[F_S(x_{t+1})]
         \;\leq\;
        \bE[F_S(x_t)] - \frac{\alpha}{2}\big(1 - 2(r+2)\ell\alpha \big)\,\bE[\norm{\nabla F_S(x_t)}^2] + c\,\alpha^2\, r\,d^2.
        \label{eq:onestep2simple}
    \end{equation}
    Comparing to the RHS of Eq. \eqref{eq:onestep1simple}, it achieves an improved dependence in $d$. 
    However, the third term in Eq. \eqref{eq:onestep2simple} is still at $\cO(d^2)$ due to the clipping threshold $C=\cO(d)$. 
    Consequently, even when the effective rank $r$ is small, Eq.~\eqref{eq:d-dependent-rank} still grows linearly in $d$. 
\end{remark}

\section{DPZero: Nearly Dimension-Independent Private Zeroth-Order Optimization}
\label{sec:d-free}

A straightforward combination of DP-GD and zeroth-order methods has a large dimension dependence.
Our novel \DPZero overcomes this issue with two key insights elaborated below.

\paragraph{Scalar privacy noise.}
By decoupling zeroth-order gradients in Eq.~\eqref{eq.zo} into direction and magnitude, our key observation is that the direction, $u_t$, is public knowledge, and we only need to make the magnitude private. Privacy can be guaranteed by clipping the finite-difference, $(f(x_t+\lambda u_t;\xi_i) - f(x_t-\lambda u_t;\xi_i))/(2\lambda)$, and then adding a \textit{scalar} noise $z_t$; see line 3 of Algorithm~\ref{algo:d-free}.
This change, when applied to Algorithm~\ref{algo:d-dependent}, can significantly improve the rate in Eq.~\eqref{eq:d-dependent-rank} by a factor of $d^{1/2}$.

\paragraph{Tighter clipping threshold.}
Another factor of $d^{1/2}$ improvement originates from a tighter analysis on the upper bound of the finite-difference term.
Although its worst-case upper bound scales with the dimension $d$, this only happens with an exponentially small probability over the randomness of $u_t$.
As proved in Eq.~\eqref{eq:d-free-C-upper-bound} in Appendix \ref{app:d-free}, the size of the finite-difference is
\begin{equation*}
    \frac{\abs{f(x_t+\lambda u_t;\xi_i) - f(x_t-\lambda u_t;\xi_i)}}{2\lambda} \; \leq\;
    \abs{u_t^\top \nabla f(x_t;\xi_i)} + \frac{\ell}{2}\lambda d,
\end{equation*}
where we use the assumption that each $f(x;\xi)$ is $\ell$-smooth.
When $u_t$ is sampled from the sphere $\sqrt{d}\,\bS^{d-1}$, a tail bound (part $(ii)$ of Lemma \ref{lm:sphere} in the appendix) implies that
\begin{equation*}
    \bP\left(\abs*{u_t^\top\nabla f(x_t;\xi_i)}\geq C\right) \leq 2\sqrt{2\pi}\,\exp\Big(-\frac{C^2}{8L^2}\Big).
\end{equation*}
By selecting the smoothing parameter $\lambda$ to be sufficiently small, a careful choice of $C=\tilde\cO(L)$, which is nearly independent of $d$, can ensure that clipping does not occur with a high probability. 
This choice is significantly smaller than the worst-case clipping threshold of $Ld^{1/2}$. 
The main technical challenge is that we need to analyze the algorithm given the event that clipping does not happen. The choice of drawing $u_t$ from the uniform distribution over the sphere, together with corresponding tail bounds in Appendix~\ref{app:technical}, allows us to prove the following nearly dimension-independent bound under the low effective rank structure in Assumption \ref{asp:rank}.
A proof is provided in Appendix \ref{app:d-free}.

\begin{algorithm}[t]
    \caption{\DPZero}
    \begin{algorithmic}[1]
        \REQUIRE Dataset $S=\{\xi_1, \cdots, \xi_n\}$, initialization $x_0\in\bR^d$, number of iterations $T$, stepsize $\alpha>0$, smoothing parameter $\lambda>0$, clipping threshold $C>0$, privacy parameters $\eps>0, \delta\in(0,1)$.
        
        \FOR{$t=0, 1, \cdots, T-1$}
            \STATE Sample $u_t$ uniformly at random from the Euclidean sphere $\sqrt{d}\,\bS^{d-1}$.
            
            \STATE Sample a scalar $z_t\in\bR$ randomly from the univariate Gaussian distribution $\cN(0, \sigma^2)$ with variance $\sigma=4C\sqrt{2T\log(e + (\eps/\delta))}/(n\eps)$ and update the parameter
            \begin{equation*}
                x_{t+1} \;\gets\; x_t - \alpha\Big(
                    \frac{1}{n}\sum_{i=1}^n \clip_C\Big(\frac{f(x_t+\lambda u_t;\xi_i) - f(x_t-\lambda u_t;\xi_i)}{2\lambda}\Big) + z_t
                \Big)u_t.
            \end{equation*}
        \ENDFOR
        
        \ENSURE $x_\tau$ for $\tau$ sampled uniformly at random from $\{0,1,\cdots,T-1\}$.
    \end{algorithmic}
    \label{algo:d-free}
\end{algorithm}

\begin{theorem}
    For any $\eps>0$ and $\delta\in(0,1)$, Algorithm \ref{algo:d-free} is $(\eps, \delta)$-DP. Under Assumption \ref{asp:rank}, suppose $\max_{0\leq t\leq T}\abs{F_S(x_t)}\leq B$, the output $x_\tau$ satisfies that
    \begin{equation}
        \bE[\norm{\nabla F_S(x_\tau)}^2] \;\; \leq \;\;
        \Big(
            64\Big( (F_S(x_0) - F_S^*) \, \ell + \tilde L^2\Big) + 2L^2
        \Big)
        \frac{\sqrt{r\log(e+(\eps/\delta))}}{n\eps},
        \label{eq:d-free}
    \end{equation}
    where we define
    \begin{equation*}
        \tilde L^2 = L^2\,\log\Big(\frac{2\sqrt{2\pi}\, n^3\eps^2 (r+2)(d + 8\ell B(r+2)/L^2)}{r\log(e+(\eps/\delta))}\Big),
    \end{equation*}
    and choose the parameters to be
    \begin{align*}
        & \alpha =\frac{1}{4\ell(r+2)}, \quad
        T=\frac{n(r+2)\eps}{4\sqrt{r\log(e+(\eps/\delta))}}, \quad
        C=4\tilde L, \\
        & \lambda \leq \frac{1}{\ell d}\min\Big\{
            4(2-\sqrt{2})\tilde L, \;
            \frac{L}{\sqrt{d}}\Big(\frac{\sqrt{r\log(e+(\eps/\delta))}}{n\eps}\Big)^{\frac12}
        \Big\}.
    \end{align*}
    The total number of zeroth-order gradient computations is $nT=\cO(n^2\sqrt{r})$.
    \label{thm:d-free}
\end{theorem}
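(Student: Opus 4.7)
The plan is to verify privacy from Lemma~\ref{lm:composition}, then prove a one-step descent lemma conditioned on the event that no sample gets clipped, and finally telescope and balance parameters. The privacy claim is short because $u_t$ is drawn from randomness independent of $S$ and is therefore public, so the only data-dependent quantity released at step $t$ is the scalar $\bar\phi_t := \tfrac{1}{n}\sum_i \clip_C(\phi_i)$, where $\phi_i := (f(x_t+\lambda u_t;\xi_i)-f(x_t-\lambda u_t;\xi_i))/(2\lambda)$; clipping bounds each summand by $C$, so the $\ell_2$-sensitivity of $\bar\phi_t$ is at most $2C/n$, and Lemma~\ref{lm:composition} with $\Delta=2C/n$ under $T$-fold adaptive composition matches the $\sigma$ of Algorithm~\ref{algo:d-free}. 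Multiplying the privatized scalar by $u_t$ afterwards is post-processing.

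For utility, the $\ell$-smoothness of each $f(\cdot;\xi_i)$ gives $\abs{\phi_i}\le \abs{u_t^\top\nabla f(x_t;\xi_i)} + \ell\lambda d/2$, and the prescribed $\lambda$ makes the second term at most $C/2$. A union bound together with the sub-Gaussian tail of $u_t^\top \nabla f(x_t;\xi_i)$ (Lemma~\ref{lm:sphere}(ii)) then yields $\bP(\cE_t^c) \le 2n\sqrt{2\pi}\exp(-C^2/(32L^2))$, where $\cE_t$ is the event ``no sample is clipped at step $t$''; the choice $C=4\tilde L$ with $\tilde L^2 \propto L^2\log(d(r+2)n^3\eps^2/(r\log(e+\eps/\delta)))$ calibrates this probability to be of lower order than the final per-step error. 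On $\cE_t$ the update reduces to $x_{t+1}-x_t = -\alpha(\bar\phi_t + z_t)u_t$ with unclipped $\bar\phi_t \approx u_t^\top \nabla F_S(x_t)$. Applying the Hessian upper bound from Assumption~\ref{asp:rank}, taking expectation over $(u_t,z_t)$, and using $\bE[u_tu_t^\top]=\rI_d$, $\bE[u_t^\top H u_t]=\tr(H)\le r\ell$, together with the fourth-moment identity on the sphere
\[
\bE\bigl[(u_t^\top v)^2\, u_t^\top H u_t\bigr] \;=\; \frac{d}{d+2}\bigl(\tr(H)\,\norm{v}^2 + 2\,v^\top H v\bigr) \;\le\; (r+2)\ell\,\norm{v}^2,
\]
yields the analogue of Eq.~\eqref{eq:onestep2simple}, but with the noise contribution reduced from $\Theta(\alpha^2 r d^2)$ to $\Theta(\alpha^2 r\ell\sigma^2)$ thanks to scalar noise and the much tighter clipping threshold.

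Setting $\alpha = 1/(4\ell(r+2))$ makes the descent coefficient $\Theta(\alpha)$; telescoping and dividing by $\alpha T$ then balances $(F_S(x_0)-F_S^*)/(\alpha T)$ against $\alpha r\ell\sigma^2 \asymp \tilde L^2\, T\log(e+\eps/\delta)/(n\eps)^2$ at the stated $T$, producing Eq.~\eqref{eq:d-free}. The main technical obstacle is that conditioning on $\cE_t$ distorts the law of $u_t$ and so a priori breaks the spherical moment identities used above; the cleanest workaround is to write every conditional expectation as an unconditional $\bE[(\cdot)\mathbf{1}_{\cE_t}]$, evaluate the unconditional moments exactly, and then bound the residual $\bE[(\cdot)\mathbf{1}_{\cE_t^c}]$ via Cauchy--Schwarz using the $L$-Lipschitzness of $f$ and the exponentially small $\bP(\cE_t^c)$. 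The logarithmic inflation of $\tilde L^2$ over $L^2$ is exactly what is needed to absorb this residual into the $L^2$ term of Eq.~\eqref{eq:d-free} rather than letting it dominate; a secondary point is keeping $\lambda$ small enough that the $O(\ell\lambda d)$ smoothing bias contributes only lower-order terms to both $\bP(\cE_t^c)$ and to the descent inequality.
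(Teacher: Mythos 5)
Your proposal follows essentially the same route as the paper's proof: privacy from the $2C/n$ scalar sensitivity plus post-processing by $u_t$; calibration of $C$ via the spherical tail bound on $u_t^\top\nabla f(x_t;\xi_i)$; conditioning on the no-clipping event and repairing the distorted law of $u_t$ through the indicator/total-probability decomposition, with the residual on the clipped event bounded using Lipschitzness and the exponentially small failure probability; the fourth-moment sphere identity combined with $\nabla^2 F_S \preceq H$ to replace $d$ by $(r+2)\ell$ and the scalar-noise term by $r\ell\sigma^2$; and the same telescoping and parameter balancing. The one quantitative slip is your split of the threshold: reserving $C/2$ for the smoothing bias yields the tail exponent $C^2/(32L^2)$, which with the stated $C=4\tilde L$ equals only $\tilde L^2/(2L^2)$ and leaves a residual of order $L^2\, n\,d\,T\exp\left(-\tilde L^2/(2L^2)\right)\asymp L^2\sqrt{n\,d\,(r+2)}$, which is not lower order; the paper instead restricts $\lambda \le 4(2-\sqrt{2})\tilde L/(\ell d)$ so that the bias consumes only $(1-1/\sqrt{2})C$, making the effective exponent $C^2/(16L^2)=\tilde L^2/L^2$, which exactly cancels the $n\,d\,T$ factor and is absorbed into the $2L^2$ term of Eq.~\eqref{eq:d-free}.
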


\begin{remark}
    \label{rem:highrank}
    Algorithm \ref{algo:d-free} is nearly dimension-independent, given its logarithmic dependence on $d$. To the best of our knowledge, this is the first zeroth-order DP method that is nearly dimension-independent.
    This feature is significantly beneficial for fine-tuning pretrained LLMs where the effective rank has been observed to be quite small \citep{aghajanyan2021intrinsic, li2022does}.
    When $r=d$, our rate in Eq.~\eqref{eq:d-free} nearly matches that of the best known achievable bound of the first-order method DP-GD for smooth nonconvex losses \cite{wang2017differentially}. When the effective rank $r$ is smaller, this algorithm achieves $\tilde \cO(\sqrt{r\log (1/\delta)}/(n\eps))$ squared gradient norm.
    Similar dimension-free error rate is established for DP-GD on unconstrained generalized linear losses \citep{song2021evading}, with a dependence on the rank of the feature matrix.
    Table \ref{tab:compare} provides a summary on how \DPZero depends on dimension $d$ and effective rank $r$.
\end{remark}

\begin{table}[ht]
    \centering
    \caption{The dependence of the error rate on dimension $d$ and effective rank $r$ shows that the proposed \DPZero (Algorithm \ref{algo:d-free}) significantly outperforms DPGD-0th (Algorithm \ref{algo:d-dependent}) and achieves performance close to the popular first-order method, DP-GD, on both scenarios with and without a low-effective rank assumption.
    Note that the error rates of zeroth and first-order DP methods are achieved with different number of iterations.}
    \begin{tabular}{ccc}
        \toprule
        & without Assumption \ref{asp:rank}
        & with Assumption \ref{asp:rank} \\
        \midrule
        DPGD-0th 
        & $\cO(d\sqrt{d})$
        & $\cO(d\sqrt{r})$ \\
        \DPZerons 
        & $\cO((\log d)\sqrt{d})$
        & $\cO((\log d)\sqrt{r})$ \\
        \midrule
        DP-GD
        & $\cO(\sqrt{d})$
        & $\cO(\sqrt{r})$ \\
        \bottomrule
    \end{tabular}
    \label{tab:compare}
\end{table}

\begin{remark}
    The RHS of Eq.~\eqref{eq:d-free} improves upon Eq.~\eqref{eq:d-dependent-rank} of Algorithm~\ref{algo:d-dependent} by a factor of $d$.  Simplifying our analysis in  Eq.~\eqref{eq:onestep3} and conditioned on the event that the clipping does not happen, we get a similar one-step update analysis as Eq.~\eqref{eq:onestep2simple} (see Eq.~\eqref{eq:onestep3} and \eqref{eq:onestep3b} for a precise inequality).
    However, since the privacy noise $z_t$ is a scalar and the clipping threshold has been reduced, we have that $\bE[\|z_tu_t\|_H^2] = \tilde\cO(r)$ is nearly independent of the dimension $d$, and thus the final error scales as $\tilde\cO(r^{1/2})$.
\end{remark}

\begin{remark}
    The strategy of appropriately selecting the clipping threshold to ensure that clipping occurs with low probability is commonly applied in the analysis of private algorithms \citep{fang2023improved, shen2023share}. Adaptive choices of clipping thresholds can provably improve error rates for certain problems including PCA \cite{liu2022dp} and linear regression \cite{liu2023near}. 
    One technical challenge in the choice of the clipping threshold in \DPZero  is that we need the {\em expected} one-step progress to be sufficient in Eq.~\eqref{eq:onestep3}. This requires controlling the progress in the low-probability event that finite difference is clipped. The fact that $\|u_t\|$ is finite with probability one simplifies the analysis, which is the reason we choose to sample $u_t$ uniformly at random over the sphere.
    We believe that the analysis extends to the commonly used spherical Gaussian random vectors, which we leave as a future research direction. Table \ref{tab:noise} in the appendix supports our hypothesis that the resulting performances are similar whether Gaussian or spherical random vectors are used. We choose Gaussian vectors for our experiments in Section \ref{sec:exp} for simplicity.
\end{remark}

\begin{remark}
    Our theoretical results, including Theorems \ref{thm:d-dependent}, \ref{thm:d-dependent-rank}, and \ref{thm:d-free}, can be extended to the setting where the average loss $F_S(x)$ additionally satisfies the PL inequality \citep{karimi2016linear, polyak1963gradient, lojasiewicz1963topological}. Under Assumption \ref{asp:rank}, \DPZero converges to an optimal solution in a nearly dimension-independent error rate. See more details in Appendix \ref{app:pl}.
\end{remark}

\begin{remark}
    \label{rmk:clipping}
    Per-sample clipping is essential in DP algorithms to ensure bounded sensitivity that determines the magnitude of the DP noise.
    Besides the dimension-free error rates and memory saving of no backpropagation, another practical merit of \DPZero stems from the significantly simplified clipping compared with DP-GD.
    In addition to the advantage of clipping a \textit{scalar} function value difference rather than a gradient vector as required by first-order methods, the efficiency of \DPZero is mainly attributed to the low-cost \textit{per-sample} operations.
    In DP first-order methods, clipping is applied to gradients for every sample in a batch. The straightforward method of performing backward steps for each sample to compute its gradient loses the benefit of parallelization, leading to significant memory and runtime overhead. Despite extensive effort in improving the efficiency of per-sample gradient clipping \citep{li2022large, he2023exploring, bu2023differentially}, these methods still incur extra costs compared to non-DP algorithms.
    However, the clipping in \DPZero only involves computing the per-sample loss from forward steps and incurs no overhead in memory and runtime.
    This is straightforward for implementation as it is directly supported by, e.g., PyTorch, and no additional techniques are required.
    \DPZero is thus the first private method for fine-tuning LLMs that achieves near-zero additional costs compared to non-DP baselines, which is highly preferable especially in resource-constrained scenarios.
\end{remark}

\section{Experiments} \label{sec:exp}

We provide empirical results on synthetic problems and private fine-tuning of language models for sentence classification and generation tasks. A thorough description of the experimental settings is available in Appendix \ref{app:exp}. All experiments are tested on a single NVIDIA GeForce RTX 3090 GPU with 24 GiB memory. Code is available at \url{https://github.com/Liang137/DPZero}.

\subsection{Synthetic Example}

\begin{figure}[t]
    \centering
    \begin{subfigure}{0.32\textwidth}
        \includegraphics[width=\textwidth]{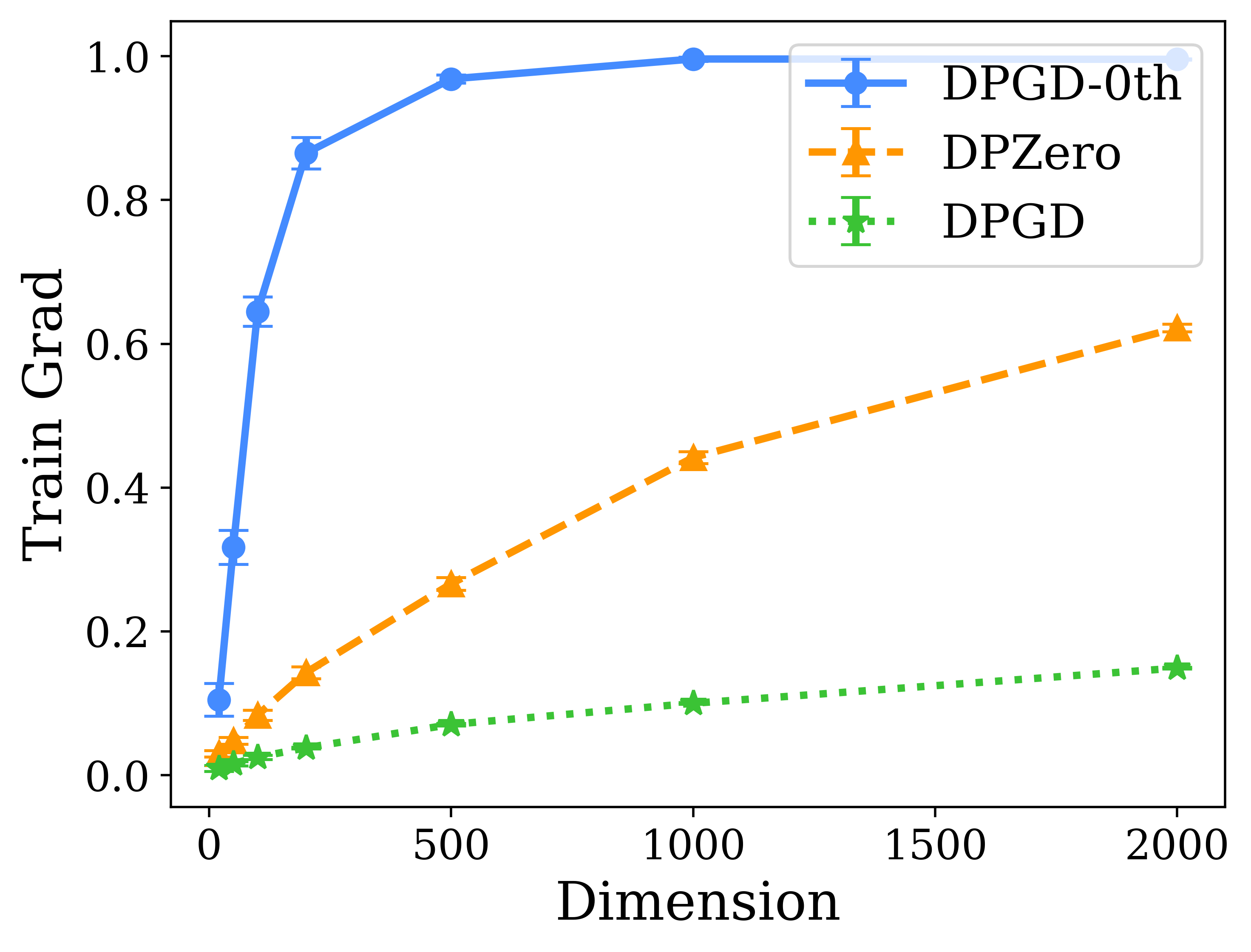}
        \caption{$\tr(A)=\cO(d)$.}
    \end{subfigure}
    \hfill
    \begin{subfigure}{0.32\textwidth}
        \includegraphics[width=\textwidth]{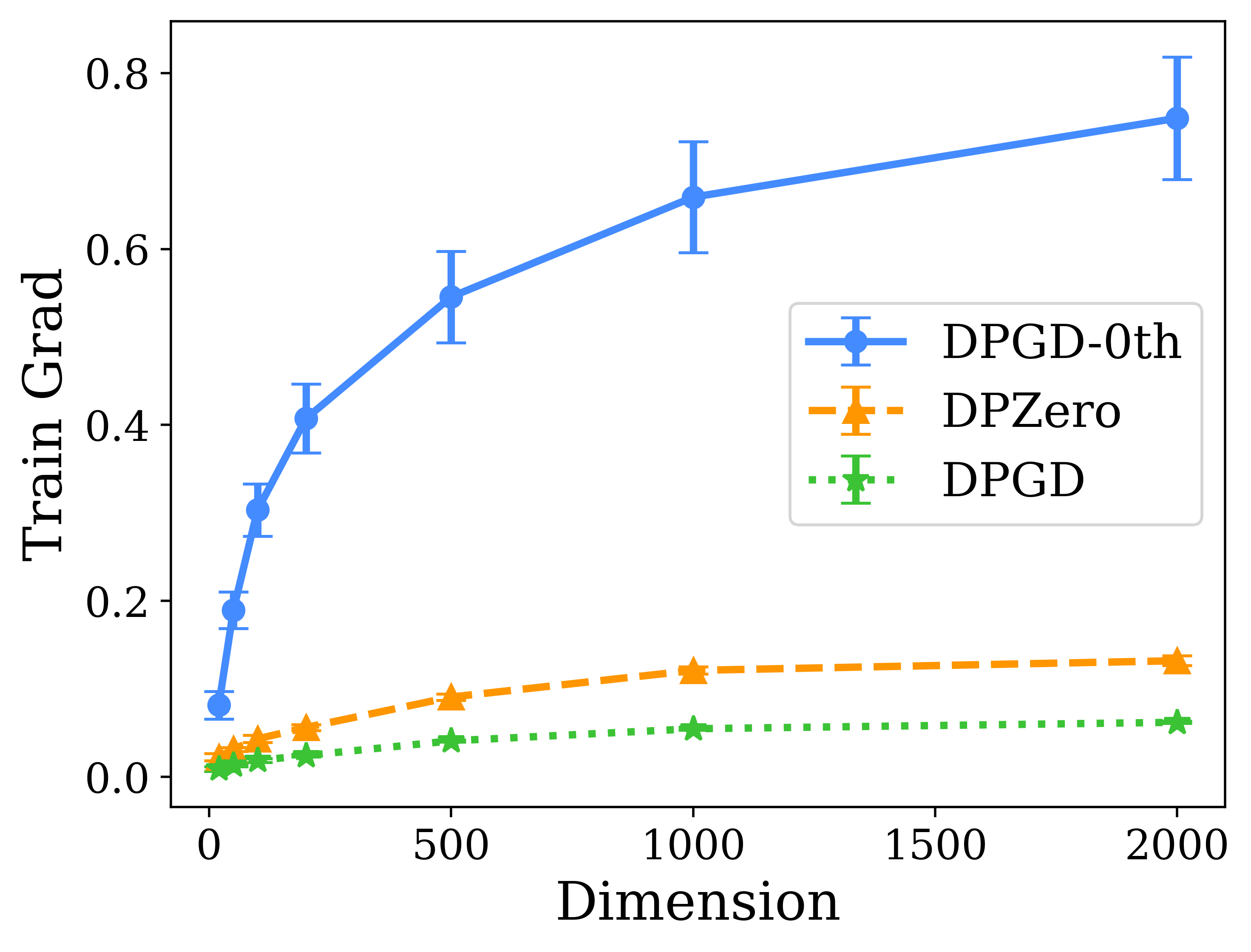}
        \caption{$\tr(A)=\cO(\sqrt{d})$.}
    \end{subfigure}
    \hfill
    \begin{subfigure}{0.32\textwidth}
        \includegraphics[width=\textwidth]{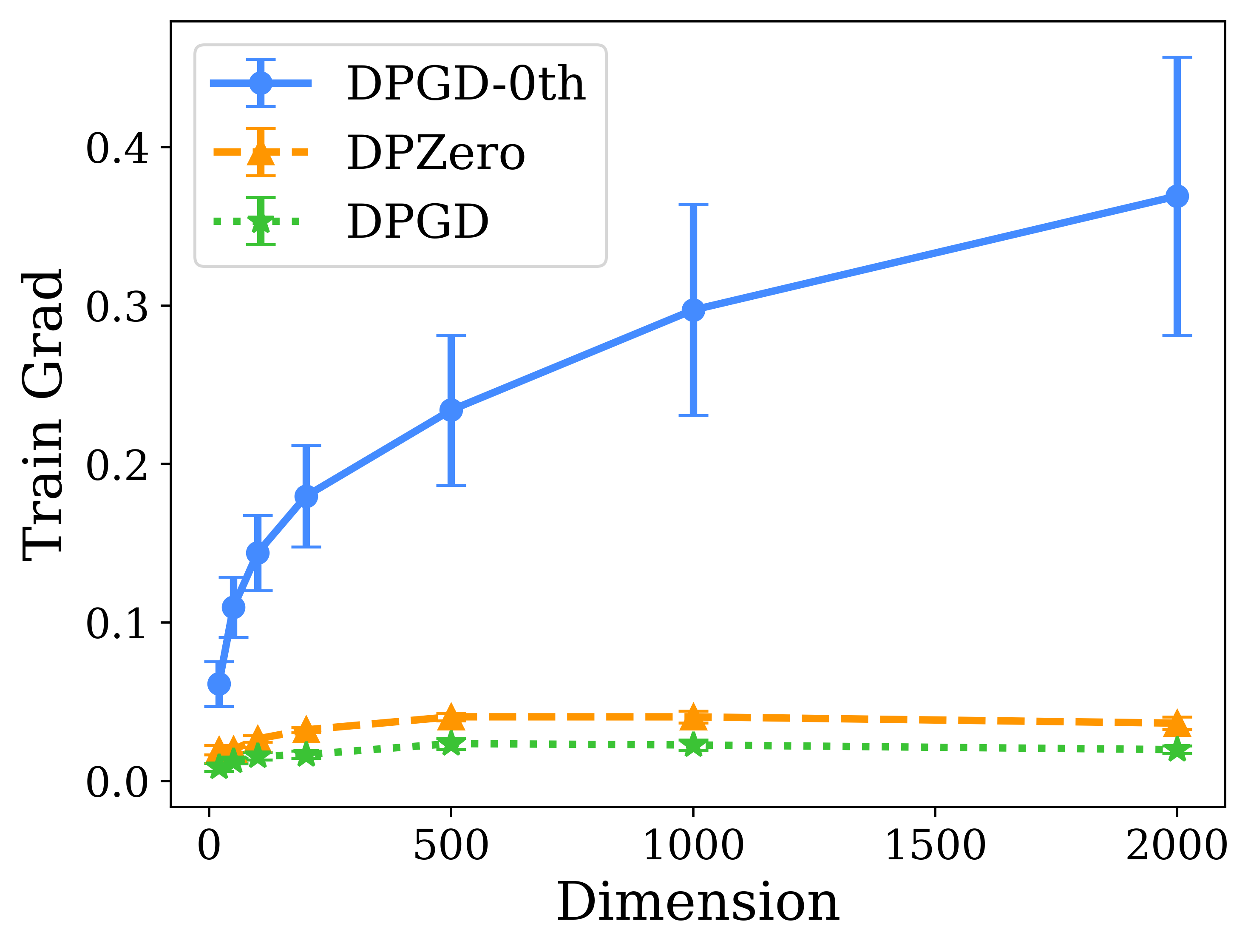}
        \caption{$\tr(A)=\cO(\log d)$.}
    \end{subfigure}
    
    \begin{subfigure}{0.32\textwidth}
        \includegraphics[width=\textwidth]{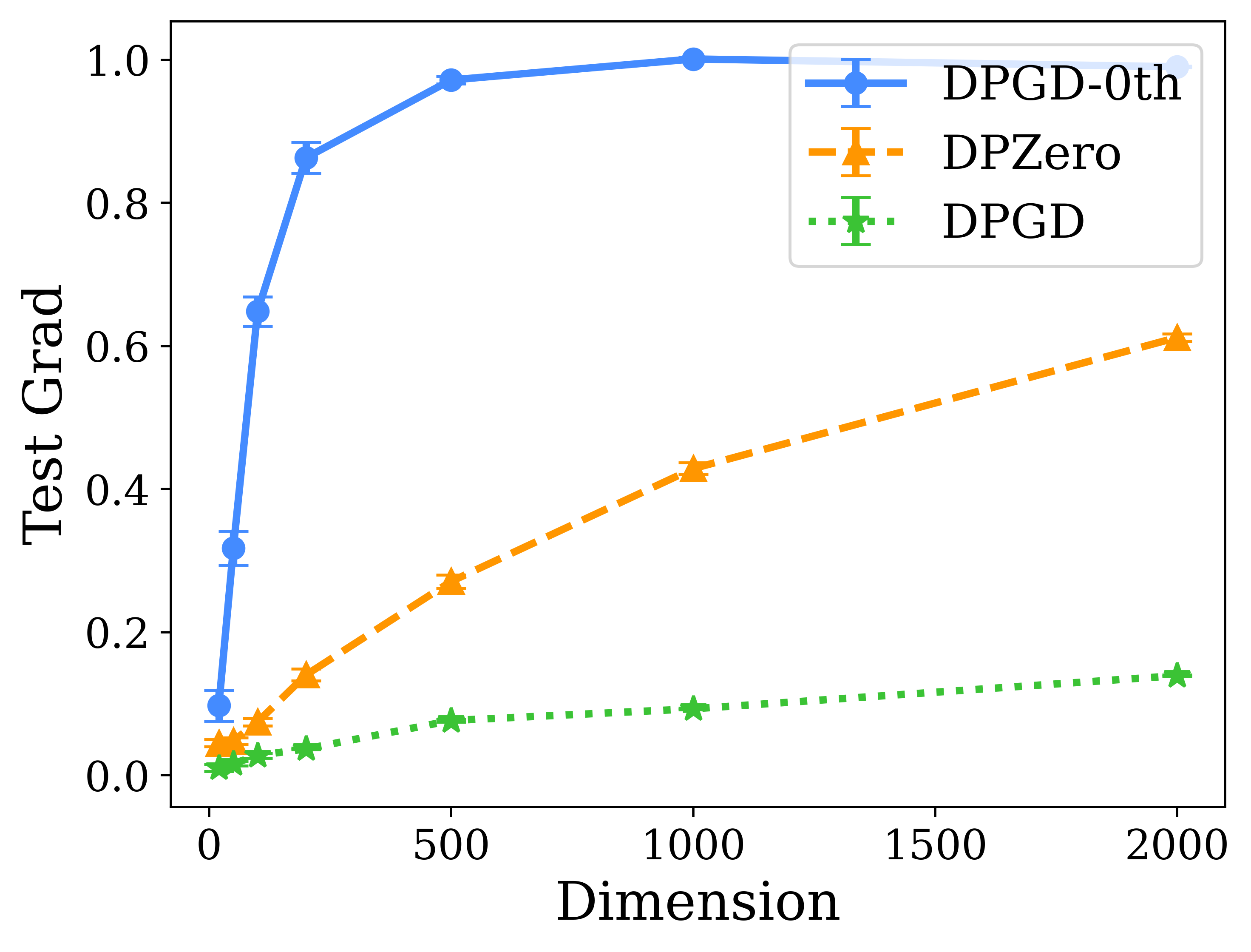}
        \caption{$\tr(A)=\cO(d)$.}
    \end{subfigure}
    \hfill
    \begin{subfigure}{0.32\textwidth}
        \includegraphics[width=\textwidth]{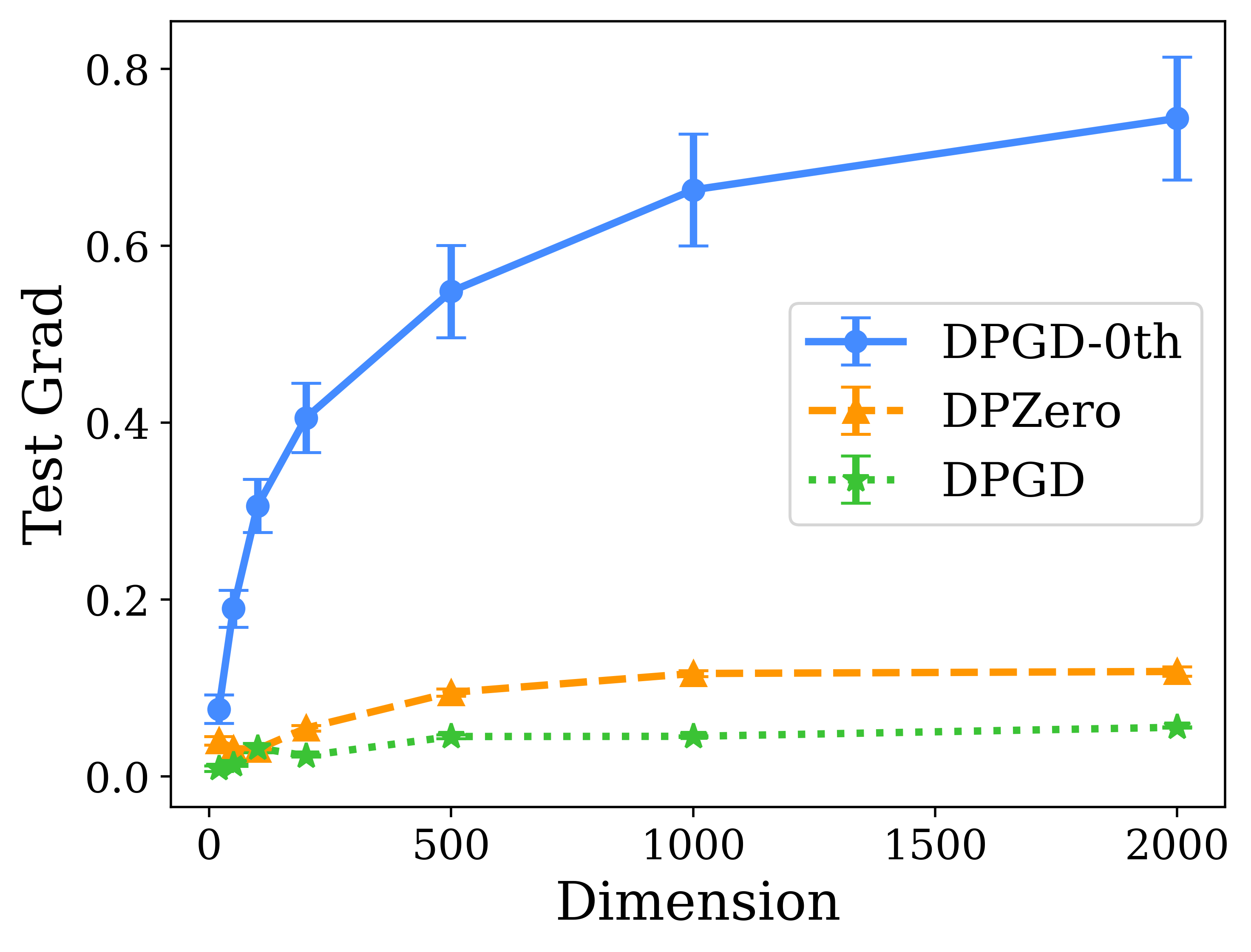}
        \caption{$\tr(A)=\cO(\sqrt{d})$.}
    \end{subfigure}
    \hfill
    \begin{subfigure}{0.32\textwidth}
        \includegraphics[width=\textwidth]{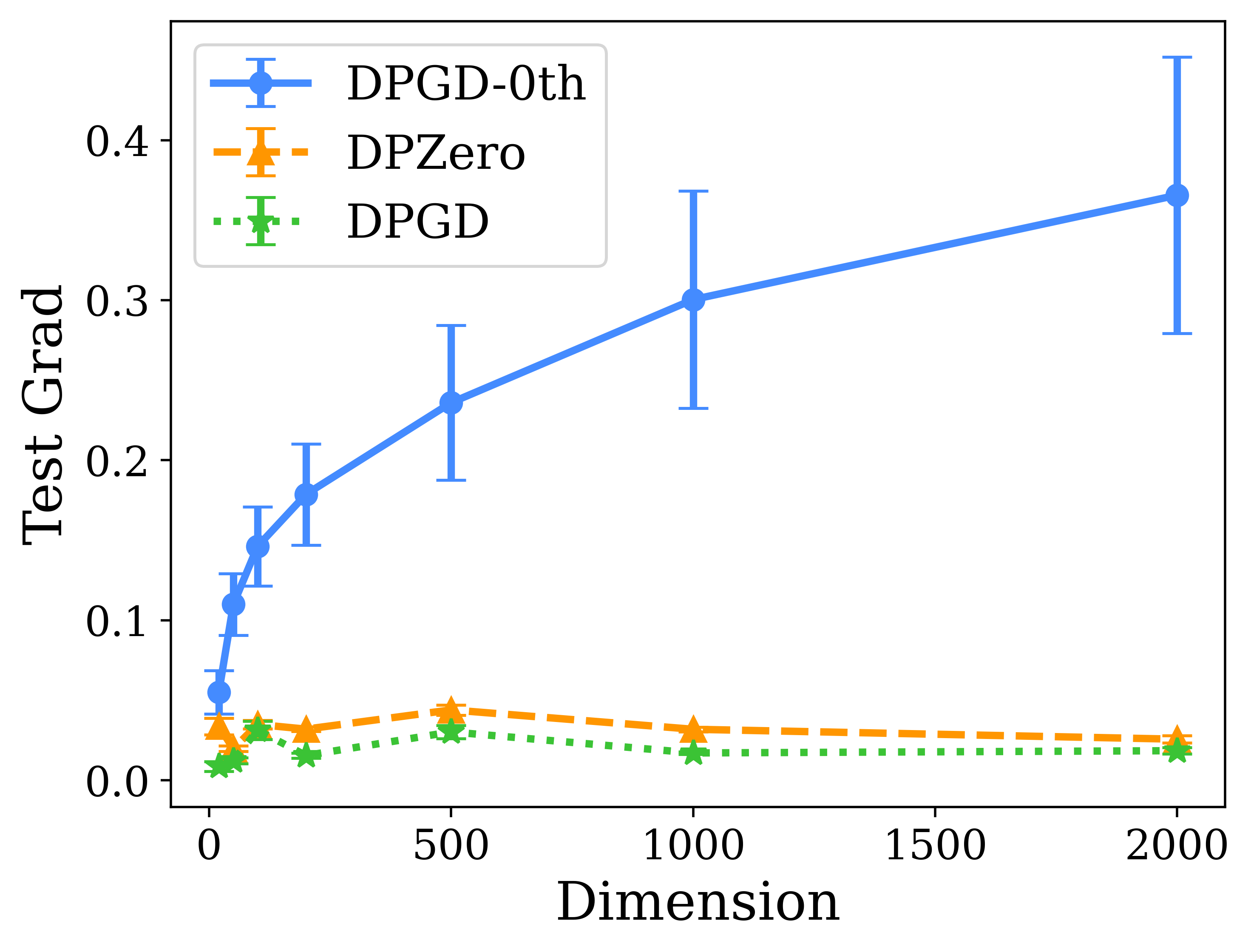}
        \caption{$\tr(A)=\cO(\log d)$.}
    \end{subfigure}
    \caption{Experiments on the quadratic loss with effective rank $\tr(A)$ (Assumption \ref{asp:rank}). For three different modes of the effective rank, we demonstrate how the norm of the train ((a), (b), and (c)) and test ((d), (e), and (f)) gradient depends on the problem dimension. DPGD-0th (Algorithm \ref{algo:d-dependent}) has a strong dimension dependence regardless of the effective rank, while \DPZero (Algorithm \ref{algo:d-free}) achieves dimension-independent performance when effective rank is small (right panel), similar to the standard first-order method DP-GD. Insights for the saturation of DPGD-0th when the dimension increases can be found in Remark \ref{rmk:bound}.}
    \label{fig:toy_grad}
\end{figure}

Our first evaluation compares the performance of Algorithm \ref{algo:d-free} (\DPZerons) with Algorithm \ref{algo:d-dependent} (DPGD-0th) and DP-GD on problems with different effective ranks. In particular, we use a quadratic loss
\begin{equation*}
    \min_{x\in\bR^d} F_S(x) = \frac{1}{2n}\sum_{i=1}^n (x - x_i)^\top A (x - x_i),
\end{equation*}
with three choices of the Hessian matrix, $A$, whose effective ranks are designed to be ${\cal O}(d)$, ${\cal O}(\sqrt{d})$, and ${\cal O}(\log d)$, respectively.
All methods are trained with $(\eps=2, \delta=10^{-6})$-DP on a training set $\{x_1,\cdots, x_n\}$ with $n=10,000$ and evaluated on a test set of the same size. The problem dimension is increased from 20 to 2,000. We perform a parameter search and plot the best gradient norm evaluated on both the training set and the test set in Figure \ref{fig:toy_grad}.
Every method scales with the dimension $d$ when the effective rank is $d$ (as in Figures \ref{fig:toy_grad}(a) and \ref{fig:toy_grad}(d)), and DPGD-0th has the worst performance. When the effective rank reduces to $\log d$ (as in Figures \ref{fig:toy_grad}(c) and \ref{fig:toy_grad}(f)), both DP-GD and \DPZero become nearly dimension-independent, which validates the dimension independence of \DPZerons. Appendix \ref{app:toy} includes more results measuring the loss for both training and test datasets.

\subsection{Fine-tuning on RoBERTa} 

\begin{table}[t]
    \centering
    \caption{Experiments on RoBERTa (355M). We report both mean and standard error of the accuracy (\%) across three random seeds.
    Zero-shot results with no fine-tuning provide lower bounds (taken from \citet{malladi2023fine}), since they can be achieved with no private data.
    MeZO is not private and serves as an upper bound of \DPZerons. 
    LoRA \citep{hu2022lora} and DP-LoRA adopt AdamW \citep{loshchilov2018decoupled} as their optimizer. All first-order methods (AdamW, LoRA, and their private versions) utilize the implementation by \citet{li2022large}. Thanks to \DPZerons, the performance gaps between zeroth and first-order methods are made smaller in private fine-tuning.}
    \begin{tabular}{ccccccc}
        \toprule
        \multirow{2}{*}{Task} & SST-2 & SST-5 & SNLI & MNLI & RTE & TREC \\
        & \multicolumn{2}{c}{------ {\small Sentiment} ------}
        & \multicolumn{3}{c}{------ {\small Natural Language Inference} ------}
        & \multicolumn{1}{c}{---  {\small Topic} ---} \\
        \midrule
        AdamW
        & $93.1\pm 0.3$
        & $56.6\pm 0.3$
        & $86.4\pm 0.8$
        & $81.4\pm 0.9$
        & $83.6\pm 1.6$
        & $95.9\pm 0.2$ \\
        DP-AdamW ($\eps=6$)
        & $91.6\pm 1.2$
        & $49.0\pm 0.3$
        & $81.5\pm 1.4$
        & $76.3\pm 0.9$
        & $77.3\pm 1.1$
        & $89.9\pm 0.8$ \\
        DP-AdamW ($\eps=2$)
        & $90.5\pm 1.5$
        & $47.5\pm 0.5$
        & $74.6\pm 1.0$
        & $70.3\pm 0.8$
        & $72.8\pm 0.9$
        & $85.0\pm 0.5$\\
        \midrule
        LoRA
        & $93.3\pm 0.4$
        & $55.3\pm 1.0$
        & $85.9\pm 0.7$
        & $82.2\pm 0.7$
        & $84.2\pm 0.4$
        & $94.6\pm 0.4$ \\
        DP-LoRA ($\eps=6$)
        & $91.0\pm 1.3$
        & $48.8\pm 0.5$
        & $81.0\pm 1.5$
        & $72.8\pm 1.8$
        & $74.7\pm 1.3$
        & $89.2\pm 0.8$ \\
        DP-LoRA ($\eps=2$)
        & $90.2\pm 1.2$
        & $47.1\pm 0.4$
        & $74.7\pm 1.6$
        & $65.7\pm 0.9$
        & $69.2\pm 1.1$
        & $83.2\pm 2.3$ \\
        \midrule
        MeZO
        & $92.5\pm 0.3$
        & $50.8\pm 0.8$
        & $80.4\pm 0.6$
        & $69.2\pm 0.3$
        & $72.8\pm 1.0$
        & $88.9\pm 0.1$ \\
        \DPZero ($\eps=6$)
        & $92.2\pm 0.3$
        & $49.3\pm 0.6$
        & $77.8\pm 1.0$
        & $67.4\pm 0.3$
        & $71.9\pm 0.9$
        & $87.6\pm 0.9$ \\
        \DPZero ($\eps=2$)
        & $91.8\pm 0.1$
        & $47.1\pm 0.9$
        & $73.6\pm 0.9$
        & $62.7\pm 0.9$
        & $70.4\pm 0.7$
        & $82.0\pm 1.6$ \\
        \midrule
        Zero-Shot
        & 79.0
        & 35.5
        & 50.2
        & 48.8
        & 51.4
        & 32.0 \\
        \bottomrule
    \end{tabular}
    \label{tab:dpzero-mezo}
\end{table}

\begin{table}[t]
    \centering
    \caption{Runtime per iteration (s) and memory consumption (MiB) when fine-tuning RoBERTa (355M) for SST-2.
    Private methods in the table ensure $(\eps=2, \delta=10^{-5})$-DP.
    \DPZero is as memory and runtime efficient as the non-private zeroth-order method MeZO \citep{malladi2023fine}.
    First-order methods DP-AdamW and DP-LoRA (AdamW as the optimizer) both introduce considerable memory and runtime overhead compared to their non-private baselines.
    All first-order methods use the implementation by \citet{li2022large}. Comparisons with other implementations of DP first-order methods can be found in Table \ref{tab:others} in the appendix.}
    \begin{tabular}{ccc}
        \toprule
        Method & Time (s/iter) & Memory (MiB) \\
        \midrule
        AdamW
        & 1.25
        & 15820 \\
        DP-AdamW
        & 2.12
        & 17126 \\
        \midrule
        LoRA
        & 0.821
        & 10366 \\
        DP-LoRA
        & 1.05
        & 10496 \\
        \midrule
        MeZO
        & 0.345
        & 2668 \\
        \DPZerons
        & 0.347
        & 2668 \\
        \bottomrule
    \end{tabular}
    \label{tab:dpero-dpgd}
\end{table}

Next, we follow the experimental setting in \citet{malladi2023fine} and evaluate \DPZero on fine-tuning RoBERTa \citep{liu2019roberta} with 355M parameters across six different sentence classification tasks. We consider the few-shot scenario with 512 samples per class. We report the test accuracy for \DPZero trained with $(\eps=\{2,6\}, \delta=10^{-5})$-DP and non-private zeroth-order baseline MeZO \citep{malladi2023fine} and compare them with first-order methods in Table \ref{tab:dpzero-mezo}.
The memory consumption and per-iteration runtime are shown in Table \ref{tab:dpero-dpgd}.
DP first-order methods introduce additional overhead in both memory and runtime compared to non-DP baselines, with a maximum accuracy drop of 9.5\% when $\eps=6$. However, \DPZero enjoys the same benefit as MeZO on memory efficiency and achieves near-zero additional costs, with at max only a 2.6\% drop in the accuracy.
In our experiments, we notice that the clipping threshold of \DPZero is typically larger compared to DP first-order methods; see Figure \ref{fig:clip_lr} in the appendix. This is consistent with the results in Theorem \ref{thm:d-free} regarding the selection of the clipping threshold $C$.

Compared with DP first-order methods, the main benefit of \DPZero is memory efficiency. Such memory savings are even greater than those observed in non-DP domains, thanks to \DPZerons's efficient clipping (cf. Remark \ref{rmk:clipping}).
We note that the aim of Table \ref{tab:dpero-dpgd} is to explain that DP first-order methods need considerable memory and runtime overhead compared to non-DP methods, while \DPZero does not. Such comparisons happen between DP and non-DP algorithms, respectively. We do not intend to directly compare the runtime of \DPZero to DP first-order methods as it depends on the implementation. In general, zeroth-order methods require more iterations to attain the same level of performance as first-order methods \citep{malladi2023fine}. In our case, DP first-order methods take 1,000 iterations while \DPZero need 10,000 iterations.
This aligns with Theorem \ref{thm:d-free}, which states that \DPZero requires $\cO(r)$ times more iterations than DP-GD to attain the same level of error rate, where $r$ is the effective rank.
However, \DPZero can still be efficient for large models in terms of GPU hours, because first-order methods often require communication-heavy distributed training over more GPUs each with limited memory; see Appendix F.6 of \citet{malladi2023fine}.

\subsection{Fine-tuning on OPT}

\begin{table}[t]
    \centering
    \caption{Experiments on OPT for classification tasks. We report mean and standard error of the accuracy (\%) across three random seeds.}
    \begin{tabular}{ccccccc}
        \toprule
        Model & \multicolumn{2}{c}{OPT-1.3B} & \multicolumn{2}{c}{OPT-2.7B} & \multicolumn{2}{c}{OPT-6.7B} \\
        Task & SST-2 & BoolQ & SST-2 & BoolQ & SST-2 & BoolQ \\
        \midrule
        MeZO
        & $88.2\pm 0.9$
        & $63.2\pm 0.8$
        & $91.9\pm 0.5$
        & $65.3\pm 1.3$
        & $93.0\pm 0.2$
        & $67.4\pm 2.3$ \\
        \midrule
        \DPZero ($\eps=6$)
        & $88.2\pm 1.1$
        & $62.4\pm 0.8$
        & $91.5\pm 1.7$
        & $65.4\pm 1.6$
        & $92.6\pm 0.7$
        & $66.8\pm 1.6$ \\
        \DPZero ($\eps=2$)
        & $86.8\pm 1.7$
        & $61.6\pm 1.1$
        & $90.5\pm 0.9$
        & $63.7\pm 0.7$
        & $90.6\pm 1.3$
        & $63.7\pm 0.7$ \\
        \midrule
        Zero-Shot
        & 53.6
        & 45.3
        & 56.3
        & 47.7
        & 61.2
        & 59.4 \\
        \bottomrule
    \end{tabular}
    \label{tab:opt-cls}
\end{table}

\begin{table}[t]
    \centering
    \caption{Experiments on OPT for generation tasks. We report both mean and standard error of the f1 score (\%) across three random seeds.}
    \begin{tabular}{ccccccc}
        \toprule
        Model & \multicolumn{2}{c}{OPT-1.3B} & \multicolumn{2}{c}{OPT-2.7B} & \multicolumn{2}{c}{OPT-6.7B} \\
        Task & SQuAD & DROP & SQuAD & DROP & SQuAD & DROP \\
        \midrule
        MeZO
        & $73.5\pm 1.2$
        & $24.4\pm 0.2$
        & $76.3\pm 0.8$
        & $25.5\pm 1.2$
        & $79.7\pm 1.1$
        & $28.8\pm 0.7$ \\
        \midrule
        \DPZero ($\eps=6$)
        & $72.6\pm 0.8$
        & $24.7\pm 1.0$
        & $75.7\pm 1.5$
        & $24.6\pm 0.5$
        & $79.5\pm 0.9$
        & $28.4\pm 1.3$ \\
        \DPZero ($\eps=2$)
        & $70.1\pm 1.6$
        & $23.9\pm 1.2$
        & $71.9\pm 1.2$
        & $23.1\pm 0.9$
        & $77.1\pm 1.0$
        & $27.6\pm 0.7$ \\
        \midrule
        Zero-Shot
        & 26.8
        & 11.1
        & 29.8
        & 9.7
        & 36.5
        & 17.8 \\
        \bottomrule
    \end{tabular}
    \label{tab:opt-gen}
\end{table}

We also provide experiments on fine-tuning OPT \citep{zhang2022opt} in the few-shot setting to illustrate the scalability of \DPZerons. On our device (a GPU with 24 GiB memory), the largest model that can fit in for zeroth-order methods is OPT-6.7B, while first-order methods already run out of memory for OPT-1.3B; see Table \ref{tab:opt-memory} in the appendix for a detailed comparison of the memory consumption. The results of \DPZerons's test performance on four downstream tasks are reported in Tables \ref{tab:opt-cls} and \ref{tab:opt-gen}. \DPZero demonstrates the same level of scalability as MeZO, with the ability to fine-tune models wherever MeZO is applicable, and experiences only small drops in performance due to privacy (up to 0.9\% when $\eps=6$). Our results indicate the effectiveness of \DPZero for privately fine-tuning pretrained LLMs and confirm that it does not suffer in high dimensions.

\section{Conclusion}

\DPZero is proposed to privately fine-tune language models in a memory efficient manner by avoiding backpropagation. Theoretically, \DPZero enjoys a provably near dimension-free rate under low-rank structures, clearing the barriers for scaling private fine-tuning of LLMs. When deploying \DPZerons, the elimination of gradient computation not only significantly saves memory, but avoids the overhead in gradient clipping as well. Thus the benefit of using  zeroth-order method is more significant for private optimization. 
The theoretical guarantees on scalability and the practical merits of \DPZero are validated on private fine-tuning of RoBERTa and OPT on several downstream tasks.

\DPZero uses the full batch gradient every iteration, and the analysis guarantees an upper bound on the empirical average gradient assuming smooth nonconvex objectives. We defer extensions to the stochastic mini-batch setting, guarantees on the population loss leveraging the stability of zeroth-order methods \citep{nikolakakis2022black}, and considerations of other assumptions on objective functions like convexity or nonsmoothness to future research.
We believe this work opens up a plethora of other prospective directions in DP zeroth-order optimization. These include, but are not limited to, understanding advantages of the intrinsic noise in zeroth-order gradient estimators, discovering other structural assumptions like the restricted Lipschitz condition \citep{li2022does} for dimension-independent rates, exploring alternative private mechanisms for the privacy guarantees of \DPZero (e.g., the Laplace mechanism for pure DP \citep{tang2024private}), and utilizing momentum \citep{tran2022momentum} or variance reduction \citep{arora2023faster} techniques for an improved rate and computational complexity.

\section*{Acknowledgements}
We are grateful to Gavin Brown and Divyansh Pareek for their insightful discussions regarding the proofs. We also thank Fanny Yang for proofreading of the paper. Additionally, we thank all anonymous reviewers for their valuable suggestions.
L.Z. gratefully acknowledges funding by the Max Planck ETH Center for Learning Systems (CLS).
This work does not relate to the current position of K.T. at Amazon. 
N.H. is supported by ETH research grant funded through ETH Zurich Foundations and Swiss National Science Foundation Project Funding No. 200021-207343.
S.O. is supported in part by the National Science Foundation under grant no.~2019844, 2112471, and 2229876 supported in part by funds provided by the National Science Foundation, by the Department of Homeland Security, and by IBM. Any opinions, findings, and conclusions or recommendations expressed in this material are those of the author(s) and do not necessarily reflect the views of the National Science Foundation or its federal agency and industry partners.

\section*{Impact Statement}
A major concern with current use-cases of large language models is privacy of the fine-tuning data. Fine-tuning on in-domain data greatly improves performance and is now a default option. However, in-domain data can contain sensitive information about the participants of the dataset. 
The proposed solution makes privacy protection easier, consuming less resources, thus democratizing the use of privacy enhancing technology beyond those who have access to large amounts of resources.

\bibliography{ref}
\bibliographystyle{plainnat}

\newpage
\appendix

\section{Additional Related Works}
\label{app:review}

\paragraph{Zeroth-order optimization.}
\citet{nesterov2017random} pioneered the formal analysis of the convergence rate of zeroth-order methods, i.e., zeroth-order (stochastic) gradient descent (ZO-SGD) that replaces gradients in SGD by their zeroth-order estimators.
This is motivated by renewed interest in adopting zeroth-order methods in industry due to, for example, fast differentiation techniques that require storing all intermediate computations reaching the memory limitations. Their findings on nonsmooth convex functions are later refined by \citet{shamir2017optimal}. \citet{lin2022gradient} contributed to further advancements on nonsmooth nonconvex functions recently. Additionally, \citet{ghadimi2013stochastic} extended the results for smooth functions into the stochastic setting. Zeroth-order methods have also been expanded to incorporate approaches such as coordinate descent \citep{lian2016comprehensive}, conditional gradient descent \citep{balasubramanian2018zeroth}, variance reduction techniques \citep{liu2018zeroth, fang2018spider, ji2019improved}, SignSGD \citep{liu2018signsgd}, and minimax optimization \citep{wang2022zeroth}.
Additionally, zeroth-order methods find applications in fields such as black-box machine learning \citep{grill2015black, chen2017zoo, chen2019zo}, bandit optimization \citep{flaxman2005online, shamir2017optimal}, reinforcement learning \citep{salimans2017evolution, choromanski2018structured, mania2018simple}, and distributed learning~\citep{fang2022communication, zelikman2023just, xu2023federated} to reduce communication overhead.

These well-established results indicate that the norm of the zeroth-order gradient scales with the dimension $d$ and the required stepsize is $d$-times smaller than that in first-order gradient-based methods, leading to a $d$-times increase in the final time complexity. For example, the convergence rate of gradient descent for minimizing a smooth convex function $f(x)$ is $f(\bar x_T) - \min_{x\in\bR^d} f(x) \leq \cO(1/T)$ where $\bar x_T$ is the average of $T$ iterates \citep{nesterov2003introductory}, while the zeroth-order method only achieves a rate $\cO(d/T)$. It has been shown that such dimension dependence of zeroth-order methods is inevitable without additional structures \citep{wibisono2012finite, duchi2015optimal}.

There are several recent works that relax the dimension dependence in zeroth-order methods leveraging problem structures. \citet{wang2018stochastic} and \citet{cai2022zeroth} assumed certain sparsity structure in the problem and applied sparse recovering algorithms, e.g. LASSO, to obtain sparse gradients from zeroth-order observations. \citet{golovin2020gradientless} analyzed the case when the objective function is $f(Px)$ for some low-rank projection matrix $P$. These works either require the objective or the algorithm to be modified to have a dimension-independent guarantee. \citet{balasubramanian2018zeroth} demonstrated that ZO-SGD can directly identify the sparsity of the problem and proved a dimension-independent rate when the support of gradients remains unchanged \citep{cai2022zeroth}. Recently, \citet{yue2023zeroth} and \citet{malladi2023fine} relaxed the dependence on dimension $d$ to a quantity related to the trace of the loss's Hessian.

\paragraph{Differentially private optimization.}
Previous works on DP optimization mostly center around first-order methods. For constrained convex problems, tight utility guarantees on both excess empirical \cite{chaudhuri2011differentially, bassily2014private, wu2017bolt, zhang2017efficient, wang2017differentially} and population \cite{bassily2019private, bassily2020stability, feldman2020private, asi2021private, kulkarni2021private, zhang2022bring} losses are well-understood. As an example, a typical result states that the optimal rate on the excess empirical loss for convex objectives is $\Theta(\sqrt{d\log(1/\delta)}/(n\eps))$, where $(\eps,\delta)$ are privacy parameters, $n$ is the number of samples, and $d$ is the dimension. The dimension dependence is fundamental as both the upper bound \cite{bassily2014private}, using differentially private (stochastic) gradient descent (DP-GD) introduced in \cite{song2013stochastic}, and the lower bound \cite{bassily2014private}, using a reduction to finger printing codes, have the same dependence.

When the problem is nonconvex, i.e., the setting of our interest, DP-GD achieves a rate of $\cO(\sqrt{d\log(1/\delta)}/(n\eps))$ on the squared norm of the gradient \citep{wang2017differentially, zhou2020private}. We show that \DPZero matches this rate with access only to the zeroth-order oracle in Theorem~\ref{thm:d-free}.
Given access to the first-order oracle, it has been recently shown that such rate can be improved to $\cO((\sqrt{d\log(1/\delta)}/(n\eps))^{4/3})$ leveraging momentum \citep{tran2022momentum} or variance reduction techniques \citep{arora2023faster}.
Further, the convergence to second-order stationary points in nonconvex DP optimization is studied in \citep{ganesh2023private}. 
Recent advancements in DP optimization have also delved into the understanding of the potential of public data \citep{ganesh2023public, lowy2023optimal}, the convergence properties of per-sample gradient clipping \citep{yang2022normalized, fang2023improved, koloskova2023revisiting, zhang2023differentially}, and the relaxation of the dimension dependence in the utility upper bound \citep{ma2022dimension, li2022does}.

Early works established that dimension-independent rates can be attained when the gradients lie in some fixed low-rank subspace \citep{jain2014near, song2021evading}. By first identifying this gradient subspace, dimension-independent algorithms can be designed \citep{zhou2021bypassing, kairouz2021nearly}. 
Closest to our result is \citet{song2021evading}, which demonstrated that the rate of DP-GD for smooth nonconvex optimization can be improved  to $\cO(\sqrt{r\log(1/\delta)}/(n\eps))$ under certain structural assumptions, i.e., for generalized linear models (GLMs) with a rank-$r$ feature matrix. \DPZero matches this result with access only to the zeroth-order oracle in Theorem~\ref{thm:d-free} for more general problems beyond low-rank GLMs.
Our result is inspired by \citet{li2022does} that introduced a relaxed Lipschitz condition for the gradients and provided dimension-free bounds when the loss is convex and the relaxed Lipschitz parameters decay rapidly. Similarly, \citet{ma2022dimension} suggested that the dependence on $d$ in the utility upper bound for DP stochastic convex optimization can be improved  to a dependence on the trace of the Hessian.
There is also a line of work on DP Riemannian optimization that achieves utility bounds dependent on the intrinsic dimension of the manifold \citep{reimherr2021differential, utpala2023differentially, utpala2023improved, han2024differentially}. Further exploration of its connection to the low-rank structure in this work is reserved for future.

Literature on DP optimization beyond first-order methods remains less explored. \citet{ganesh2023faster} investigated the potential of second-order methods for DP convex optimization. \citet{gratton2021privacy} proposed to use zeroth-order methods for DP-ADMM \citep{huang2019dp} in distributed learning. They state that the noise intrinsic in zeroth-order methods is enough to provide privacy guarantee and rely on the output of zeroth-order methods being Gaussian, which is unverified to the best of our knowledge. \citet{liu2023generating} proposed a private genetic algorithm based on zeroth-order optimization heuristics for private synthetic data generation.
Recently, \citet{zhang2024private} studied the problem of private zeroth-order nonsmooth nonconvex optimization and achieved a rate that depends on the dimension $d$.
After the workshop version of our paper \citep{zhang2023dpzero} was released, \citet{tang2024private} concurrently discovered the same algorithm as \DPZero (up to a minor difference in how $u_t$ is drawn) and showed empirical benefits when applied to fine-tuning OPT models but without theoretical analysis.
Also building upon the workshop version of our paper, \citet{liu2024differentially} introduced DP-ZOSO, a stage-wise zeroth-order method with an additional quadratic regularizer.
With extra hyper-parameters to be tuned, DP-ZOSO demonstrates further empirical gain over \DPZerons. However, \citet{liu2024differentially} only provided dimension-dependent guarantees.
As far as we are aware, no prior studies have addressed the challenge of deriving a dimension-independent rate in DP zeroth-order optimization.

\paragraph{Other relevant works.}
\citet{du2023dp} introduced a novel noise adding mechanism that happens in the forward pass of training. Although the algorithm is termed ``DP-Forward'', it still requires backpropagation for training.
In a separate context, \citet{bu2023zero} coincidentally proposed DP-ZeRO, a term identical to ours, denoting a private version of the zero redundancy optimizer (ZeRO) by \citet{rajbhandari2020zero} that aims at enhancing memory efficiency in data and model parallelisms.
DP prompt tuning \citep{hong2024dpopt} and DP in-context learning \citep{tang2023privacy} provide resource-efficient alternatives compared to private fine-tuning, enabling the private adaptation of pretrained LLMs to specific tasks without extensive computational demands. Investigating how \DPZero performs relative to these methods and whether different techniques can be integrated is an interesting research problem.
More recently, \citet{chen2024privacy} proposed differentially private algorithms that enforce weight flatness to improve generalization, which can also handle zeroth-order oracles.
There is also another line of research \citep{guha2013nearly, tossou2016algorithms, shariff2018differentially} on the design of differentially private algorithms for the stochastic bandit problem based on upper confidence bound \citep{auer2002finite}. Their algorithms are not directly applicable to our setting.

\section{Additional Experiment Details}
\label{app:exp}

In this section, we discuss our experimental setups in detail.

\subsection{Synthetic Example on a Quadratic Loss}
\label{app:toy}

\begin{figure}[t]
    \centering
    \begin{subfigure}{0.32\textwidth}
        \includegraphics[width=\textwidth]{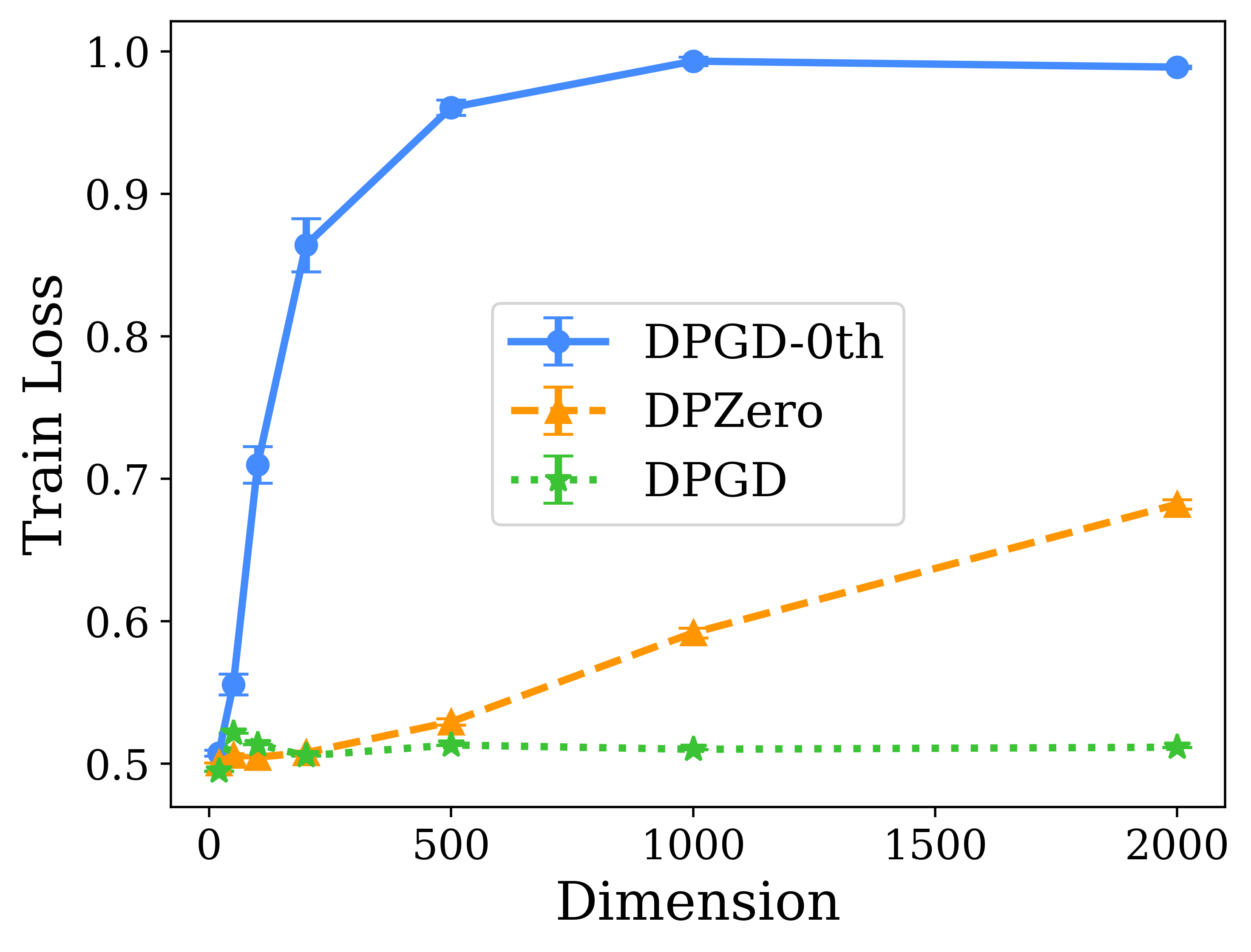}
        \caption{$\tr(A)=\cO(d)$.}
    \end{subfigure}
    \hfill
    \begin{subfigure}{0.32\textwidth}
        \includegraphics[width=\textwidth]{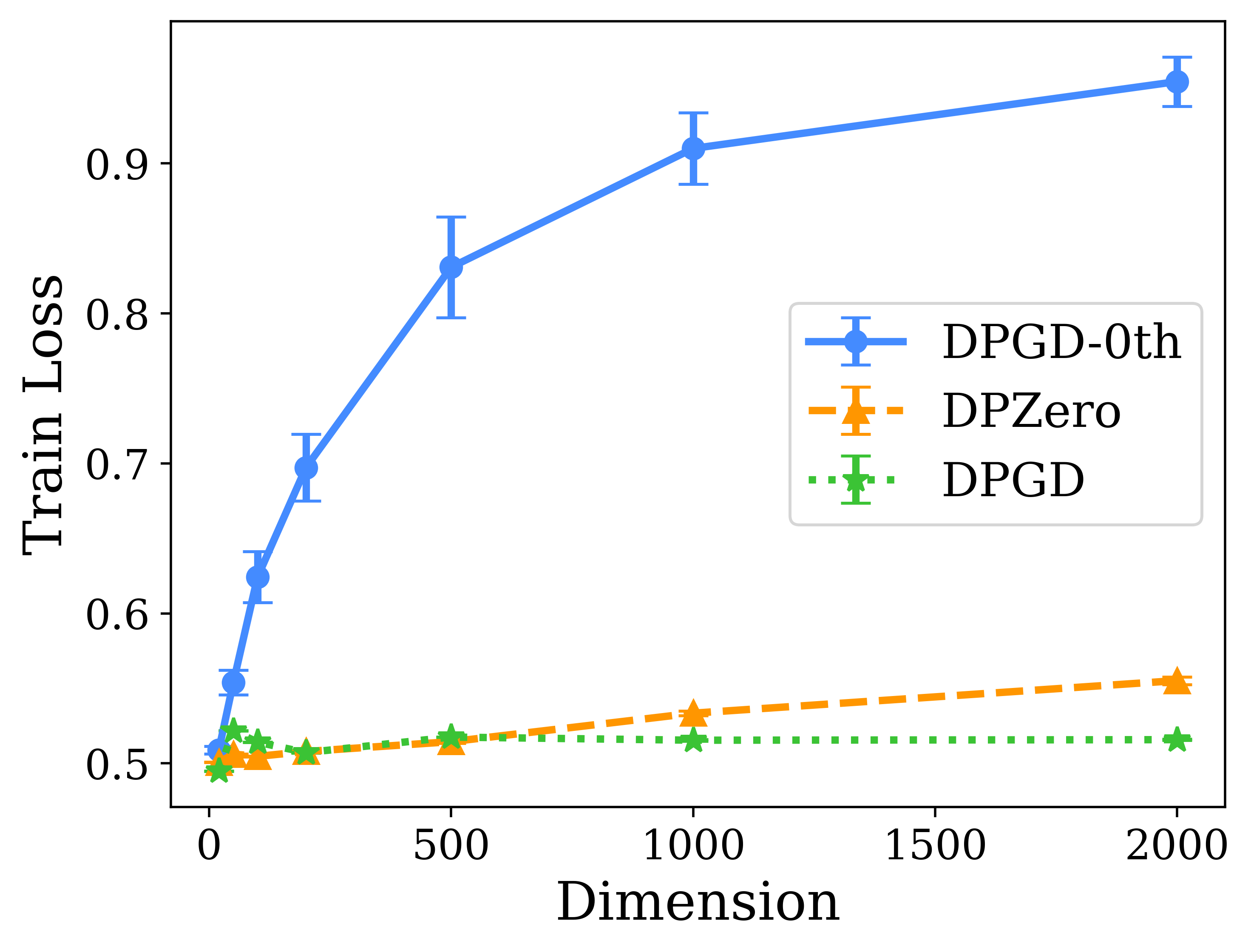}
        \caption{$\tr(A)=\cO(\sqrt{d})$.}
    \end{subfigure}
    \hfill
    \begin{subfigure}{0.32\textwidth}
        \includegraphics[width=\textwidth]{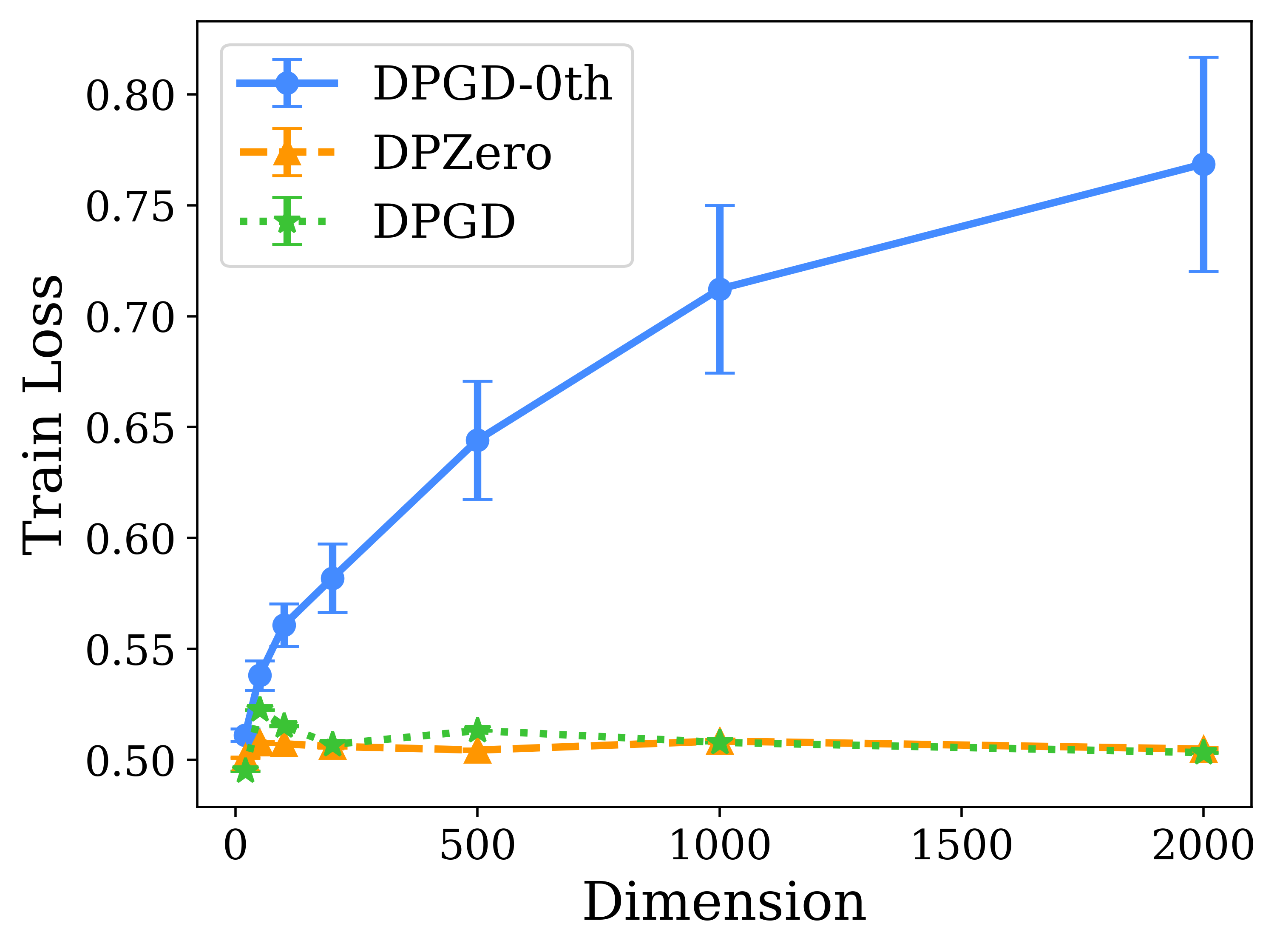}
        \caption{$\tr(A)=\cO(\log d)$.}
    \end{subfigure}

    \begin{subfigure}{0.32\textwidth}
        \includegraphics[width=\textwidth]{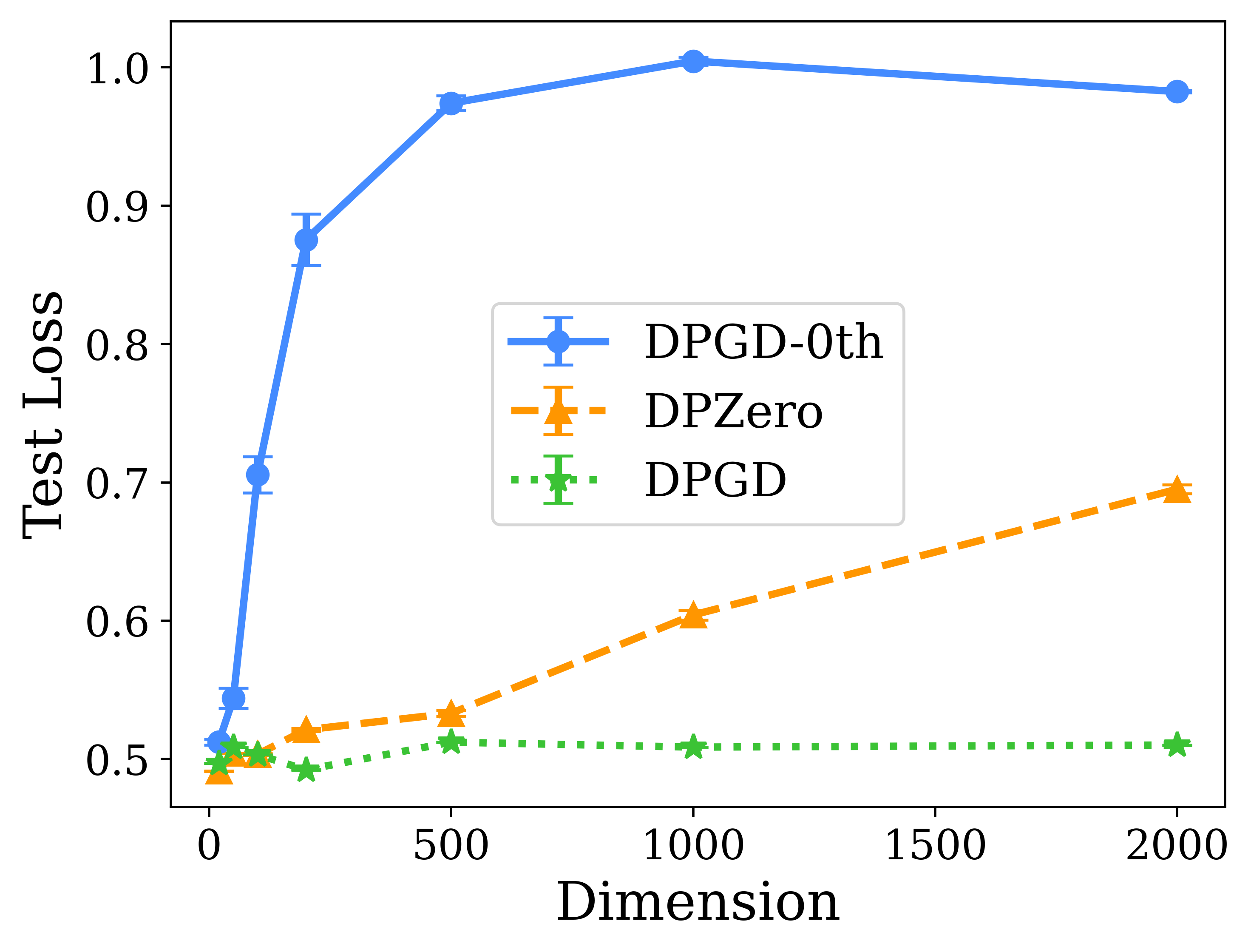}
        \caption{$\tr(A)=\cO(d)$.}
    \end{subfigure}
    \hfill
    \begin{subfigure}{0.32\textwidth}
        \includegraphics[width=\textwidth]{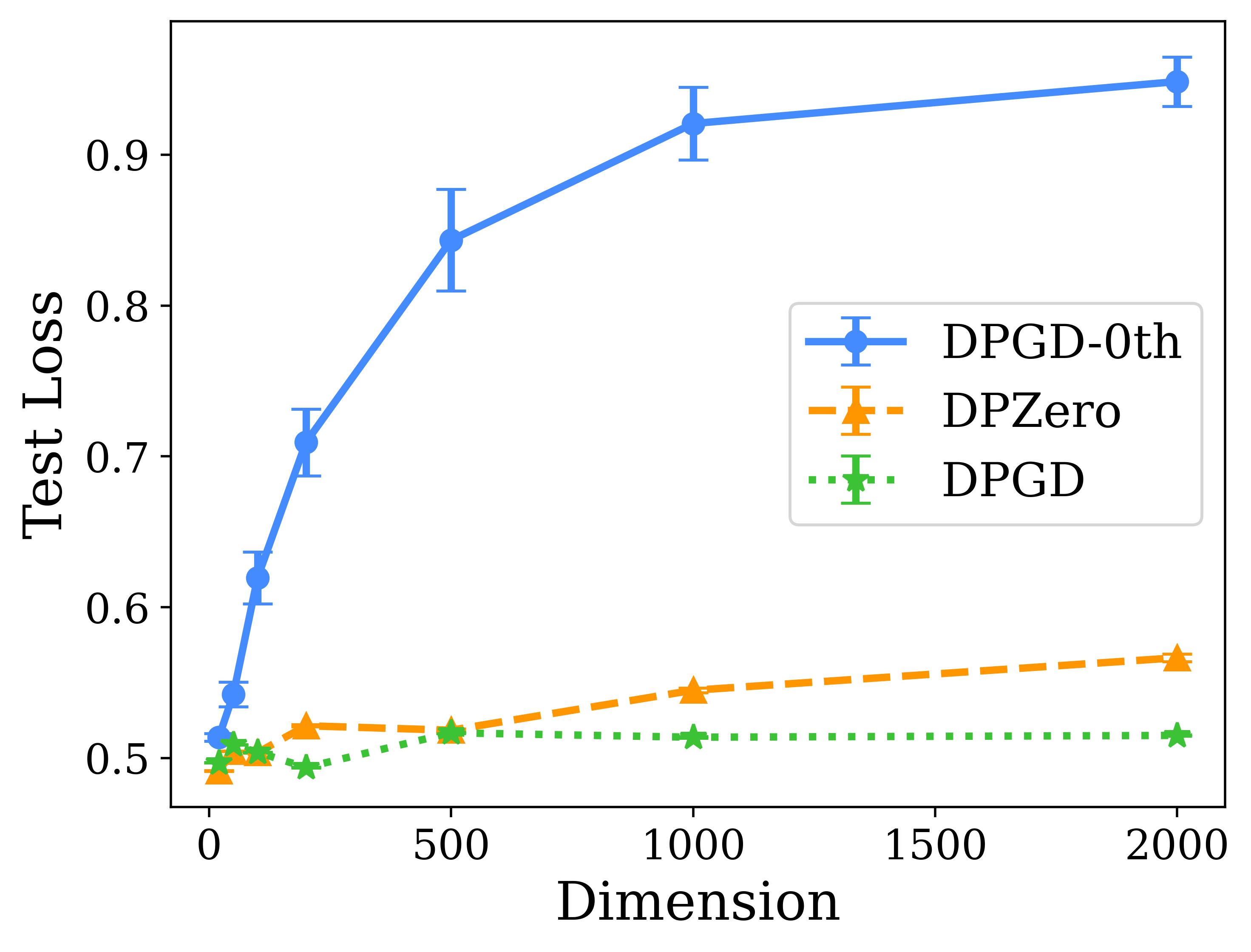}
        \caption{$\tr(A)=\cO(\sqrt{d})$.}
    \end{subfigure}
    \hfill
    \begin{subfigure}{0.32\textwidth}
        \includegraphics[width=\textwidth]{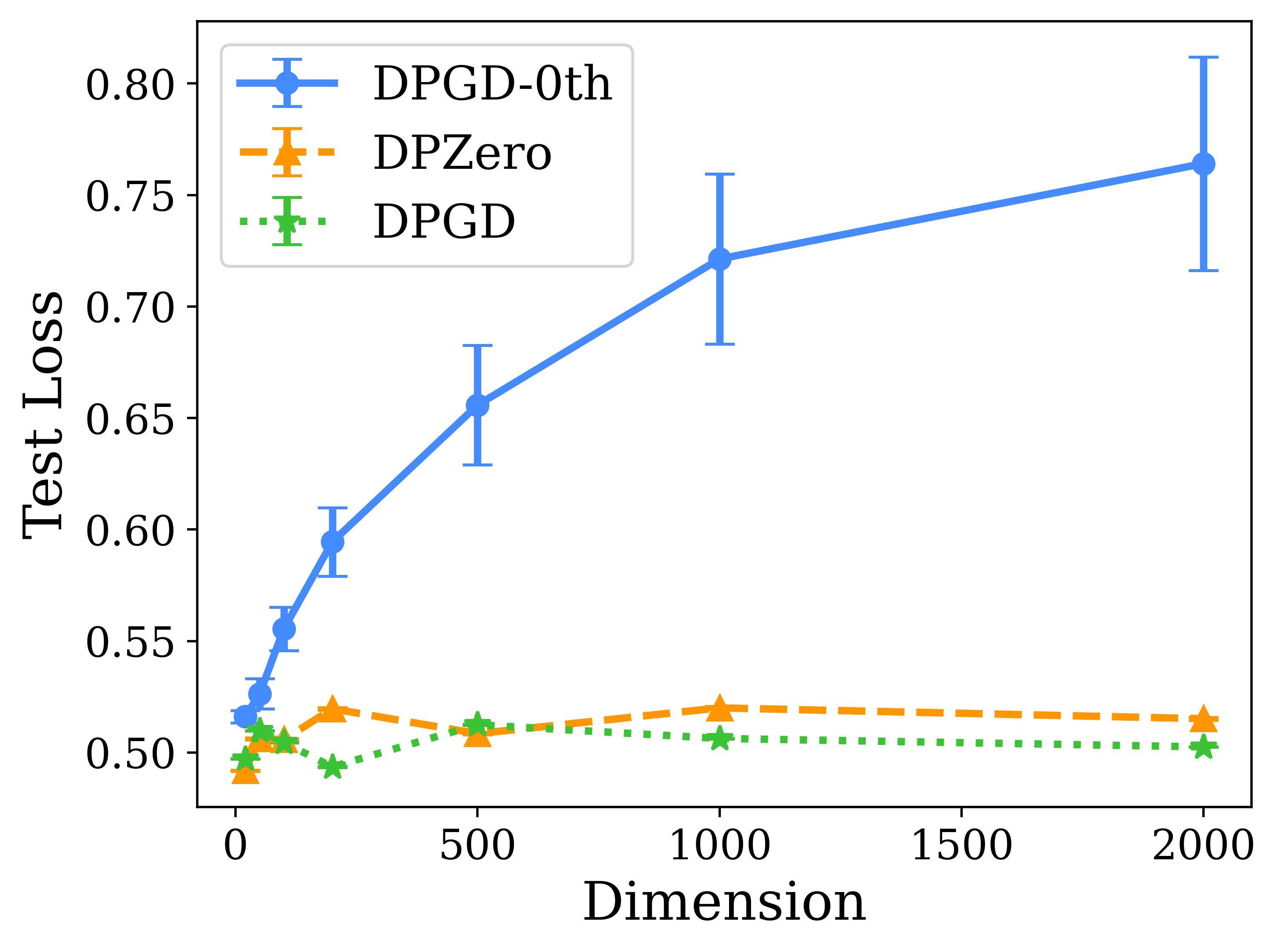}
        \caption{$\tr(A)=\cO(\log d)$.}
    \end{subfigure}
    \caption{Experiments on the quadratic loss with effective rank $\tr(A)$. For three different modes, we increase the dimension and report the best loss evaluated on both training set ((a), (b), and (c)) and test set ((d), (e), and (f)).}
    \label{fig:toy_loss}
\end{figure}

Given a training dataset $S=\{x_1,\cdots,x_n\}$ with each coordinate of $x_i\in\bR^d$ sampled independently from the Gaussian $\cN(1,1)$, we implement \DPZero on the quadratic loss
\begin{equation*}
    \min_{x\in\bR^d} F_S(x) = \frac{1}{2n}\sum_{i=1}^n (x - x_i)^\top A (x - x_i),
\end{equation*}
with a fixed Hessian $A\in\bR^{d\times d}$ that can be designed to implement different effective ranks $r=\tr(A)/\norm{A}_2$ according to Assumption \ref{asp:rank}.
We compare \DPZero (Algorithm \ref{algo:d-free}) with DPGD-0th (Algorithm \ref{algo:d-dependent}) and first-order algorithm DP-GD on three patterns of the effective rank
\begin{align*}
    & \text{(a)} \; \tr(A) = \cO(d): \; A = \text{diag}\{1, 1, \cdots, 1\}; \\
    \\[-0.15in]
    & \text{(b)} \; \tr(A) = \cO(\sqrt{d}): \; A = \text{diag}\{1, 1/\sqrt{2}, \cdots, 1/\sqrt{d}\}; \\
    \\[-0.12in]
    & \text{(c)} \; \tr(A) = \cO(\log d): \; A = \text{diag}\{1, 1/2, \cdots, 1/d\}.
\end{align*}
Since $\norm{A}_2=1$ in all cases, the effective rank $r=\tr(A)$. For each mode of the effective rank, we increase the problem dimension $d$ from 20 to 2000. We perform a parameter search and plot the best gradient norm evaluated on the training set and a test set that follows the same distribution of the training set in Figure \ref{fig:toy_grad}. For completeness, we also plot both training and test loss in Figure \ref{fig:toy_loss}. The key hyper-parameters used for the experiments are summarized in Table \ref{tab:toy-param}.

\begin{table}[ht]
    \centering
    \caption{Hyper-parameters used for the synthetic example on the quadratic loss. The number of iterations, stepsize, and clipping threshold are optimized through a grid search using given values. Other parameters are fixed to the values.}
    \begin{tabular}{cc}
        \toprule
        Hyper-parameters & Values \\
        \midrule
        Number of training samples & 10000 \\
        Number of test samples & 10000 \\
        Dimension $d$ & $\{20, 50, 100, 200, 500, 1000, 2000\}$ \\
        Privacy & $(\eps=2, \delta=10^{-6})$ \\
        Smoothing $\lambda$ (\DPZero and DPGD-0th)  & $10^{-4}$ \\
        \midrule
        Number of iterations & $\{10, 20, 40, 80, 160, 320, 640, 1280, 2560, 5120\}$ \\
        Stepsize & $\{10^{-5}, 3\times10^{-5}, 10^{-4}, 3\times10^{-4}, 0.001, 0.003, 0.01, 0.03, 0.1, 0.3, 1\}$ \\
        Clipping & $\{0.1, 0.3, 1, 3, 10, 30, 100, 300\}$ \\
        \bottomrule
    \end{tabular}
    \label{tab:toy-param}
\end{table}

In all figures, we observe that the performance of each method is improved with smaller effective rank. For each pattern of the effective rank, DPGD-0th (Algorithm \ref{algo:d-dependent}) has the worst performance, while DP-GD consistently achieves the best results. When the effective rank is $d$, every method scales with the dimension. When the effective rank improves to $\log d$, \DPZero and DP-GD become nearly dimension-independent, and \DPZero matches the performance of the first-order method DP-GD. This validates our theoretical findings, as summarized in Table \ref{tab:compare}, and demonstrates the effectiveness of \DPZerons. We want to mention that a similar set of experiments to verify the performance of DP-GD when dimension increases was also provided by \citet{li2022does}. Our implementation of this synthetic example is based on their code.

\subsection{Private Fine-Tuning of the Language Model RoBERTa}

We follow experiment settings in \citet{malladi2023fine} to evaluate the performance of \DPZero in the private fine-tuning of RoBERTa \citep{liu2019roberta} across six sentence classification datasets: SST-2 and SST-5 \citep{socher2013recursive} for sentiment classification, SNLI \citep{bowman2015large}, MNLI \citep{williams2018broad}, and RTE \citep{dagan2005pascal, haim2006second, giampiccolo2007third, bentivogli2009fifth, wang2018glue} for natural language inference tasks, and TREC \citep{voorhees2000building} for topic classification. In our experiments, we employ the same prompts as used in \citet{malladi2023fine}, which are adapted from \citet{gao2021making}.

\paragraph{Implementation details.} Our implementation of \DPZero utilizes the codebase provided by \citet{malladi2023fine}. For easier implementation and better memory efficiency, we follow \citet{malladi2023fine} to sample the zeroth-order direction $u_t$ from the Gaussian distribution $\cN(0, \rI_d)$ instead of the sphere as stated in Algorithm \ref{algo:d-free}. Table \ref{tab:noise} compares the performance of \DPZero on SST-2 and SST-5 when $u_t$ is sampled from Gaussian and sphere. Given the negligible differences between the two sampling strategies, we continue with the Gaussian sampling for its simplicity. Another strategy in the implementation to further save memory involves storing only the random seed for the generation of the zeroth-order direction $u_t$, rather than the complete vector, and regenerating this direction whenever it's used. Although \DPZero is stated for the full-batch case in Algorithm \ref{algo:d-free}, we adopt a mini-batch setting in the experiments.

\begin{table}[ht]
    \centering
    \caption{Test accuracy (mean \% $\pm$ standard error \%) of \DPZero when fine-tuning RoBERTa (355M) for SST-2 and SST-5 with $(\eps=\{2,6\}, \delta=10^{-5})$-DP and using different sampling strategies of the zeroth-order update direction $u_t$. No notable difference is observed when $u_t$ is sampled from either the Gaussian distribution or the Euclidean sphere.}
    \begin{tabular}{ccccc}
        \toprule
        \multirow{2}{*}{Randomness}
        & \multicolumn{2}{c}{Gaussian}
        & \multicolumn{2}{c}{Sphere} \\
        \cline{2-5}
        & $\eps=6$ & $\eps=2$
        & $\eps=6$ & $\eps=2$ \\
        \midrule
        SST-2
        & $92.2\pm0.3$
        & $91.8\pm0.1$
        & $91.8\pm0.1$
        & $91.5\pm0.5$ \\
        \midrule
        SST-5
        & $49.3\pm0.6$
        & $47.1\pm0.9$
        & $49.9\pm1.3$
        & $47.4\pm1.3$ \\
        \bottomrule
    \end{tabular}
    \label{tab:noise}
\end{table}

\paragraph{Hyper-parameter selection.} For all experiments, we employ a few-shot setting, utilizing 512 samples per class in the training set, randomly selected from the original dataset. The test set is also composed of 1000 randomly selected samples from the original test dataset.
We fix the total number of iterations to be 10000, the batch size to be 64, and the smoothing parameter $\lambda=10^{-3}$ for both \DPZero and the non-private zeroth-order baseline MeZO \citep{malladi2023fine}. Note that the original results of MeZO reported in \citet{malladi2023fine} run for 100000 iterations.
A parameter search of the learning rate for MeZO is performed, and it turns out $10^{-6}$ consistently yields the best performance. We then fix the learning rate to be $10^{-6}$ for \DPZero and only search for the clipping threshold for different tasks. There is potential for improved performance by well-optimizing other hyper-parameters, such as the learning rate and the number of iterations.
All results are averaged through three different random seeds $\{42, 13, 21\}$ for selecting the few-shot datasets. The hyper-parameters used for our language model fine-tuning experiments are summarized in Table \ref{tab:llm-param}.

\begin{table}[t]
    \centering
    \caption{Hyper-parameters used in \DPZero for fine-tuning RoBERTa (355M). We only optimize the clipping threshold through a grid search from 50 to 400. Other parameters are fixed to the listed values.}
    \begin{tabular}{cc}
        \toprule
        Hyper-parameters & Values \\
        \midrule
        Number of training samples & 512 per class \\
        Number of test samples & 1000 \\
        Number of iterations & 10000 \\
        Batch size & 64 \\
        Privacy & $(\eps=\{2,6\}, \delta=10^{-5})$ \\
        Smoothing $\lambda$  & $10^{-3}$ \\
        Stepsize & $10^{-6}$ \\
        Clipping & \{50, 100, 150, 200, 250, 300, 400\} \\
        \bottomrule
    \end{tabular}
    \label{tab:llm-param}
\end{table}

\paragraph{Comparison with first-order methods.} Regarding the first-order methods, we use the same few-shot setting as before, and the results are averaged over three different random seeds $\{42, 13, 21\}$. The number of iterations is set to be 1000, and the batch size is fixed to be 64. The learning rate is optimized by a grid search over $\{5\times10^{-5}, 10^{-4}, 5\times10^{-4}, 10^{-3}\}$, and the clipping threshold is optimized by a grid search over $\{0.1, 0.5, 1, 10\}$. In the experiments for LoRA, we set the rank to be 8 and the LoRA $\alpha=16$, which remain the same as in the original paper  \citep{hu2022lora}. All other parameters are fixed to their default values. In addition to \citet{li2022large} in Tables \ref{tab:dpzero-mezo} and \ref{tab:dpero-dpgd}, we also compare the performance of \DPZero to two other implementations of DP first-order methods, \citet{yu2022differentially} and \citet{bu2023differentially}, in Table \ref{tab:others}. \DPZero achieves similar performance on SST-2 as DP first-order methods, while saving a significant amount of memory. Such memory savings are greater than the savings of MeZO \citep{malladi2023fine} over AdamW \citep{loshchilov2018decoupled} and LoRA \citep{hu2022lora} (AdamW as the optimizer), due to DPZero’s simpler clipping (cf. Remark \ref{rmk:clipping}).

\begin{table}[ht]
    \centering
    \caption{Test accuracy (\%), runtime per iteration (s), and memory consumption (MiB) when fine-tuning RoBERTa (355M) for SST-2.
    Private methods in the table guarantee $(\eps=2, \delta=10^{-5})$-DP. A fair comparison is ensured among \citet{li2022large} and \citet{bu2023differentially}, as they are implemented using the same codebase. It is important to note, however, that they cannot be directly compared with those of \citet{yu2022differentially}, due to differences in implementations. LoRA \citep{hu2022lora} and DP-LoRA use the first-order method AdamW \citep{loshchilov2018decoupled} as the optimizer. DP first-order methods introduce considerable overheads in both memory and runtime compared to their non-DP baselines, while \DPZero does not, thanks to its novel design of the efficient clipping. Also note that such comparisons between DP and non-DP algorithms are fair since they use the same codebase.}
    \begin{tabular}{cccc}
        \toprule
        Method & Acc. & Time (s/iter) & Memory (MiB) \\
        \midrule
        AdamW \citep{li2022large}
        & 93.1
        & 1.25
        & 15820 \\
        DP-AdamW \citep{li2022large}
        & 90.5
        & 2.12
        & 17126 \\
        DP-AdamW \citep{bu2023differentially}
        & 91.1
        & 1.55
        & 18372 \\
        \midrule
        AdamW \citep{yu2022differentially}
        & 94.4
        & 0.425
        & 16960 \\
        DP-AdamW \citep{yu2022differentially}
        & 92.3
        & 2.33
        & 21494 \\
        \midrule
        LoRA \citep{li2022large}
        & 93.3
        & 0.821
        & 10366 \\
        DP-LoRA \citep{li2022large}
        & 90.2
        & 1.05
        & 10496 \\
        \midrule
        LoRA \citep{yu2022differentially}
        & 94.3
        & 0.301
        & 11512 \\
        DP-LoRA \citep{yu2022differentially}
        & 91.3
        & 0.332
        & 11522 \\
        \midrule
        MeZO
        & 92.5
        & 0.345
        & 2668 \\
        \DPZerons
        & 91.8
        & 0.347
        & 2668 \\
        \bottomrule
    \end{tabular}
    \label{tab:others}
\end{table}

\paragraph{Comparison with DPGD-0th.} In the previous synthetic example, DPGD-0th suffers from worse performance in larger dimensions. To provide a more complete comparison, we also evaluate the performance of DPGD-0th (Algorithm \ref{algo:d-dependent}) for fine-tuning RoBERTa-large on the dataset TREC with a privacy budget of $\varepsilon=2$ (the same setting as Table \ref{tab:dpzero-mezo}). DPGD-0th only achieves a test accuracy of 67.0, while \DPZero attains 82.0. Moreover, DPGD-0th still requires per-sample clipping of the gradient estimator, which is costly in both memory and runtime compared to \DPZerons.

\begin{figure}[t]
    \centering
    \begin{subfigure}{0.32\textwidth}
        \includegraphics[width=\textwidth]{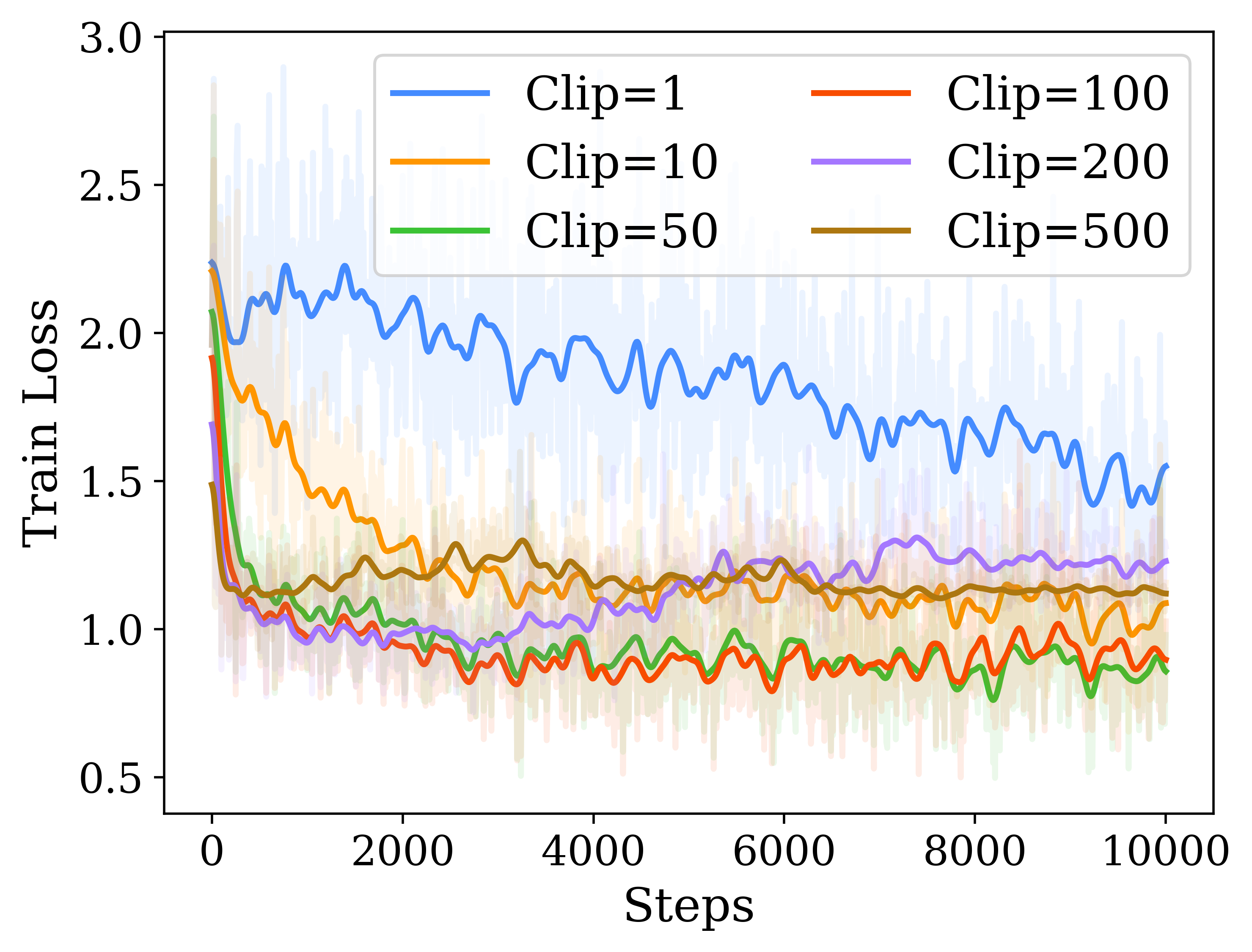}
        \caption{Training Loss.}
    \end{subfigure}
    \hfill
    \begin{subfigure}{0.32\textwidth}
        \includegraphics[width=\textwidth]{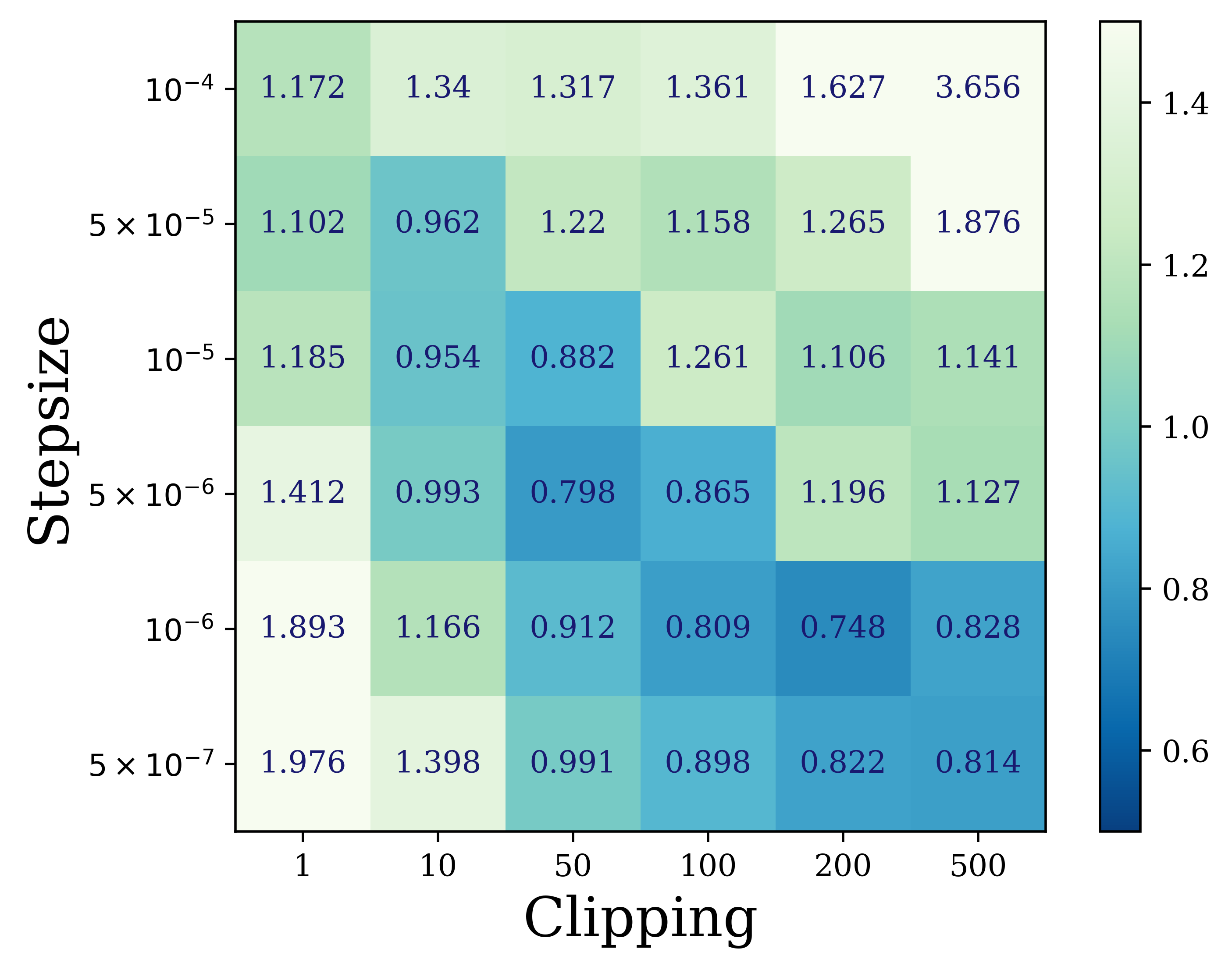}
        \caption{Test Loss.}
    \end{subfigure}
    \hfill
    \begin{subfigure}{0.32\textwidth}
        \includegraphics[width=\textwidth]{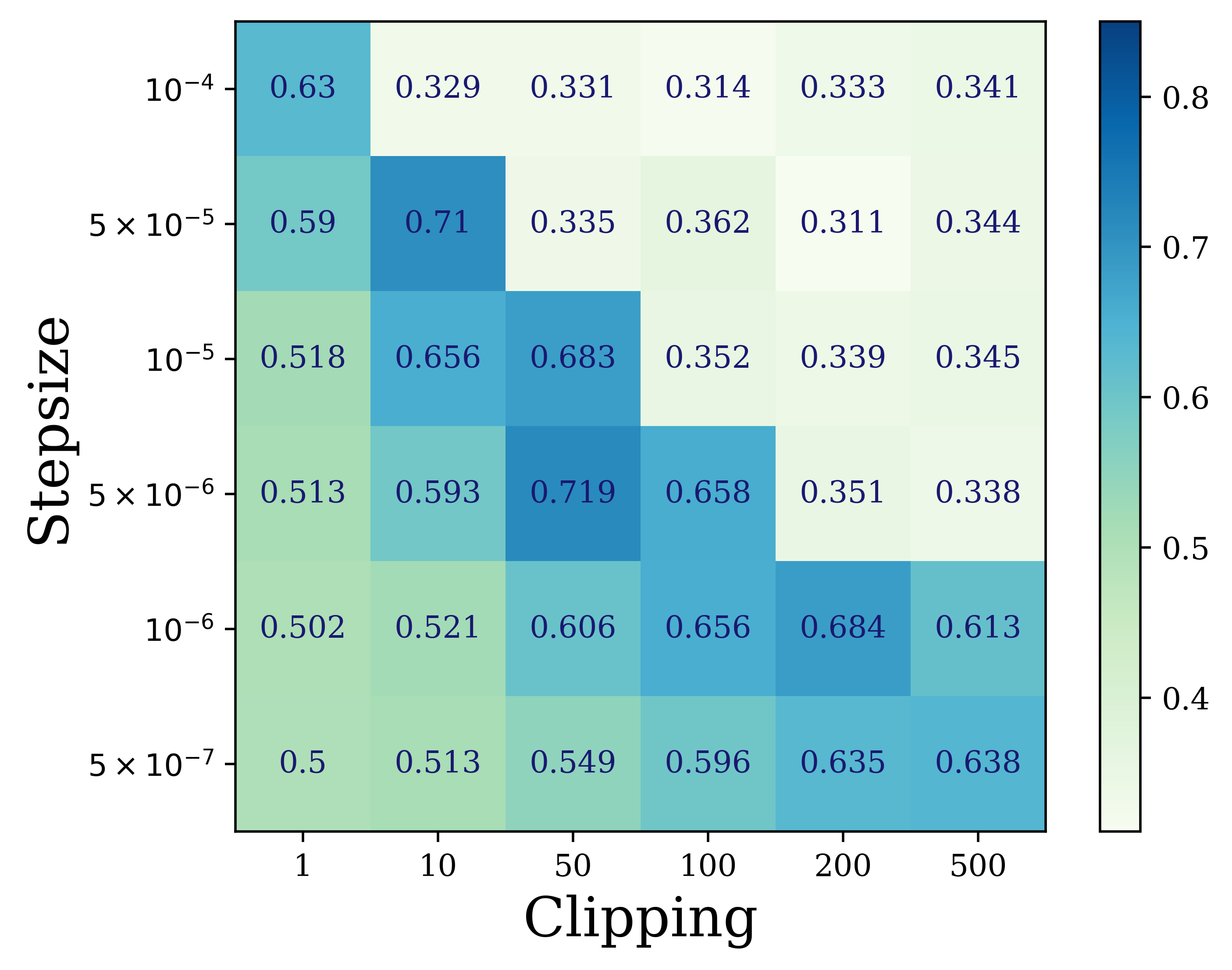}
        \caption{Test Accuracy (\%).}
    \end{subfigure}
    
    \caption{Experiments on private fine-tuning RoBERTa (125M) for SNLI with \DPZerons. (a) (Smoothed) training curves when fixing the stepsize to be $5\times 10^{-6}$ and varying the clipping threshold from 1 to 500. In the choice of clipping, a tradeoff emerges; larger clipping values result in unnecessarily high privacy noise, while smaller values can induce increased bias in the optimization process. (b) and (c) Test loss and accuracy (\%) when varying the stepsize and clipping threshold together. Consistent with first-order methods \citep{li2022large}, we observe that larger clipping necessitates smaller stepsizes, whereas smaller clipping favors larger stepsizes.}
    \label{fig:clip_lr}
\end{figure}

\paragraph{Clipping threshold.} Our findings indicate that the optimal clipping threshold for \DPZero tends to be higher than that for first-order methods. This observation aligns with the theoretical outcomes presented in Theorem \ref{thm:d-free}, where the clipping threshold for \DPZero is $C=\cO(L\sqrt{\log(nd)})$, in contrast to the $\cO(L)$ threshold adequate for first-order methods.
In the concurrent study by \citep{tang2024private}, the chosen clipping threshold is 0.05. However, their implementation applies the clipping to the term $f(x+\lambda u;\xi) - f(x-\lambda u;\xi)$. After normalization by $\lambda=10^{-3}$, it aligns with the order of magnitude used in our method.
The validity of opting for a larger clipping threshold in \DPZero is further confirmed through the private fine-tuning of RoBERTa (125M) on the SNLI dataset in Figure \ref{fig:clip_lr}.
An additional observation from our experiments is that the non-private baseline MeZO also appears to benefit from clipping. For instance, without clipping, the original MeZO encounters non-convergence issues at a stepsize of $5\times10^{-6}$. Conversely, incorporating clipping permits the use of larger stepsizes and yields better results. A thorough investigation of this phenomenon is reserved for future research.

\subsection{Private Fine-Tuning of the Language Model OPT}

\begin{table}[h!]
    \centering
    \caption{Hyper-parameters used for fine-tuning OPT. We randomly sample 1000 samples for training and 1000 samples for testing. Stepsize and clipping are optimized through a grid search over the listed values. Other parameters are fixed.}
    \begin{tabular}{cc}
        \toprule
        Hyper-parameters & Values \\
        \midrule
        Number of training samples & 1000 \\
        Number of test samples & 1000 \\
        Number of iterations & 20000 \\
        Batch size & 8 \\
        Privacy & $(\eps=\{2,6\}, \delta=10^{-5})$ \\
        Smoothing $\lambda$  & $10^{-3}$ \\
        Stepsize & $\{10^{-6}, 10^{-7}\}$ \\
        Clipping & \{10, 50, 100, 200\} \\
        \bottomrule
    \end{tabular}
    \label{tab:opt-param}
\end{table}

\begin{table}[h!]
    \centering
    \caption{Memory consumption (MiB) when fine-tuning OPT for BoolQ with batch size 8. All experiments are tested on a single GPU with 24 GiB memory. `-' in the table denotes out of memory. MeZO and \DPZero can fit models up to OPT-6.7B, while the first-order method AdamW already runs out of memory on OPT-1.3B.}
    \begin{tabular}{ccccc}
        \toprule
        Method & OPT-1.3B & OPT-2.7B & OPT-6.7B & OPT-13B \\
        \midrule
        AdamW & - & - & - & - \\
        \midrule
        MeZO  & 7866 & 11602 & 20548 & - \\
        \DPZero & 7866 & 11602 & 20548 & - \\
        \bottomrule
    \end{tabular}
    \label{tab:opt-memory}
\end{table}

We follow experiment settings in \citet{malladi2023fine} to evaluate the performance of \DPZero in the private fine-tuning of OPT \citep{zhang2022opt} across four different datasets: SST-2 \citep{socher2013recursive} for sentiment classification and BoolQ \citep{clark2019boolq}, SQuAD \citep{rajpurkar2016squad}, and DROP \citep{dua2019drop} for question answering. In our experiments, we employ the same prompts as used in \citet{malladi2023fine} and use the same implementation as explained before. All results are averaged over three random seeds $\{0, 29, 83\}$. The hyper-parameters used for our experiments are summarized in Table \ref{tab:opt-param}, and the memory usages on the dataset BoolQ are reported in Table \ref{tab:opt-memory}.

\section{Technical Lemmas}
\label{app:technical} 

\begin{lemma}
    Let $u$ be uniformly sampled from the Euclidean sphere $\sqrt{d}\,\bS^{d-1}$, $a\in\bR^d$ be some fixed vector independent of $u$, and $H\in\bR^{d\times d}$ be some fixed matrix independent of $u$. We have that
    \begin{itemize}
        \item[$(i)$] $\bE[u]=0\,$ and $\,\bE[uu^\top]=\rI_d$.

        \item[$(ii)$] $\bE_u[u^\top a]=0,\, \bE_u[(u^\top a)^2]=\norm{a}^2\,$ and $\forall\, C\geq 0$,
        \begin{equation*}
            \bP(\abs{u^\top a} \geq C) \leq 2\sqrt{2\pi} \exp\left(-\frac{C^2}{8\norm{a}^2}\right).
        \end{equation*}
        
        \item[$(iii)$] $\bE_u[(u^\top a) u] = a\,$ and
        \begin{align*}
            & \bE_u[(u^\top a)^2 \norm{u}^2] = d\norm{a}^2, \\
            & \bE_u[(u^\top a)^2 uu^\top] = \frac{d}{d+2}\left(2aa^\top + \norm{a}^2\rI_d\right).
        \end{align*}

        \item[$(iv)$] $\bE_u[u^\top H u]=\emph\tr(H)\,$ and
        \begin{equation*}
            \bE_u[(u^\top a)^2 u^\top H u] = \frac{d}{d+2}\left(2a^\top Ha + \norm{a}^2\emph\tr(H)\right).
        \end{equation*}
    \end{itemize}
    \label{lm:sphere}
\end{lemma}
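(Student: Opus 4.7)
The lemma consists of a collection of moment identities and a single tail bound, so I would structure the proof in two phases: first establish all the moment identities using symmetry together with a canonical rotation, and then handle the tail bound separately by a concentration argument via a Gaussian representation. The key unifying observation is that the distribution of $u$ is invariant under all orthogonal transformations and under sign flips of individual coordinates, and that $\|u\|^2 = d$ with probability one.

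For the moment identities, I would proceed as follows. For (i), $\bE[u]=0$ follows from the reflection symmetry $u\mapsto -u$, and $\bE[uu^\top]$ must commute with every rotation, so it is a scalar multiple of $\rI_d$; taking the trace and using $\|u\|^2=d$ pins the scalar to $1$. All three equalities in (ii) concerning moments, and the first equality in (iii), follow immediately by applying a linear functional or rank-one operator to (i): $\bE[u^\top a]=a^\top\bE[u]=0$, $\bE[(u^\top a)^2]=a^\top\bE[uu^\top]a=\|a\|^2$, and $\bE[(u^\top a)u]=\bE[uu^\top]a=a$. The identity $\bE[(u^\top a)^2\|u\|^2]=d\|a\|^2$ is immediate since $\|u\|^2\equiv d$. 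For the matrix identity $\bE[(u^\top a)^2 uu^\top]$, I would reduce by rotation invariance to the case $a=\|a\|e_1$; all off-diagonal entries vanish by individual coordinate sign-flip symmetry, and the diagonal entries collapse to the two standard moments of the uniform distribution on the sphere,
\begin{equation*}
\bE[u_1^4]=\frac{3d}{d+2}, \qquad \bE[u_1^2 u_i^2]=\frac{d}{d+2}\ \ (i\neq 1),
\end{equation*}
which one obtains either from the Gaussian representation $u=\sqrt{d}\,z/\|z\|$ combined with $\bE[z_i^2 z_j^2]/\bE[\|z\|^4]$ computations, or from the beta-function form of the marginal density of $u_1$. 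Assembling the diagonal entries recovers $\tfrac{d}{d+2}(2aa^\top+\|a\|^2\rI_d)$. The two identities in (iv) then come for free by taking $\tr(H\,\cdot\,)$ on the relevant matrix identities and using linearity.

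The main obstacle is the tail bound in (ii), since the other parts are essentially first and fourth moment calculations. I would normalize to $\|a\|=1$ by homogeneity and use the Gaussian parametrization $u=\sqrt{d}\,z/\|z\|$ with $z\sim\cN(0,\rI_d)$; then $u^\top a=\sqrt{d}\,g/\|z\|$ with $g:=z^\top a\sim\cN(0,1)$. I would split on whether $\|z\|$ is abnormally small:
\begin{equation*}
\bP(|u^\top a|\geq C)\ \leq\ \bP\!\left(|g|\geq C/2\right)+\bP\!\left(\|z\|<\sqrt{d}/2\right),
\end{equation*}
because on the event $\{\|z\|\geq\sqrt{d}/2\}$ the inequality $|g|\geq C\|z\|/\sqrt{d}$ forces $|g|\geq C/2$. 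The first term is bounded by $2\exp(-C^2/8)$ from the standard Gaussian tail, and the second by $\exp(-cd)$ for an absolute constant $c>0$ via a $\chi^2_d$ lower-tail bound (Laurent--Massart). Combining and noting that the inequality is trivially true when $C\geq\sqrt{d}$ (since $|u^\top a|\leq\|u\|=\sqrt{d}$ a.s.) yields the stated form $2\sqrt{2\pi}\exp(-C^2/(8\|a\|^2))$, where the slack prefactor $2\sqrt{2\pi}$ absorbs the $\chi^2$ term. A cleaner alternative, which avoids the case split, is to use the explicit density $f_{u_1}(x)\propto(1-x^2/d)^{(d-3)/2}$ for the first coordinate of $u$ and bound it by a Gaussian density via $(1-x^2/d)^{(d-3)/2}\leq\exp(-x^2/4)$ in the regime of interest; integrating directly produces the same exponential decay. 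Either route yields the bound with universal constants, and I would pick whichever gives the prefactor $2\sqrt{2\pi}$ in the cleanest form.
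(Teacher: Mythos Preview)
Your treatment of the moment identities in $(i)$, $(iii)$, and $(iv)$ is correct and essentially parallels the paper: you invoke rotation invariance to reduce to $a=\|a\|e_1$ whereas the paper works in general coordinates and appeals to individual sign-flip symmetry, but both routes land on the same two fourth moments $\bE[u_1^4]=3d/(d+2)$ and $\bE[u_1^2u_j^2]=d/(d+2)$, and both obtain $(iv)$ by tracing against the matrix identity in $(iii)$. The paper derives $\bE[u_1^4]$ by recognizing $u_1^2/d\sim\mathrm{Beta}(1/2,(d-1)/2)$, which is the same computation you allude to via ``the beta-function form of the marginal density.''

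The genuine divergence is the tail bound in $(ii)$. The paper does not use the Gaussian representation $u=\sqrt{d}\,z/\|z\|$ at all here; instead it invokes a concentration inequality for geodesically Lipschitz functions on $\bS^{d-1}$ (Example~3.12 in Wainwright), checks that $x\mapsto x^\top a/\|a\|$ is $1$-Lipschitz with respect to $\arccos(x^\top y)$ via the elementary inequality $2(1-\cos\theta)\leq\theta^2$, and then reads off the constant $2\sqrt{2\pi}$ directly from that cited result. Your Gaussian-splitting argument is correct and more self-contained, but recovering the \emph{exact} prefactor $2\sqrt{2\pi}$ from $2e^{-C^2/8}+e^{-cd}$ requires the extra bookkeeping you sketch (using $|u^\top a|\leq\sqrt d$ to restrict to $C\leq\sqrt d$ and then checking that the Laurent--Massart constant $c$ exceeds $1/8$). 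Your alternative via the explicit marginal density is also valid, though the pointwise bound $(1-x^2/d)^{(d-3)/2}\leq e^{-x^2/4}$ fails for very small $d$ and would need a separate (trivial) treatment there. Either of your routes works; the paper's is shorter because it offloads the analysis to a black-box spherical concentration result, while yours avoids that external dependency at the cost of some case analysis.
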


\begin{proof}
    $(i)$ is a standard result, e.g., in \citet{duchi2015optimal}, and follows by the symmetry of the sphere. For any $u\in\sqrt{d}\cdot\bS^{d-1}$, it must be the case that $-u\in\sqrt{d}\cdot\bS^{d-1}$ as well, which suggests that $\bE[u]=0$. Since $\bE[\sum_{i=1}^d u_i^2]=\bE\norm{u}^2=d$, we immediately have that $\bE[u_i^2]=1$ for every $i$ by symmetry. Then for the off-diagonal terms, since for any $u=(u_1,\cdots,u_i,\cdots,u_j,\cdots,u_d)\in\sqrt{d}\cdot\bS^{d-1}$, it must be the case that $(u_1,\cdots,u_i,\cdots, -u_j, \cdots,u_d)\in\sqrt{d}\cdot\bS^{d-1}$ as well, which suggests that $\bE[u_iu_j]=0$ when $i\neq j$. As a result, we can conclude that the matrix $\bE[uu^\top] = \rI_d$.

    We then show $(ii)$. Applying $(i)$, we have that $\bE_u[u^\top a]=0$, and that
    \begin{align*}
        \bE_u[(u^\top a)^2]
        & =
        \sum_{i=1}^d a_i^2 \bE[u_i^2] + \sum_{i\neq j} a_ia_j \bE[u_iu_j] \\
        & =
        \norm{a}^2.
    \end{align*}
    The tail bound follows from Example 3.12 in \citet{wainwright2019high}, where they showed that for any function $h: \bS^{d-1} \rightarrow \bR$ such that $\forall x,y\in\bS^{d-1}$,
    \begin{equation*}
        \abs{h(x) - h(y)} \leq \arccos(x^\top y),
    \end{equation*}
    when $x$ is uniformly sampled from $\bS^{d-1}$, it holds that $\forall\,\gamma\geq 0$,
    \begin{equation}
        \bP(\abs{h(x) - \bE[h(x)]} \geq \gamma) \leq 2\sqrt{2\pi} \exp\left(-\frac{d\gamma^2}{8}\right).
        \label{eq:tail-unit}
    \end{equation}
    Let $h(x)=x^\top a/\norm{a}$ for $x\in\bS^{d-1}$. First, we have that $\forall x,y\in\bS^{d-1}$,
    \begin{align*}
        \abs{h(x) - h(y)}^2
        & =
        \frac{\abs{(x-y)^\top a}^2}{\norm{a}^2} \\
        & \leq
        \norm{x-y}^2 \\
        & =
        2 (1 - x^\top y) \\
        & \leq
        (\arccos(x^\top y))^2,
    \end{align*}
    where we use the inequality that $\theta^2/2 + \cos(\theta) - 1\geq0$ for $\theta\in[0,\pi]$ and let $x^\top y=\cos(\theta)$ such that $\arccos(x^\top y)=\theta$ for some $\theta\in[0,\pi]$. When $u$ is uniformly sampled from $\sqrt{d}\cdot\bS^{d-1}$, we know $u/\sqrt{d}$ is uniformly from $\bS^{d-1}$. Applying \eqref{eq:tail-unit} for $h(x)=x^\top a/\norm{a}$ where $x\in\bS^{d-1}$, we obtain that
    \begin{equation*}
        \bP\left(\abs*{\frac{u^\top a}{\sqrt{d}\norm{a}} - \frac{\bE[u^\top a]}{\sqrt{d}\norm{a}}} \geq \gamma\right) \leq 2\sqrt{2\pi} \exp\left(-\frac{d\gamma^2}{8}\right).
    \end{equation*}
    Setting $C=\gamma\sqrt{d}\norm{a}$, the proof is complete since $\bE[u^\top a]=0$. Similar results also exist in Theorem 5.1.4 of \citet{vershynin2018high}, with all constants hidden behind some absolute $c$.

    Next, we prove $(iii)$. Applying $(i)$, we have that
    \begin{align*}
        \bE_u[(u^\top a)u_i]
        & =
        a_i\bE[u_i^2] + \sum_{j\neq i} a_j \bE[u_iu_j] \\
        & =
        a_i.
    \end{align*}
    This implies that $\bE_u[(u^\top a)u]=a$. Applying $(ii)$, we obtain that
    \begin{align*}
        \bE_u[(u^\top a)^2 \norm{u}^2]
        & =
        d\cdot\bE_u[(u^\top a)^2] \\
        & =
        d\norm{a}^2.
    \end{align*}
    For the expectation of the matrix, we start from the diagonal terms.
    \begin{equation}
        \begin{split}
            \bE_u[(u^\top a)^2 u_i^2]
            & =
            \sum_{j=1}^d a_j^2 \bE[u_j^2 u_i^2] + \sum_{j\neq k} a_ja_k \bE[u_ju_k u_i^2] \\
            & =
            a_i^2 \bE[u_i^4] + \sum_{j\neq i} a_j^2 \bE[u_j^2u_i^2].
        \end{split}
        \label{eq:diagonal}
    \end{equation}
    Here, we use the property that $\bE[u_ju_ku_i^2]=0$ for every $i$ when $j\neq k$. This follows from symmetry of the sphere such that for any $u=(u_1,\cdots,u_j,\cdots,u_k,\cdots,u_d)\in\sqrt{d}\cdot\bS^{d-1}$, it must be the case that $(u_1,\cdots,u_j,\cdots, -u_k, \cdots,u_d)\in\sqrt{d}\cdot\bS^{d-1}$ as well. Again by symmetry, we have $\bE[u_i^4]$ remains the same for every $i$, and $\bE[u_i^2u_j^2]$ is the same for every $i\neq j$. Denote $w_1=\bE[u_i^4]$ and $w_2=\bE[u_i^2u_j^2]$. Since it holds that
    \begin{align*}
        \sum_{i=1}^d \bE_u[(u^\top a)^2 u_i^2]
        & =
        \bE_u[(u^\top a)^2 \norm{u}^2] \\
        & =
        d\norm{a}^2,
    \end{align*}
    taking summation over \eqref{eq:diagonal}, we can have that
    \begin{align*}
        d\norm{a}^2
        & =
        \sum_{i=1}^d a_i^2 \bE[u_i^4] + \sum_{i=1}^d \sum_{j=1, j\neq i}^d a_j^2 \bE[u_j^2u_i^2] \\
        & =
        w_1\norm{a}^2 + w_2 \sum_{i=1}^d (\norm{a}^2 - a_i^2) \\
        & =
        w_1\norm{a}^2 + (d-1)w_2\norm{a}^2.
    \end{align*}
    This holds for arbitrary $a\in\bR^d$, and thus we obtain that
    \begin{equation}
        w_1 + (d-1) w_2 = d.
        \label{eq:exp-i4-i2j2}
    \end{equation}
    We only compute $w_1=\bE[u_i^4]$ by showing that $u_i^2/d$ actually follows the Beta distribution, and the value of $w_2$ can be derived from \eqref{eq:exp-i4-i2j2}. First, $z/\norm{z}$ is uniformly distributed on the unit sphere $\bS^{d-1}$ for $z\in\bR^d$ sampled from the standard multivariate Gaussian $\cN(0,\rI_d)$ \citep{muller1959note, marsaglia1972choosing}. This means that $z_i^2$ is distributed according to the $\chi^2$-distribution with 1 degree of freedom, and $\bar z_i^2:=\sum_{j\neq i} z_j^2$ is distributed according to the $\chi^2$-distribution with degree $(d-1)$. Since $\chi^2$-distribution is a special case of the Gamma distribution and $z_i^2$, $\bar z_i^2$ are independent, we conclude that $z_i^2/(z_i^2 + \bar z_i^2)$ has the Beta distribution with parameters $1/2$ and $(d-1)/2$ \citep{cramer1999mathematical, gupta2004handbook}. Finally, since $u/\sqrt{d}$ is uniformly distributed on $\bS^{d-1}$, by symmetry of the sphere, we know that $u_i^2/d$ has the same Beta distribution as $z_i^2/(z_i^2 + \bar z_i^2)$. The mean and variance of Beta$(1/2, (d-1)/2)$ is $1/d$ and $2(d-1)/(d^2(d+2))$. This suggests that $\bE[u_i^2]=1$, as already proved in $(i)$, and that
    \begin{align*}
        w_1
        & =
        \bE[(u_i^2 - \bE[u_i^2])^2] + (\bE[u_i^2])^2 \\
        & =
        d^2\left(\frac{2(d-1)}{d^2(d+2)} + \frac{1}{d^2}\right) \\
        & =
        \frac{3d}{d+2}.
    \end{align*}
    By \eqref{eq:exp-i4-i2j2}, we know $w_2=d/(d+2)$. According to \eqref{eq:diagonal}, we have that the diagonal terms
    \begin{align*}
        \bE_u[(u^\top a)^2 u_i^2]
        & =
        w_1a_i^2 + w_2(\norm{a}^2 - a_i^2) \\
        & =
        \frac{2d}{d+2}a_i^2 + \frac{d}{d+2}\norm{a}^2.
    \end{align*}
    Then we compute the off-diagonal entries for $i\neq j$. By the same reasoning as \eqref{eq:diagonal}, we have that
    \begin{align*}
        \bE_u[(u^\top a)^2 u_iu_j]
        & =
        \sum_{i\neq j} a_ia_j \bE[u_i^2 u_j^2] \\
        & =
        \frac{2d}{d+2} a_ia_j.
    \end{align*}
    All other terms equal to 0 by symmetry of the sphere. Combining both diagonal and off-diagonal elements, we have that $\bE_u[(u^\top a)^2uu^\top]=(d/(d+2))(2aa^\top + \norm{a}^2\rI_d)$. Similar results are also shown in Appendix F of \citet{malladi2023fine}.
    
    Finally, we give the proof of $(iv)$. For the first statement, applying $(i)$ in this lemma, we have that
    \begin{align*}
        \bE_u\left[u^\top Hu\right]
        & =
        \bE\Big[\tr(uu^\top H)\Big] \\
        & =
        \tr\Big(\bE[uu^\top]\cdot H\Big) \\
        & =
        \tr(H).
    \end{align*}
    Similarly for the second statement, we apply $(iii)$ in this lemma and obtain that
    \begin{align*}
        \bE_u\left[(u^\top a)^2 u^\top H u\right]
        & =
        \bE\Big[(u^\top a)^2 \cdot\tr(uu^\top H)\Big] \\
        & =
        \bE\Big[\tr\Big((u^\top a)^2uu^\top\cdot H\Big)\Big] \\
        & =
        \tr\Big(\bE\Big[(u^\top a)^2uu^\top\Big]\cdot H\Big) \\
        & =
        \frac{2d}{d+2}\tr(aa^\top H) + \frac{d}{d+2}\norm{a}^2\tr(H) \\
        & =
        \frac{2d}{d+2}a^\top Ha + \frac{d}{d+2}\norm{a}^2\tr(H).
    \end{align*}
    This concludes the proof.
\end{proof}

\begin{lemma}
    Let $u$ be uniformly sampled from the Euclidean sphere $\sqrt{d}\,\bS^{d-1}$ and $v$ be uniformly sampled from the Euclidean ball $\sqrt{d}\,\bB^d=\{x\in\bR^d \,|\, \norm{x}\leq \sqrt{d}\}$. For any function $f(x):\bR^d\rightarrow\bR$ and $\lambda>0$, we define its zeroth-order gradient estimator as $g_\lambda(x) = ((f(x+\lambda u) - f(x-\lambda u))/(2\lambda) ) u$ and the smoothed function as $f_\lambda(x)=\bE_v[f(x+\lambda v)]$. The following properties hold:
    \begin{itemize}
        \item[$(i)$] $f_\lambda(x)$ is differentiable and $\bE_u[g_\lambda(x)]=\nabla f_\lambda(x)$.

        \item[$(ii)$] If $f(x)$ is $\ell$-smooth, then we have that
        \begin{eqnarray*}
             \norm{\nabla f(x) - \nabla f_\lambda(x)} 
             &\leq& \frac{\ell}{2}\lambda d^{3/2}, \\
             \bE_u[\,\norm{g_\lambda(x)}^2\, ] &\leq& 2d\cdot\norm{\nabla f(x)}^2 + \frac{\ell^2}{2}\lambda^2d^3.
        \end{eqnarray*}
    \end{itemize}
    The above results are consistent with $(iii)$ in Lemma \ref{lm:sphere} when $\lambda\to 0$ and $f(x)$ is differentiable such that the two-point estimator reduces to the directional derivative $g_0(x)=u^\top \nabla f(x) u$.
    \label{lm:zero-grad}
\end{lemma}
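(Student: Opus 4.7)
My approach is to invoke the divergence theorem to directly equate the sphere expectation $\bE_u[f(x+\lambda u)u]$ with $\lambda\nabla f_\lambda(x)$. After the substitution $y = x+\lambda u$, the sphere expectation becomes, up to a dimensional normalization, the surface integral $\int_{\|y-x\|=\lambda\sqrt{d}} f(y)\,\hat{n}(y)\,dS(y)$, where $\hat n(y) = (y-x)/(\lambda\sqrt{d})$ is the outward unit normal. Separately, writing $f_\lambda(x) = (V_d d^{d/2}\lambda^d)^{-1}\int_{\|y-x\|\leq \lambda\sqrt{d}} f(y)\,dy$ by changing variables $y = x+\lambda v$, then differentiating under the integral (via a second substitution $z = y-x$ to remove the $x$-dependence of the domain) and applying the divergence theorem componentwise, expresses $\nabla f_\lambda(x)$ as exactly the same surface integral but normalized by a different constant. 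Reconciling the two expressions via the identity $A_{d-1} = dV_d$ together with $R=\sqrt d$ yields $\bE_u[f(x+\lambda u)u] = \lambda\nabla f_\lambda(x)$. The antipodal symmetry of the uniform measure on $\sqrt{d}\,\bS^{d-1}$ then gives $\bE_u[f(x-\lambda u)u] = -\lambda\nabla f_\lambda(x)$; subtracting and dividing by $2\lambda$ yields the desired identity $\bE_u[g_\lambda(x)] = \nabla f_\lambda(x)$, and the Stokes representation itself certifies that $\nabla f_\lambda$ exists, so $f_\lambda$ is differentiable.

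\textbf{Plan for part (ii), bias bound.} Here I would use a second-order Taylor expansion with the smoothness-based remainder. By $\ell$-smoothness, $f(x\pm\lambda u) = f(x) \pm \lambda u^\top\nabla f(x) + R_\pm(u)$ with $|R_\pm(u)| \leq \tfrac{\ell}{2}\lambda^2\|u\|^2 = \tfrac{\ell}{2}\lambda^2 d$ almost surely. This yields a clean decomposition $\tfrac{f(x+\lambda u) - f(x-\lambda u)}{2\lambda} = u^\top\nabla f(x) + e(u)$ with scalar error $|e(u)| \leq \tfrac{\ell}{2}\lambda d$. Combining part (i) with the identity $\bE_u[(u^\top\nabla f(x))u] = \nabla f(x)$ from Lemma~\ref{lm:sphere}(iii) produces $\nabla f_\lambda(x) - \nabla f(x) = \bE_u[e(u)\,u]$. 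A triangle inequality together with $\|u\| = \sqrt{d}$ almost surely then delivers $\|\nabla f_\lambda(x) - \nabla f(x)\| \leq \tfrac{\ell}{2}\lambda d^{3/2}$.

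\textbf{Plan for the second-moment bound.} Exploiting $\|u\|^2 = d$, I would write $\|g_\lambda(x)\|^2 = d\bigl(u^\top\nabla f(x) + e(u)\bigr)^2$ and relax via $(a+b)^2 \leq 2a^2 + 2b^2$ to obtain $\bE_u[\|g_\lambda(x)\|^2] \leq 2d\,\bE_u[(u^\top\nabla f(x))^2] + 2d\,\bE_u[e(u)^2]$. The first expectation evaluates to $\|\nabla f(x)\|^2$ by Lemma~\ref{lm:sphere}(ii), and the deterministic bound $e(u)^2 \leq \ell^2\lambda^2 d^2/4$ controls the second, producing the stated inequality $\bE_u[\|g_\lambda(x)\|^2] \leq 2d\|\nabla f(x)\|^2 + \tfrac{\ell^2}{2}\lambda^2 d^3$.

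\textbf{Main obstacle.} The most delicate step is the normalization bookkeeping in part (i): the constants $A_{d-1}$, $V_d$, and the powers of $\sqrt{d}$ and $\lambda$ in the numerator and denominator of the sphere-expectation versus the ball-expectation representations must align precisely via $A_{d-1} = dV_d$ for the factor of $\lambda$ in $\bE_u[f(x+\lambda u)u] = \lambda\nabla f_\lambda(x)$ to emerge cleanly. A secondary subtlety is rigorously justifying the exchange of differentiation and integration underlying the divergence-theorem step; this is automatic under $\ell$-smoothness (the setting of (ii)) and can otherwise be handled by a routine mollification of $f$. Once part (i) is secured, the Taylor-based arguments in (ii) are essentially mechanical.
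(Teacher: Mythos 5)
Your proposal is correct and takes essentially the same route as the paper: part (i) is the Stokes/divergence identity that the paper imports from Flaxman et al.\ (their Lemma 2.1) together with antipodal symmetry, which you simply reprove directly with the $A_{d-1}=dV_d$ normalization, and part (ii) uses the same smoothness-based Taylor-remainder decomposition combined with $\norm{u}=\sqrt{d}$ and Lemma \ref{lm:sphere}, yielding the identical constants $\frac{\ell}{2}\lambda d^{3/2}$ and $2d\norm{\nabla f(x)}^2+\frac{\ell^2}{2}\lambda^2 d^3$.
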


\begin{proof}
    We first show $(i)$. Similarly to Lemma 10 in \citet{shamir2017optimal}, we have that
    \begin{equation*}
        \bE_{u\in\sqrt{d}\cdot\bS^{d-1}}[g_\lambda(x)] = \bE_{u\in\sqrt{d}\cdot\bS^{d-1}}\left[\frac{f(x+\lambda u)u}{\lambda}\right].
    \end{equation*}
    Applying Lemma 2.1 in \citet{flaxman2005online}, we know
    \begin{equation*}
        \bE_{u'\in\bS^{d-1}}[f(x+\lambda' u')u'] = \frac{\lambda'}{d}\nabla \bE_{v'\in\bB^d}[f(x+\lambda' v')].
    \end{equation*}
    Introducing $u=\sqrt{d}u'$, $v=\sqrt{d}v'$ and $\lambda=\lambda'/\sqrt{d}$, we thus obtain
    \begin{align*}
        \bE_{u\in\sqrt{d}\cdot\bS^{d-1}}\left[\frac{f(x+\lambda u)u}{\lambda}\right]
        & =
        \bE_{u'\in\bS^{d-1}}\left[\frac{f(x+\lambda' u')u'd}{\lambda'}\right] \\
        & =
        \nabla \bE_{v'\in\bB^d}[f(x+\lambda' v')] \\
        & =
        \nabla \bE_{v\in\sqrt{d}\cdot\bB^d}[f(x+\lambda v)].
    \end{align*}

    The proof of $(ii)$ mostly follows from \citet{nesterov2017random}, where the results are originally obtained for the case that $u$ is sampled from the standard multivariate Gaussian distribution. By $(iii)$ in Lemma \ref{lm:sphere} and $(i)$ here, we have that for $u$ uniformly sampled from $\sqrt{d}\cdot\bS^{d-1}$,
    \begin{align*}
        \norm{\nabla f(x) - \nabla f_\lambda(x)}
        & =
        \norm*{\bE_u[(u^\top \nabla f(x))u] - \bE_u\left[\frac{f(x+\lambda u) - f(x-\lambda u)}{2\lambda}u\right]} \\
        & \leq
        \bE_u\norm*{\left(\frac{f(x+\lambda u) - f(x-\lambda u)}{2\lambda} - u^\top\nabla f(x)\right)u} \\
        & \leq
        \frac{\sqrt{d}}{2\lambda}\bE_u\abs{f(x+\lambda u) - f(x) - \lambda u^\top \nabla f(x)} \\
        & \quad +
        \frac{\sqrt{d}}{2\lambda}\bE_u\abs{f(x) - f(x-\lambda u) - \lambda u^\top \nabla f(x)} \\
        & \leq
        \frac{\ell}{2}\lambda d^{3/2},
    \end{align*}
    where in the last step we use smoothness of $f(x)$ such that $\abs{f(x+\lambda u) - f(x) - \lambda u^\top\nabla f(x)} \leq \ell\lambda^2d/2$ and the same holds for $\abs{f(x) - f(x-\lambda u) - \lambda u^\top \nabla f(x)}=\abs{f(x-\lambda u) - f(x) + \lambda u^\top\nabla f(x)}$. The last statement holds similarly:
    \begin{align}
        \bE_u[ \norm{g_\lambda(x)}^2]
        & =
        \frac{d}{4\lambda^2}\bE_u[(f(x+\lambda u) - f(x-\lambda u))^2] \nonumber \\
        & \leq
        2d\cdot\bE_u[(u^\top\nabla f(x))^2] + \frac{d}{2\lambda^2}\bE_u[(f(x+\lambda u) - f(x-\lambda u) - 2\lambda u^\top\nabla f(x))^2] \nonumber \\
        & \leq
        2d\cdot\bE_u[(u^\top\nabla f(x))^2] + \frac{d}{\lambda^2}\bE_u[(f(x+\lambda u) - f(x) - \lambda u^\top\nabla f(x))^2] \nonumber \\
        & \quad + \frac{d}{\lambda^2}\bE_u[(f(x) - f(x-\lambda u) - \lambda u^\top\nabla f(x))^2] \nonumber \\
        & \leq
        2d\cdot\norm{\nabla f(x)}^2 + \frac{\ell^2}{2}\lambda^2d^3,
        \label{eq:bound-finite-diff}
    \end{align}
    where in the last step we use Lemma \ref{lm:sphere} and smoothness of $f(x)$.
\end{proof}

\section{Detailed Proof and Analysis of DPGD-0th (Algorithm \ref{algo:d-dependent})}
\label{app:d-dependent}

\begin{proof}[Proof of Theorem \ref{thm:d-dependent}]
    The privacy guarantees directly follow from Lemma \ref{lm:composition} noticing that the sensitivity is $2C/n$. Note that the original advanced composition theorem in \citet{kairouz2015composition} is stated for the case where the output of $\cA$ is a scalar. Given the spherical symmetry properties of Gaussian noise, the results can be readily extended to multiple dimensions, as outlined in Lemma 1 of \citet{kenthapadi2013privacy} where the basis can be selected in a way such that $\cA(S)$ and $\cA(S')$ differ in exactly one dimension.
    
    We then focus on the utility guarantee on $\bE[ \norm{\nabla F_S(x_\tau)}^2 ] $. Since $f(x;\xi)$ is $L$-Lipschitz for every $\xi$ by Assumption \ref{asp:smooth} and $\norm{u_t}=\sqrt{d}$ by its construction, we have that
    \begin{align*}
        \norm{g_\lambda(x_t;\xi_i)}
        & =
        \frac{\abs{f(x_t+\lambda u_t;\xi_i) - f(x_t-\lambda u_t;\xi_i)}}{2\lambda}\norm{u_t} \\
        & \leq
        L\norm{u_t}^2 \\
        & =
        Ld.
    \end{align*}
    This means $\clip_C(g_\lambda(x_t;\xi_i))=g_\lambda(x_t;\xi_i)$ when setting $C= Ld$. For notation simplicity, we let
    \begin{align*}
        G_\lambda(x_t)
        & :=
        \frac{1}{n}\sum_{i=1}^n g_\lambda(x_t;\xi_i) \\
        & =
        \frac{1}{n}\sum_{i=1}^n \frac{f(x_t+\lambda u_t;\xi_i) - f(x_t-\lambda u_t;\xi_i)}{2\lambda}u_t \\
        & =
        \frac{F_S(x_t + \lambda u_t) - F_S(x_t - \lambda u_t)}{2\lambda}u_t.
    \end{align*}
    Algorithm \ref{algo:d-dependent} reduces to $x_{t+1} = x_t - \alpha(G_\lambda(x_t) + z_t)$. By smoothness of $F_S(x)$, we have that
    \begin{align*}
        F_S(x_{t+1})
        & \leq
        F_S(x_t) + \nabla F_S(x_t)^\top (x_{t+1} - x_t) + \frac{\ell}{2}\norm{x_{t+1} - x_t}^2 \\
        & =
        F_S(x_t) - \alpha \nabla F_S(x_t)^\top (G_\lambda(x_t) + z_t) + \frac{\ell}{2}\alpha^2\norm{G_\lambda(x_t)}^2 + \frac{\ell}{2}\alpha^2\norm{z_t}^2 + \ell\alpha^2 z_t^\top G_\lambda(x_t).
    \end{align*}
    Since $z_t$ is sampled from $\cN(0,\sigma^2\rI_d)$ and is independent of $x_t$, $u_t$ and $S$, we have that
    \begin{equation*}
        \bE_{z_t}[F_S(x_{t+1})] \leq F_S(x_t) - \alpha \nabla F_S(x_t)^\top G_\lambda(x_t) + \frac{\ell}{2}\alpha^2\norm{G_\lambda(x_t)}^2 + \frac{\ell}{2}\alpha^2\, d\, \sigma^2.
    \end{equation*}
    Define $F_\lambda(x):=\bE_v[F_S(x+\lambda v)]$ for $v$ sampled uniformly from the Euclidean ball $\sqrt{d}\cdot\bB^d$. By Lemma \ref{lm:zero-grad}, we know $\bE_{u_t}[G_\lambda(x_t)]=\nabla F_\lambda(x_t)$. Since $u_t$ is independent of $x_t$ and $S$, taking expectation with respect to $u_t$ and applying $(ii)$ in Lemma \ref{lm:zero-grad}, we obtain that
    \begin{align}
        \bE_{z_t, u_t}[F_S(x_{t+1})]
        & \leq
        F_S(x_t) - \alpha\nabla F_S(x_t)^\top \nabla F_\lambda(x_t) + \frac{\ell}{2}\alpha^2\bE_{u_t}[\norm{G_\lambda(x_t)}^2] + \frac{\ell}{2}\alpha^2\, d\,\sigma^2 \nonumber \\
        & =
        F_S(x_t) - \frac{\alpha}{2}\norm{\nabla F_S(x_t)}^2 - \frac{\alpha}{2}\norm{\nabla F_\lambda(x_t)}^2 + \frac{\alpha}{2}\norm{\nabla F_\lambda(x_t) - \nabla F_S(x_t)}^2 \nonumber \\
        & \quad +
        \frac{\ell}{2}\alpha^2\,\bE_{u_t}[\norm{G_\lambda(x_t)}^2] + \frac{\ell}{2}\alpha^2\,d\,\sigma^2 \nonumber \\
        & \leq
        F_S(x_t) - \frac{\alpha}{2}(1 - 2d\ell\alpha)\norm{\nabla F_S(x_t)}^2 + \frac{\ell^2}{8}\alpha(1+2\ell\alpha)\lambda^2d^3 + \frac{\ell}{2}\alpha^2\, d\,\sigma^2.
        \label{eq:onestep1}
    \end{align}
    Choosing $\alpha= 1/(4\ell d)$ such that $1-2d\ell\alpha=1/2$ and $2\ell\alpha<1$, we obtain that
    \begin{align*}
        \bE[\norm{\nabla F_S(x_t)}^2]
        & <
        \frac{4\,\bE[F_S(x_t) - F_S(x_{t+1})]}{\alpha} + \ell^2\lambda^2d^3 + 2\ell\alpha\, d\sigma^2 \nonumber\\
        & =
        \frac{4\,\bE[F_S(x_t) - F_S(x_{t+1})]}{\alpha} + \ell^2\lambda^2d^3 + \frac{64\ell C^2\, \alpha T\, d\log(e+ (\eps/\delta))}{n^2\eps^2} \nonumber \\
        & =
        \frac{4\,\bE[F_S(x_t) - F_S(x_{t+1})]}{\alpha} + \ell^2\lambda^2d^3 + \frac{64\ell L^2\, \alpha T\, d^3\log(e+ (\eps/\delta))}{n^2\eps^2}.
    \end{align*}
    As a result, taking summation from $t=0$ to $T-1$ and dividing both sides by $T$, we have that
    \begin{align*}
        \bE[\norm{\nabla F_S(x_\tau)}^2]
        & =
        \frac{1}{T}\sum_{t=0}^{T-1} \bE[\norm{\nabla F_S(x_t)}^2] \\
        & \leq
        \frac{4(F_S(x_0) - F_S^*)}{\alpha T} + \ell^2\lambda^2d^3 + \frac{64\ell L^2\, \alpha T\, d^3\log(e+ (\eps/\delta))}{n^2\eps^2} \\
        & \leq
        \frac{16(\ell (F_S(x_0) - F_S^*) + 2L^2)d\sqrt{d\log(e+(\eps/\delta))}}{n\eps},
    \end{align*}
    with the choice of parameters
    \begin{equation*}
        \alpha T = \frac{n\eps}{4\ell d\sqrt{d\log(e+(\eps/\delta))}}, \quad
        \lambda \leq \frac{4L}{\ell d}\left(\frac{\sqrt{d\log(e+(\eps/\delta))}}{n\eps}\right)^{1/2}.
    \end{equation*}
    This suggests that the total number of iteration is $T=n\eps/\sqrt{d\log(e+(\eps/\delta))}$ and the total number of zeroth-order gradient computations is $nT=n^2\eps/\sqrt{d\log(e+(\eps/\delta))}$. Note that the above selection of parameters ensures scale invariance.
\end{proof}

\begin{proof}[Proof of Theorem \ref{thm:d-dependent-rank}]
    The privacy analysis remains the same as before, and we focus on the utility analysis on $\bE\norm{\nabla F_S(x_\tau)}^2$. By the same reasoning, when setting $C= L d$, Algorithm \ref{algo:d-dependent} reduces to $x_{t+1} = x_t - \alpha (G_\lambda(x_t) + z_t)$ where $G_\lambda(x_t)=(F_S(x_t+\lambda u_t)-F_S(x_t-\lambda u_t))u_t/(2\lambda)$. By Taylor's theorem with remainder, for some $\theta\in(0,1)$, we have that
    \begin{align*}
        F_S(x_{t+1})
        & =
        F_S(x_t) + \nabla F_S(x_t)^\top (x_{t+1} - x_t) + \frac{1}{2} (x_{t+1} - x_t)^\top \nabla^2 F_S(x_t+\theta(x_{t+1} - x_t)) (x_{t+1} - x_t) \\
        & \leq
        F_S(x_t) - \alpha\nabla F_S(x_t)^\top \left(G_\lambda(x_t) + z_t\right) + \frac{\alpha^2}{2} G_\lambda(x_t)^\top H G_\lambda(x_t) + \frac{\alpha^2}{2} z_t^\top H z_t \\
        & \qquad +
        \frac{\alpha^2}{2}\left(G_\lambda(x_t)^\top H z_t + z_t^\top H G_\lambda(x_t)\right).
    \end{align*}
    Here in the inequality, we use Assumption \ref{asp:rank} such that $\nabla^2 F_S(x)\preceq H$ for any $x\in\bR^d$. Similarly to $(iv)$ in Lemma \ref{lm:sphere}, we have that $\bE[z_t^\top H z_t] = \tr(\bE[z_tz_t^\top] H) = \sigma^2\tr(H)$.  Since $z_t$ is sampled from $\cN(0, \sigma^2\rI_d)$ and is independent of $u_t$, $x_t$ and the dataset $S$, taking expectation with respect to $z_t$, we can then obtain that
    \begin{equation}
        \begin{split}
            \bE_{z_t}[F_S(x_{t+1})]
            & \leq
            F_S(x_t) - \alpha\nabla F_S(x_t)^\top G_\lambda(x_t) + \frac{\alpha^2}{2} G_\lambda(x_t)^\top H G_\lambda(x_t) + \frac{\alpha^2}{2} \bE_{z_t}[z_t^\top H z_t] \\
            & =
            F_S(x_t) - \alpha\nabla F_S(x_t)^\top G_\lambda(x_t) + \frac{\alpha^2}{2} G_\lambda(x_t)^\top H G_\lambda(x_t) + \frac{\alpha^2\sigma^2}{2} \tr(H).
        \end{split}
        \label{eq:smooth-d-dependent-rank}
    \end{equation}
    Assumption \ref{asp:rank} implies $F_S(x)$ is also $\ell$-smooth. By a similar argument as \eqref{eq:bound-finite-diff} in the proof of $(ii)$ in Lemma \ref{lm:zero-grad}, we have
    \begin{equation}
        \left(\frac{F_S(x_t+\lambda u_t) - F_S(x_t-\lambda u_t)}{2\lambda}\right)^2 \leq 2\left(u_t^\top \nabla F_S(x_t)\right)^2 + \frac{\ell^2}{2}\lambda^2d^2.
        \label{eq:upper-bound-finite-diff}
    \end{equation}
    As $u_t^\top Hu_t \geq 0$, by $(iv)$ in Lemma \ref{lm:sphere} and Assumption \ref{asp:rank}, we have that
    \begin{align*}
        \bE\left[G_\lambda(x_t)^\top H G_\lambda(x_t)\right]
        & =
        \bE\left[\left(\frac{F_S(x_t+\lambda u_t) - F_S(x_t-\lambda u_t)}{2\lambda}\right)^2 u_t^\top H u_t\right] \\
        & \leq
        2\,\bE\left[\left(u_t^\top \nabla F_S(x_t)\right)^2 u_t^\top H u_t\right] + \frac{\ell^2}{2}\lambda^2d^2 \, \bE\left[u_t^\top H u_t\right] \\
        & =
        \frac{2d}{d+2}\left(2\nabla F_S(x_t)^\top H\nabla F_S(x_t) + \norm{\nabla F_S(x_t)}^2\tr(H)\right) + \frac{\ell^2}{2}\lambda^2d^2 \tr(H) \\
        & \leq
        2\ell(r+2)\norm{\nabla F_S(x_t)}^2 + \frac{\ell^3}{2}\lambda^2d^2 r.
    \end{align*}
    Taking expectation of \eqref{eq:smooth-d-dependent-rank} with respect to $u_t$, by Lemma \ref{lm:zero-grad} for $F_\lambda(x)=\bE_{v}[F_S(x+\lambda v)]$ with $v$ uniformly sampled from $\sqrt{d}\cdot\bB^d$, we have that
    \begin{align}
        \bE[F_S(x_{t+1})]
        & \leq
        F_S(x_t) - \alpha \nabla F_S(x_t)^\top \nabla F_\lambda(x_t) + \ell\alpha^2(r+2)\norm{\nabla F_S(x_t)}^2 + \frac{\ell^3\alpha^2\lambda^2 d^2r}{4} + \frac{\ell\alpha^2\,r\sigma^2}{2} \nonumber \\
        & \leq
        F_S(x_t) - \frac{\alpha}{2}(1 - 2(r+2)\ell\alpha)\norm{\nabla F_S(x_t)}^2 +
        \frac{\alpha}{2}\norm{\nabla F_S(x_t) - \nabla F_\lambda(x_t)}^2 + \frac{\ell^3\alpha^2\lambda^2 d^2r}{4} + \frac{\ell\alpha^2\,r\sigma^2}{2} \nonumber \\
        & \leq
        F_S(x_t) - \frac{\alpha}{2}(1 - 2(r+2)\ell\alpha)\norm{\nabla F_S(x_t)}^2 + \frac{\ell^2\alpha\lambda^2d^2(d+2r\ell\alpha)}{8} + \frac{\ell\alpha^2\,r\sigma^2}{2}.
        \label{eq:onestep2}
    \end{align}
    Choosing $\alpha=1/(4\ell(r+2))$ such that $1-2(r+2)\ell\alpha=1/2$ and $2\ell\alpha r<1\leq d$, we have that
    \begin{align*}
        \bE[\norm{\nabla F_S(x_t)}^2]
        & <
        \frac{4\,\bE[F_S(x_t) - F_S(x_{t+1})]}{\alpha} + \ell^2\lambda^2d^3 + 2\ell\alpha\, r\sigma^2 \\
        & =
        \frac{4\,\bE[F_S(x_t) - F_S(x_{t+1})]}{\alpha} + \ell^2\lambda^2d^3 + \frac{64\ell C^2\, \alpha T\, r\log(e+ (\eps/\delta))}{n^2\eps^2} \\
        & =
        \frac{4\,\bE[F_S(x_t) - F_S(x_{t+1})]}{\alpha} + \ell^2\lambda^2d^3 + \frac{64\ell L^2\, \alpha T\, d^2r\log(e+ (\eps/\delta))}{n^2\eps^2}.
    \end{align*}
    As a result, taking summation from $t=0$ to $T-1$ and dividing both sides by $T$, we have that
    \begin{align*}
        \bE[\norm{\nabla F_S(x_\tau)}^2]
        & =
        \frac{1}{T}\sum_{t=0}^{T-1} \bE[\norm{\nabla F_S(x_t)}^2] \\
        & \leq
        \frac{4(F_S(x_0) - F_S^*)}{\alpha T} + \ell^2\lambda^2d^3 + \frac{64\ell L^2\, \alpha T\, d^2r\log(e+ (\eps/\delta))}{n^2\eps^2} \\
        & \leq
        \frac{16(\ell (F_S(x_0) - F_S^*) + 2L^2)d\sqrt{r\log(e+(\eps/\delta))}}{n\eps},
    \end{align*}
    with the choice of parameters
    \begin{equation*}
        \alpha T = \frac{n\eps}{4\ell d\sqrt{r\log(e+(\eps/\delta))}}, \quad
        \lambda \leq \frac{4L}{\ell d}\left(\frac{\sqrt{r\log(e+(\eps/\delta))}}{n\eps}\right)^{1/2}.
    \end{equation*}
    This suggests that the total number of iteration is $T=n(r+2)\eps/(d\sqrt{r\log(e+(\eps/\delta))})$ and the total number of zeroth-order gradient computations is $nT=n^2(r+2)\eps/(d\sqrt{r\log(e+(\eps/\delta))})$. The above selection ensures scale invariance.
\end{proof}

\section{Detailed Proof and Analysis of DPZero (Algorithm \ref{algo:d-free})}
\label{app:d-free}
    
\paragraph{Privacy guarantee.}
Since $u_t$ is independent of the dataset $S$, the privacy guarantees directly follow from Lemma \ref{lm:composition} and post-processing \citep{dwork2014algorithmic} noticing that the sensitivity is $2C/n$. We want to emphasis that the randomness of $u_t$ is never used for the privacy guarantee, and the analysis holds for any $u_t$ as long as it is independent of the dataset.

\paragraph{Utility guarantee.}
We then focus on the utility guarantee on $\bE\norm{\nabla F_S(x_\tau)}^2$. Since $f(x;\xi)$ is $\ell$-smooth for every $\xi$ by Assumption \ref{asp:rank}, we have that
\begin{equation}
    \begin{split}
        \frac{\abs{f(x_t+\lambda u_t;\xi_i) - f(x_t-\lambda u_t;\xi_i)}}{2\lambda}
        & \leq
        \abs{u_t^\top \nabla f(x_t;\xi_i)} +
        \frac{\abs{f(x_t+\lambda u_t;\xi_i) - f(x_t;\xi_i) - \lambda u_t^\top\nabla f(x_t;\xi_i)}}{2\lambda} \\
        & \quad +
        \frac{\abs{f(x_t-\lambda u_t;\xi_i) - f(x_t;\xi_i) + \lambda u_t^\top\nabla f(x_t;\xi_i)}}{2\lambda} \\
        & \leq
        \abs{u_t^\top \nabla f(x_t;\xi_i)} + \frac{\ell}{2}\lambda d.
    \end{split}
    \label{eq:d-free-C-upper-bound}
\end{equation}
Therefore, by $(ii)$ in Lemma \ref{lm:sphere} and Lipschitzness of $f(x;\xi)$, we have that
\begin{align*}
    \bP\left(\frac{\abs{f(x_t+\lambda u_t;\xi_i) - f(x_t-\lambda u_t;\xi_i)}}{2\lambda} \geq C_0 + \frac{\ell}{2}\lambda d\right)
    & \leq
    \bP(\abs{u_t^\top \nabla f(x_t;\xi_i)} \geq C_0) \\
    & \leq
    2\sqrt{2\pi}\exp\left(-\frac{C_0^2}{8\norm{\nabla f(x_t;\xi_i)}^2}\right) \\
    & \leq
    2\sqrt{2\pi}\exp\left(-\frac{C_0^2}{8L^2}\right).
\end{align*}
We define $Q_{t,i}$ to be the event that the clipping does not happen at iteration $t$ for sample $\xi_i$ and $\bar Q_{t,i}$ to be the event that the clipping does happen. The above equation implies that if the clipping threshold $C\geq C_0+\ell\lambda d/2$, then we have that $\bP(\bar Q_{t,i}) \leq 2\sqrt{2\pi}\exp(-C_0^2/(8L^2))$.
Let $Q_t$ denote the event that the clipping does not happen at iteration $t$ for every sample $1\leq i\leq n$, and let $\bar Q_t$ be the event that there exist some $i$ such that the clipping does happen at iteration $t$.
We also denote $Q$ as the event that the clipping does not happen for every iteration $t=0,1,\cdots,T-1$ and every sample $1\leq i\leq n$ and $\bar Q$ as the event that there exist some $t$ and $i$ such that the clipping does happen. By the union bound, we have that
\begin{align*}
    \bP(\bar Q)
    & =
    \bP\left(\bigcup_{t=0}^{T-1}\bigcup_{i=1}^{n}\bar Q_{t,i}\right) \\
    & \leq
    2\sqrt{2\pi}\, nT\exp\left(-\frac{C_0^2}{8L^2}\right).
\end{align*}
To simplify the notation, we let
\begin{align*}
    G_\lambda(x_t)
    & =
    \frac{1}{n}\sum_{i=1}^n \frac{f(x_t+\lambda u_t;\xi_i) - f(x_t-\lambda u_t;\xi_i)}{2\lambda}u_t \\
    & =
    \frac{F_S(x_t + \lambda u_t) - F_S(x_t - \lambda u_t)}{2\lambda}u_t,
\end{align*}
and its per-sample clipped version as
\begin{equation*}
    \hat G_\lambda(x_t) = \frac{1}{n}\sum_{i=1}^n \clip_C\left(\frac{f(x_t+\lambda u_t;\xi_i) - f(x_t-\lambda u_t;\xi_i)}{2\lambda}\right)u_t.
\end{equation*}
Algorithm \ref{algo:d-free} becomes $x_{t+1} = x_t - \alpha(\hat G_\lambda(x_t) + z_tu_t)$ under the above notation. By Taylor's theorem with remainder, for some $\theta\in(0,1)$, we have that
\begin{align*}
    F_S(x_{t+1})
    & =
    F_S(x_t) + \nabla F_S(x_t)^\top (x_{t+1} - x_t) + \frac{1}{2} (x_{t+1} - x_t)^\top \nabla^2 F_S(x_t+\theta(x_{t+1} - x_t)) (x_{t+1} - x_t) \\
    & \leq
    F_S(x_t) - \alpha\nabla F_S(x_t)^\top \left(\hat G_\lambda(x_t) + z_tu_t\right) + \frac{\alpha^2}{2} \hat G_\lambda(x_t)^\top H \hat G_\lambda(x_t) + \frac{\alpha^2}{2} z_t^2 u_t^\top H u_t \\
    & \qquad +
    \frac{\alpha^2}{2}z_t \left(\hat G_\lambda(x_t)^\top H u_t + u_t^\top H \hat G_\lambda(x_t)\right).
\end{align*}
Here in the inequality, we use Assumption \ref{asp:rank} such that $\nabla^2 F_S(x)\preceq H$ for any $x\in\bR^d$.
The event $Q_t$ depends on the randomness in $u_{<(t+1)}:=\{u_0, u_1, \cdots, u_t\}$ and $z_{<t}:=\{z_0, z_1, \cdots, z_{t-1}\}$. Note that the scalar noise $z_t$ sampled from $\cN(0, \sigma^2)$ is independent of $u_{<(t+1)}$, $z_{<t}$, $x_t$, and the dataset $S$. Conditioned on the event $Q_t$ and taking expectation with respect to $z_{<(t+1)}$ and $u_{<(t+1)}$, we have that
\begin{equation}
    \begin{split}
        \bE_{z_{<(t+1)}, u_{<(t+1)}}[F_S(x_{t+1}) | Q_t]
        & \leq
        \bE_{z_{<t}, u_{<t}} [F_S(x_t) | Q_t] -
        \alpha\,\bE_{z_{<t}, u_{<(t+1)}}\left[\nabla F_S(x_t)^\top \hat G_\lambda(x_t) \middle| Q_t \right] \\
        & \quad +
        \frac{\alpha^2}{2}\,\bE_{z_{<t}, u_{<(t+1)}}\left[ \hat G_\lambda(x_t)^\top H \hat G_\lambda(x_t) \middle| Q_t \right] +
        \frac{\alpha^2\sigma^2}{2}\,\bE_{z_{<t}, u_{<(t+1)}}\left[u_t^\top H u_t \middle| Q_t \right].
    \end{split}
    \label{eq:main-smooth}
\end{equation}
Let $\bE_t:=\bE_{z_{<t}, u_{<(t+1)}}$ for simplicity. Given the condition that $Q_t$ happens, we know that $\hat G_\lambda(x_t)=G_\lambda(x_t)$ and
\begin{equation*}
    \bE_t\left[\hat G_\lambda(x_t)^\top H \hat G_\lambda(x_t) \,\middle|\, Q_t\right] =
    \bE_t\left[
        \left(\frac{F_S(x_t+\lambda u_t) - F_S(x_t-\lambda u_t)}{2\lambda}\right)^2 u_t^\top H u_t
        \,\middle|\, Q_t
    \right].
\end{equation*}
Since $H\succeq0$, we have that $u_t^\top H u_t\geq 0$. By the law of total probability, we obtain
\begin{equation}
    \begin{split}
        & \phantom{=\;\;}
        \bE_t\left[
            \left(\frac{F_S(x_t+\lambda u_t) - F_S(x_t-\lambda u_t)}{2\lambda}\right)^2 u_t^\top H u_t
        \right] \\
        & =
        \bE_t\left[
            \left(\frac{F_S(x_t+\lambda u_t) - F_S(x_t-\lambda u_t)}{2\lambda}\right)^2 u_t^\top H u_t
            \,\middle|\, Q_t
        \right] \bP(Q_t) \\
        & \quad +
        \bE_t\left[
            \left(\frac{F_S(x_t+\lambda u_t) - F_S(x_t-\lambda u_t)}{2\lambda}\right)^2 u_t^\top H u_t
            \,\middle|\, \bar Q_t
        \right] \bP(\bar Q_t) \\
        & \geq
        \bE_t\left[
            \left(\frac{F_S(x_t+\lambda u_t) - F_S(x_t-\lambda u_t)}{2\lambda}\right)^2 u_t^\top H u_t
            \,\middle|\, Q_t
        \right] \bP(Q_t).
    \end{split}
    \label{eq:total-prob}
\end{equation}
Assumption \ref{asp:rank} implies $F_S(x)$ is also $\ell$-smooth. Similarly to the proof of Theorem \ref{thm:d-dependent-rank}, by \eqref{eq:upper-bound-finite-diff} and the fact that $u_t^\top Hu_t \geq 0$, applying $(iv)$ in Lemma \ref{lm:sphere} and Assumption \ref{asp:rank}, we can then obtain that
\begin{align}
    \bE_t\left[\hat G_\lambda(x_t)^\top H \hat G_\lambda(x_t) \,\middle|\, Q_t\right]
    & \leq
    \frac{\bE_t\left[
        (F_S(x_t+\lambda u_t) - F_S(x_t-\lambda u_t))^2 u_t^\top H u_t
    \right]}{4\lambda^2\cdot\bP(Q_t)} \nonumber \\
    & \leq
    \frac{\bE_t\left[
        2\left(u_t^\top\nabla F_S(x_t)\right)^2 u_t^\top H u_t
    \right]}{\bP(Q_t)} +
    \frac{\ell^2\lambda^2d^2}{2\,\bP(Q_t)}\bE_t\left[u_t^\top H u_t\right] \nonumber \\
    & =
    \frac{2d\,\bE_{z_{<t}, u_{<t}}\left[2\nabla F_S(x_t)^\top H\nabla F_S(x_t) + \norm{\nabla F_S(x_t)}^2 \tr(H)\right]}{(d+2)\,\bP(Q_t)} +
    \frac{\ell^2\lambda^2d^2\tr(H)}{2\,\bP(Q_t)} \nonumber \\
    & \leq
    \frac{2\ell(r+2)}{\bP(Q_t)}\bE_{z_{<t}, u_{<t}}\norm{\nabla F_S(x_t)}^2 + \frac{\ell^3\lambda^2d^2r}{2\,\bP(Q_t)}.
    \label{eq:grad-quad}
\end{align}
The same as \eqref{eq:total-prob}, we can also get that
\begin{equation}
    \begin{split}
        \bE_t\left[u_t^\top Hu_t \,\middle|\, Q_t \right]
        & \leq
        \frac{\bE_t\left[u_t^\top Hu_t \right]}{\bP(Q_t)} \\
        & \leq
        \frac{r\ell}{\bP(Q_t)}.
    \end{split}
    \label{eq:noise-quad}
\end{equation}
For the inner-product term, we have that
\begin{equation*}
    \bE_t\left[\nabla F_S(x_t)^\top \hat G_\lambda(x_t) \,\middle|\, Q_t \right] =
    \bE_t\left[\nabla F_S(x_t)^\top G_\lambda(x_t) \,\middle|\, Q_t \right].
\end{equation*}
By the law of total probability, since $u_t$ is independent of $x_t$, we know that
\begin{align*}
    \bE_t\left[\nabla F_S(x_t)^\top G_\lambda(x_t) \,\middle|\, Q_t \right] \bP(Q_t) + \bE_t\left[\nabla F_S(x_t)^\top G_\lambda(x_t) \,\middle|\, \bar Q_t \right] \bP(\bar Q_t)
    & =
    \bE_t\left[\nabla F_S(x_t)^\top G_\lambda(x_t) \right] \\
    & =
    \bE_{z_{<t}, u_{<t}}\left[\nabla F_S(x_t)^\top \nabla F_\lambda(x_t) \right],
\end{align*}
where we use Lemma \ref{lm:zero-grad} for $F_\lambda(x)=\bE_v[F_S(x+\lambda v)]$ with $v$ uniformly sampled from $\sqrt{d}\,\bB^d$. Rearranging terms, we thus obtain that
\begin{align*}
    \bE_t\left[\nabla F_S(x_t)^\top G_\lambda(x_t) \,\middle|\, Q_t \right]
    & =
    \frac{\bE_{z_{<t}, u_{<t}}\left[\nabla F_S(x_t)^\top \nabla F_\lambda(x_t) \right]}{\bP(Q_t)} - \frac{\bE_t\left[\nabla F_S(x_t)^\top G_\lambda(x_t) \,\middle|\, \bar Q_t \right] \bP(\bar Q_t)}{\bP(Q_t)} \\
    & =
    \frac{\bE_{z_{<t}, u_{<t}}\norm{\nabla F_S(x_t)}^2}{2\,\bP(Q_t)} + \frac{\bE_{z_{<t}, u_{<t}}\norm{\nabla F_\lambda(x_t)}^2}{2\,\bP(Q_t)} - \frac{\bE_{z_{<t}, u_{<t}}\norm{\nabla F_S(x_t) - \nabla F_\lambda(x_t)}^2}{2\,\bP(Q_t)} \\
    & \quad -
    \frac{\bE_t\left[\nabla F_S(x_t)^\top G_\lambda(x_t) \,\middle|\, \bar Q_t \right] \bP(\bar Q_t)}{\bP(Q_t)} \\
    & \geq
    \frac{\bE_{z_{<t}, u_{<t}}\norm{\nabla F_S(x_t)}^2}{2\,\bP(Q_t)} - \frac{\ell^2\lambda^2d^3}{8\,\bP(Q_t)} - \frac{\bE_t\left[\nabla F_S(x_t)^\top G_\lambda(x_t) \,\middle|\, \bar Q_t \right] \bP(\bar Q_t)}{\bP(Q_t)},
\end{align*}
where we apply $(ii)$ in Lemma \ref{lm:zero-grad}. Assumption \ref{asp:rank} implies that $F_S(x)$ is also Lipschitz, and thus
\begin{align*}
    \nabla F_S(x_t)^\top G_\lambda(x_t)
    & \leq
    \norm{\nabla F_S(x_t)}\norm{G_\lambda(x_t)} \\
    & \leq
    L^2\norm{u_t}^2 \\
    & =
    L^2d.
\end{align*}
As a result, we obtain that
\begin{equation}
    \bE_t\left[\nabla F_S(x_t)^\top \hat G_\lambda(x_t) \,\middle|\, Q_t \right] \geq \frac{\bE_{z_{<t}, u_{<t}}\norm{\nabla F_S(x_t)}^2}{2\,\bP(Q_t)} - \frac{\ell^2\lambda^2d^3}{8\,\bP(Q_t)} - \frac{L^2d\;\bP(\bar Q_t)}{\bP(Q_t)}.
    \label{eq:inner-product}
\end{equation}
Plugging \eqref{eq:inner-product}, \eqref{eq:grad-quad} and \eqref{eq:noise-quad} back into \eqref{eq:main-smooth}, we obtain that
\begin{align}
    \bE_{z_{<(t+1)}, u_{<(t+1)}}[F_S(x_{t+1})|Q_t]
    & \leq
    \bE_{z_{<t}, u_{<t}}[F_S(x_t)|Q_t] - \frac{\alpha}{2}(1 - 2(r+2)\ell\alpha)\frac{\bE_{z_{<t}, u_{<t}}\norm{\nabla F_S(x_t)}^2}{\bP(Q_t)} + \frac{\ell\alpha^2\, r\sigma^2}{2\,\bP(Q_t)} \nonumber \\ 
    & \quad +
    \frac{\ell^2\alpha(d+2\ell\alpha r)\lambda^2d^2}{8\,\bP(Q_t)} + \frac{\alpha L^2d\;\bP(\bar Q_t)}{\bP(Q_t)}.
    \label{eq:onestep3}
\end{align}
Choosing $\alpha=1/(4\ell(r+2))$ such that $1-2(r+2)\ell\alpha=1/2$ and $2\ell\alpha r<1\leq d$, we have that
\begin{align*}
    \bE_{z_{<t}, u_{<t}}\norm{\nabla F_S(x_t)}^2
    & \leq
    \frac{4\,\bE_{z_{<(t+1)}, u_{<(t+1)}}[F_S(x_t) - F_S(x_{t+1}) | Q_t]\bP(Q_t)}{\alpha} + 2\ell\alpha\, r\sigma^2 + \ell^2d^3\lambda^2 + 4L^2d\;\bP(\bar Q_t) \\
    & \leq
    \frac{4\,\bE_{z_{<(t+1)}, u_{<(t+1)}}[F_S(x_t) - F_S(x_{t+1}) | Q_t]\bP(Q_t)}{\alpha} + 2\ell\alpha\, r\sigma^2 + \ell^2d^3\lambda^2 + 4L^2d\;\bP(\bar Q).
\end{align*}
Recall $Q_t$ is the event that clipping does not happen at iteration $t$ and $Q$ is the event that clipping does not happen for every iteration. By the law of total probability and the assumption that $\abs{F_S(x_t)}\leq B$ for every $t$, we have that
\begin{align*}
    \bE_{z_{<(t+1)}, u_{<(t+1)}}[F_S(x_t) - F_S(x_{t+1}) | Q_t]\bP(Q_t)
    & =
    \bE_{z_{<T}, u_{<T}}[F_S(x_t) - F_S(x_{t+1}) | Q_t]\bP(Q_t) \\
    & =
    \bE_{z_{<T}, u_{<T}}\Big[F_S(x_t) - F_S(x_{t+1}) \Big| Q_t \cap Q\Big]\bP(Q_t \cap Q) \\
    & \quad +
    \bE_{z_{<T}, u_{<T}}\Big[F_S(x_t) - F_S(x_{t+1}) \Big| Q_t \cap \bar Q\Big]\bP(Q_t \cap \bar Q) \\
    & \leq
    \bE_{z_{<T}, u_{<T}}[F_S(x_t) - F_S(x_{t+1}) | Q]\bP(Q) + 2B \; \bP(\bar Q).
\end{align*}
As a result, we have that
\begin{equation}
    \bE_{z_{<t}, u_{<t}}\norm{\nabla F_S(x_t)}^2 \leq
    \frac{4\,\bE_{z_{<T}, u_{<T}}[F_S(x_t) - F_S(x_{t+1}) | Q]\bP(Q)}{\alpha} + 2\ell\alpha\, r\sigma^2 + \ell^2d^3\lambda^2 + \left(4L^2d + \frac{8B}{\alpha}\right)\;\bP(\bar Q).
    \label{eq:onestep3b}
\end{equation}
Taking expectation with respect to all randomness, i.e., $\bE=\bE_{z_{<T}, u_{<T}}$, summing up from $t=0$ to $T-1$, and dividing both sides by $T$, we have that
\begin{align*}
    \bE\norm{\nabla F_S(x_\tau)}^2
    & =
    \frac{1}{T}\sum_{t=0}^{T-1} \bE_{z_{<t}, u_{<t}}\norm{\nabla F_S(x_t)}^2 \\
    & \leq
    \frac{4\,\bE[F_S(x_0) - F_S(x_T) | Q] \bP(Q)}{\alpha T} + \frac{64\ell C^2\, \alpha T\, r\log(e+ (\eps/\delta))}{n^2\eps^2} + \ell^2d^3\lambda^2 \\
    & \quad +
    8\sqrt{2\pi} \,n\,T (L^2 d + 8 \ell B(r+2))\exp\left(-\frac{C_0^2}{8L^2}\right) \\
    & \leq
    \Big(64\ell[F_S(x_0) - F_S^*]+4C^2\Big)\frac{\sqrt{r\log(e+(\eps/\delta))}}{n\eps} + \ell^2d^3\lambda^2 \\
    & \quad +
    \frac{2\sqrt{2\pi}\, n^2\eps(r+2)(L^2 d + 8\ell B(r+2))}{\sqrt{r\log(e+(\eps/\delta))}}\exp\left(-\frac{C_0^2}{8L^2}\right),
\end{align*}
with the choice of parameters to be
\begin{equation*}
    \alpha T=\frac{n\eps}{16\ell\sqrt{r\log(e+(\eps/\delta))}}, \quad \alpha =\frac{1}{4\ell(r+2)}, \quad
    T=\frac{n(r+2)\eps}{4\sqrt{r\log(e+(\eps/\delta))}}.
\end{equation*}
When selecting $\lambda \leq 2(\sqrt{2}-1)C_0/(\ell d)$, we can set $C=\sqrt{2}C_0$ such that $C\geq C_0+\ell\lambda d/2$ is satisfied. If $C_0$ and $\lambda$ further satisfy the conditions that
\begin{equation*}
    C_0^2=8L^2\,\log\left(\frac{2\sqrt{2\pi}\, n^3\eps^2 (r+2)(d + 8\ell B(r+2)/L^2)}{r\log(e+(\eps/\delta))}\right), \quad
    \lambda \leq \frac{L}{\ell d^{3/2}}\left(\frac{\sqrt{r\log(e+(\eps/\delta))}}{n\eps}\right)^{1/2},
\end{equation*}
we can then obtain that
\begin{align*}
    &
    \bE\norm{\nabla F_S(x_\tau)}^2 \\
    & \quad \leq
    \Big(64\,\ell[F_S(x_0) - F_S^*]+4C^2+2L^2\Big)
    \frac{\sqrt{r\log(e+(\eps/\delta))}}{n\eps} \\
    & \quad = \left(
        64\,\ell[F_S(x_0) - F_S^*] + 64\,L^2\,\log\left(\frac{2\sqrt{2\pi}\, n^3\eps^2(r+2)(d+8\ell B(r+2)/L^2)}{r\log(e+(\eps/\delta))}\right) + 2L^2
    \right)
    \frac{\sqrt{r\log(e+(\eps/\delta))}}{n\eps}.
\end{align*}
We conclude that the clipping threshold $C$ and smoothing parameter $\lambda$ should satisfy that
\begin{align*}
    & C=4L\sqrt{\log\left(\frac{2\sqrt{2\pi}\, n^3\eps^2 (r+2)(d + 8\ell B(r+2)/L^2)}{r\log(e+(\eps/\delta))}\right)}, \\
    & \lambda \leq \frac{L}{\ell d}\min\left\{
        4(2-\sqrt{2})\sqrt{\log\left(\frac{2\sqrt{2\pi}\, n^3\eps^2 (r+2)(d + 8\ell B(r+2)/L^2)}{r\log(e+(\eps/\delta))}\right)},
        \frac{1}{\sqrt{d}}\left(\frac{\sqrt{r\log(e+(\eps/\delta))}}{n\eps}\right)^{1/2}
    \right\}.
\end{align*}
The total number of zeroth-order gradient computations is $nT=n^2(r+2)\eps/(4\sqrt{r\log(e+(\eps/\delta))})$.

\section{Extension to the PL Setting}
\label{app:pl}

\begin{assumption}
    The average loss $F_S(x)$ satisfies the PL inequality with parameter $\mu>0$. That is, it holds that $\forall x\in\bR^d$,
    \begin{equation*}
        \norm{\nabla F_S(x)}^2 \geq 2\mu (F_S(x) - F_S^*).
    \end{equation*}
    \label{asp:pl}
\end{assumption}

\begin{corollary}
    Under the same setting of Theorem \ref{thm:d-dependent}, when Assumption \ref{asp:pl} is also met, let $\kappa=\ell/\mu$ be the condition number, the last iterate of Algorithm \ref{algo:d-dependent} satisfies that
    \begin{equation*}
        \bE[F_S(x_T) - F_S^*] \leq \left(\ell(F_S(x_0) - F_S^*) + 64L^2\kappa\, \log\left(\frac{n^2\eps^2}{\kappa\, d^3\log(e+(\eps/\delta))}\right) + 2L^2\right) \frac{d^3\log(e+(\eps/\delta))}{\mu n^2\eps^2},
    \end{equation*}
    with the choice of parameters
    \begin{equation*}
        \alpha=\frac{1}{4\ell d}, \quad
        T=8\,\kappa\, d\,\log\left(\frac{n^2\eps^2}{\kappa\,d^3\log(e+(\eps/\delta))}\right), \quad
        \lambda\leq\frac{2L}{\ell}\,\frac{\sqrt{\log(e+(\eps/\delta))}}{n\eps}, \quad
        C=Ld.
    \end{equation*}
    The total number of zeroth-order gradient computations is $nT=\tilde\cO(nd\kappa)$.
    \label{cor:pl-d}
\end{corollary}

\begin{proof}
    Starting from \eqref{eq:onestep1} in the proof of Theorem \ref{thm:d-dependent}, with the choice that $\alpha=1/(4\ell d)$, we have that
    \begin{align*}
        \bE_{z_t, u_t}[F_S(x_{t+1})]
        & \leq
        F_S(x_t) - \frac{\alpha}{4}\norm{F_S(x_t)}^2 + \frac{\alpha}{4}\, \ell^2\lambda^2 d^3 + \frac{\ell}{2}\alpha^2 \, d\sigma^2 \\
        & \leq
        F_S(x_t) - \frac{\mu\alpha}{2}(F_S(x_t) - F_S^*) + \frac{\alpha}{4}\, \ell^2\lambda^2 d^3 + \frac{\ell}{2}\alpha^2 \, d\sigma^2.
    \end{align*}
    This gives the recursion that
    \begin{equation*}
        \bE[F_S(x_{t+1}) - F_S^*] \leq \left(1 - \frac{\mu\alpha}{2}\right) \bE[F_S(x_t) - F_S^*] + \frac{\alpha}{4}\, \ell^2\lambda^2 d^3 + \frac{\ell}{2}\alpha^2 \, d\sigma^2.
    \end{equation*}
    Resolving the recursion, we obtain that
    \begin{align*}
        \bE[F_S(x_T) - F_S^*]
        & \leq
        \left(1 - \frac{\mu\alpha}{2}\right)^T (F_S(x_0) - F_S^*) + \left(\frac{\alpha}{4}\, \ell^2\lambda^2 d^3 + \frac{\ell}{2}\alpha^2 \, d\sigma^2\right)\left(\left(1 - \frac{\mu\alpha}{2}\right)^{T-1} + \cdots + \left(1 - \frac{\mu\alpha}{2}\right) + 1\right) \\
        & \leq
        \exp\left(-\frac{\mu\alpha T}{2}\right)(F_S(x_0) - F_S^*) + \frac{\ell^2\lambda^2 d^3}{2\mu} + \frac{\ell\alpha \, d\sigma^2}{\mu} \\
        & =
        \exp\left(-\frac{\mu\alpha T}{2}\right)(F_S(x_0) - F_S^*) + \frac{32\ell L^2\, \alpha T\, d^3\log(e+ (\eps/\delta))}{\mu n^2\eps^2} + \frac{\ell^2\lambda^2 d^3}{2\mu} \\
        & =
        \left(\ell(F_S(x_0) - F_S^*) + 64L^2\kappa\,\log\left(\frac{n^2\eps^2}{\kappa\, d^3\log(e+(\eps/\delta))}\right) + 2L^2\right) \frac{d^3\log(e+(\eps/\delta))}{\mu n^2\eps^2},
    \end{align*}
    with the choice of parameters
    \begin{equation*}
        \alpha T = \frac{2}{\mu} \log\left(\frac{n^2\eps^2}{\kappa\, d^3\log(e+(\eps/\delta))}\right), \quad
        \lambda \leq \frac{2L}{\ell}\,\frac{\sqrt{\log(e+(\eps/\delta))}}{n\eps}.
    \end{equation*}
    The total number of iteration is $T=\tilde\cO(\kappa d)$.
\end{proof}

\begin{corollary}
    Under the same setting of Theorem \ref{thm:d-dependent-rank}, when Assumption \ref{asp:pl} is also met, let $\kappa=\ell/\mu$ be the condition number, the last iterate of Algorithm \ref{algo:d-dependent} satisfies that
    \begin{equation*}
        \bE[F_S(x_T) - F_S^*] \leq \left(\ell(F_S(x_0) - F_S^*) + 64L^2\kappa\, \log\left(\frac{n^2\eps^2}{\kappa\, rd^2\log(e+(\eps/\delta))}\right) + 2L^2\right) \frac{rd^2\log(e+(\eps/\delta))}{\mu n^2\eps^2},
    \end{equation*}
    with the choice of parameters
    \begin{equation*}
        \alpha=\frac{1}{4\ell (r+2)}, \quad
        T=8\,\kappa\, (r+2)\,\log\left(\frac{n^2\eps^2}{\kappa\,rd^2\log(e+(\eps/\delta))}\right), \quad
        \lambda\leq\frac{2L}{\ell\sqrt{d}}\,\frac{\sqrt{r\log(e+(\eps/\delta))}}{n\eps}, \quad
        C=Ld.
    \end{equation*}
    The total number of zeroth-order gradient computations is $nT=\tilde\cO(nr\kappa)$.
    \label{cor:pl-d-rank}
\end{corollary}

\begin{proof}
    Starting from \eqref{eq:onestep2} in the proof of Theorem \ref{thm:d-dependent-rank}, with the choice that $\alpha=1/(4\ell (r+2))$, we have that
    \begin{align*}
        \bE_{z_t, u_t}[F_S(x_{t+1})]
        & \leq
        F_S(x_t) - \frac{\alpha}{4}\norm{F_S(x_t)}^2 + \frac{\alpha}{4}\, \ell^2\lambda^2 d^3 + \frac{\ell}{2}\alpha^2 \, r\sigma^2 \\
        & \leq
        F_S(x_t) - \frac{\mu\alpha}{2}(F_S(x_t) - F_S^*) + \frac{\alpha}{4}\, \ell^2\lambda^2 d^3 + \frac{\ell}{2}\alpha^2 \, r\sigma^2.
    \end{align*}
    This gives the recursion that
    \begin{equation*}
        \bE[F_S(x_{t+1}) - F_S^*] \leq \left(1 - \frac{\mu\alpha}{2}\right) \bE[F_S(x_t) - F_S^*] + \frac{\alpha}{4}\, \ell^2\lambda^2 d^3 + \frac{\ell}{2}\alpha^2 \, r\sigma^2.
    \end{equation*}
    Resolving the recursion, we obtain that
    \begin{align*}
        \bE[F_S(x_T) - F_S^*]
        & \leq
        \left(1 - \frac{\mu\alpha}{2}\right)^T (F_S(x_0) - F_S^*) + \left(\frac{\alpha}{4}\, \ell^2\lambda^2 d^3 + \frac{\ell}{2}\alpha^2 \, r\sigma^2\right)\left(\left(1 - \frac{\mu\alpha}{2}\right)^{T-1} + \cdots + \left(1 - \frac{\mu\alpha}{2}\right) + 1\right) \\
        & \leq
        \exp\left(-\frac{\mu\alpha T}{2}\right)(F_S(x_0) - F_S^*) + \frac{\ell^2\lambda^2 d^3}{2\mu} + \frac{\ell\alpha \, r\sigma^2}{\mu} \\
        & =
        \exp\left(-\frac{\mu\alpha T}{2}\right)(F_S(x_0) - F_S^*) + \frac{32\ell L^2\, \alpha T\, rd^2\log(e+ (\eps/\delta))}{\mu n^2\eps^2} + \frac{\ell^2\lambda^2 d^3}{2\mu} \\
        & =
        \left(\ell(F_S(x_0) - F_S^*) + 64L^2\kappa\,\log\left(\frac{n^2\eps^2}{\kappa\, rd^2\log(e+(\eps/\delta))}\right) + 2L^2\right) \frac{rd^2\log(e+(\eps/\delta))}{\mu n^2\eps^2},
    \end{align*}
    with the choice of parameters
    \begin{equation*}
        \alpha T = \frac{2}{\mu} \log\left(\frac{n^2\eps^2}{\kappa\, rd^2\log(e+(\eps/\delta))}\right), \quad
        \lambda \leq \frac{2L}{\ell\sqrt{d}}\,\frac{\sqrt{r\log(e+(\eps/\delta))}}{n\eps}.
    \end{equation*}
    The total number of iteration is $T=\tilde\cO(\kappa r)$.
\end{proof}

\begin{corollary}
    Under the same setting of Theorem \ref{thm:d-free}, when Assumption \ref{asp:pl} is also met, let $\kappa=\ell/\mu$ be the condition number, suppose $\max_{0\leq t\leq T} \abs{F_S(x_t)}\leq B$ and $\abs{F_S^*}\leq B$, the last iterate of Algorithm \ref{algo:d-free} satisfies that
    \begin{equation*}
        \bE[F_S(x_T) - F_S^*] \leq
        \left(\ell(F_S(x_0) - F_S^*) + \log\left(\frac{n^2\eps^2}{\kappa\,r\log(e+(\eps/\delta))}\right)\left(L^2 + 16\tilde L^2\kappa\right) + 2L^2\right) \frac{r\log(e+(\eps/\delta))}{\mu n^2\eps^2},
    \end{equation*}
    where we define
    \begin{equation*}
        \tilde L^2 = 64L^2\log\left(\frac{32\sqrt{2\pi}\,\kappa\,n^3\eps^2(r+2)(d+(8\ell (r+2)+\mu)B/L^2)}{r\log(e+(\eps/\delta))}\right),
    \end{equation*}
    and choose the parameters to be
    \begin{align*}
        & \alpha=\frac{1}{4\ell (r+2)}, \quad
        T=8\,\kappa\, (r+2)\,\log\left(\frac{n^2\eps^2}{\kappa\,r\log(e+(\eps/\delta))}\right), \quad C = \frac{\tilde L}{2}, \\
        & \lambda\leq\frac{1}{2\ell d}\,\min\left\{(2-\sqrt{2})\tilde L,\; \frac{4L}{\sqrt{d}}\frac{\sqrt{r\log(e+(\eps/\delta))}}{n\eps}\right\}.
    \end{align*}
    The total number of zeroth-order gradient computations is $nT=\tilde\cO(nr\kappa)$.
    \label{cor:pl-free}
\end{corollary}

\begin{remark}
    A more precise expression of our theoretical results, including Theorems \ref{thm:d-dependent}, \ref{thm:d-dependent-rank}, and \ref{thm:d-free} and their corresponding Corollaries \ref{cor:pl-d}, \ref{cor:pl-d-rank}, and \ref{cor:pl-free}, is to cover cases where $T$ may be less than 1. Considering Theorem \ref{thm:d-free} as an example, a more accurate statement is
   \begin{equation*}
       T = \max\left\{\frac{n(r+2)\eps}{4\sqrt{r\log(e+(\eps/\delta))}}, 1\right\}, \quad
       \bE[\norm{\nabla F_S(x_\tau)}^2] \leq \min\left\{\tilde\cO\left(\frac{\sqrt{r\log(e+(\eps/\delta))}}{n\eps}\right), L^2\right\}.
   \end{equation*}
   For the sake of clarity and simplicity in presentation, this detail is omitted in the main results.
   \label{rmk:bound}
\end{remark}

\begin{proof}
    Starting from \eqref{eq:onestep3b} in the proof of Theorem \ref{thm:d-free} with the choice $\alpha=1/(4\ell (r+2))$ and using Assumption \ref{asp:pl} such that
    \begin{align*}
        \bE\norm{\nabla F_S(x_t)}^2
        & \geq
        2\mu\, \bE[F_S(x_t) - F_S^*] \\
        & =
        2\mu\, \bE[F_S(x_t) - F_S^* | Q] \, \bP(Q) + 2\mu \, \bE[F_S(x_t) - F_S^* | \bar Q] \,\bP(\bar Q) \\
        & \geq
        2\mu\, \bE[F_S(x_t) - F_S^* | Q] \, \bP(Q),
    \end{align*}
    we have the recursion that
    \begin{equation*}
        \bE[F_S(x_{t+1}) - F_S^* | Q] \bP(Q) \leq \left(1 - \frac{\mu\alpha}{2}\right) \bE[F_S(x_t) - F_S^* | Q] \bP(Q) + \frac{\alpha}{4}\, \ell^2\lambda^2 d^3 + \frac{\ell}{2}\alpha^2\, r\sigma^2 +  (L^2d\alpha + 2B)\bP(\bar Q).
    \end{equation*}
    Resolving the recursion, we obtain that
    \begin{align*}
        \bE[F_S(x_T) - F_S^* | Q] \bP(Q)
        & \leq
        \left(1 - \frac{\mu\alpha}{2}\right)^T (F_S(x_0) - F_S^*) + \frac{2}{\mu\alpha}\left(\frac{\alpha}{4}\, \ell^2\lambda^2 d^3 + \frac{\ell}{2}\alpha^2\, r\sigma^2 +  (L^2d\alpha + 2B)\bP(\bar Q)\right) \\
        & \leq
        \exp\left(-\frac{\mu\alpha T}{2}\right)(F_S(x_0) - F_S^*) + \frac{\ell^2\lambda^2 d^3}{2\mu} + \frac{\ell\alpha \, r\sigma^2}{\mu} + \frac{(2L^2d + 4B/\alpha)\bP(\bar Q)}{\mu}.
    \end{align*}
    Since the event $Q$ happens with high probability, the above results can be refined to
    \begin{align*}
        \bE[F_S(x_T) - F_S^*]
        & =
        \bE[F_S(x_T) - F_S^* | Q]\bP(Q) + \bE[F_S(x_T) - F_S^* | \bar Q]\bP(\bar Q) \\
        & \leq
        \bE[F_S(x_T) - F_S^* | Q]\bP(Q) + 2B \;\bP(\bar Q).
    \end{align*}
    Therefore, we can obtain that
    \begin{align*}
        \bE[F_S(x_T) - F_S^*]
        & \leq
        \exp\left(-\frac{\mu\alpha T}{2}\right)(F_S(x_0) - F_S^*) + \frac{32\ell C^2\, \alpha T\, r\log(e+ (\eps/\delta))}{\mu n^2\eps^2} \\
        & \quad +
        \frac{4\sqrt{2\pi} nT(L^2d + 2B/\alpha + B\mu)}{\mu}\exp\left(-\frac{C_0^2}{8L^2}\right) + \frac{\ell^2\lambda^2 d^3}{2\mu} \\
        & =
        \left(\ell(F_S(x_0) - F_S^*) + L^2\,\log\left(\frac{n^2\eps^2}{\kappa\,r\log(e+(\eps/\delta))}\right)\right) \frac{r\log(e+(\eps/\delta))}{\mu n^2\eps^2} \\
        & \quad +
        \frac{32\ell C^2\, \alpha T\, r\log(e+ (\eps/\delta))}{\mu n^2\eps^2} + \frac{\ell^2\lambda^2 d^3}{2\mu},
    \end{align*}
    with the choice of parameters
    \begin{equation*}
        \alpha T = \frac{2}{\mu} \log\left(\frac{n^2\eps^2}{\kappa\, r\log(e+(\eps/\delta))}\right), \quad
        C_0^2 = 8L^2\,\log\left(\frac{32\sqrt{2\pi}\,\kappa\,n^3\eps^2(r+2)(d + (8\ell (r+2) + \mu)B/L^2)}{r\log(e+(\eps/\delta))}\right).
    \end{equation*}
    When selecting $\lambda$ to be
    \begin{equation*}
        \lambda\leq\min\left\{\frac{2(\sqrt{2} - 1)C_0}{\ell d},\; \frac{2L}{\ell d^{3/2}}\frac{\sqrt{r\log(e+(\eps/\delta))}}{n\eps}\right\},
    \end{equation*}
    we can set $C=\sqrt{2}C_0$ such that $C\geq C_0 + \ell\lambda d/2$ is satisfied, and thus
    \begin{equation*}
        \bE[F_S(x_T) - F_S^*] \leq
        \left(\ell(F_S(x_0) - F_S^*) + \log\left(\frac{n^2\eps^2}{\kappa\,r\log(e+(\eps/\delta))}\right)\left(L^2 + 16\tilde L^2\kappa\right) + 2L^2\right) \frac{r\log(e+(\eps/\delta))}{\mu n^2\eps^2},
    \end{equation*}
    where we define
    \begin{equation*}
        \tilde L^2 = 64L^2\log\left(\frac{32\sqrt{2\pi}\,\kappa\,n^3\eps^2(r+2)(d+(8\ell (r+2)+\mu)B/L^2)}{r\log(e+(\eps/\delta))}\right).
    \end{equation*}
    The total number of iteration is $T=\tilde\cO(\kappa r)$.
\end{proof}

\end{document}